\newtheorem{theorem}{Theorem}[section]
\newtheorem{lemma}[theorem]{Lemma}
\begin{document}

%

%
\runningauthor{Jinhang Zuo, Xutong Liu, Carlee Joe-Wong, John C.S. Lui, Wei Chen}

\twocolumn[
\aistatstitle{Online Competitive Influence Maximization}

\aistatsauthor{ Jinhang Zuo \And Xutong Liu \And Carlee Joe-Wong}
\aistatsaddress{ Carnegie Mellon University \And  The Chinese University of Hong Kong \And Carnegie Mellon University} 
\aistatsauthor{John C.S. Lui \And Wei Chen}
\aistatsaddress{The Chinese University of Hong Kong \And Microsoft Research} 
]

\begin{abstract}
Online influence maximization has attracted much attention as a way to maximize influence spread through a social network while learning the values of unknown network parameters. Most previous works focus on single-item diffusion. In this paper, we introduce a new Online Competitive Influence Maximization (OCIM) problem, where two competing items (e.g., products, news stories) propagate in the same network and influence probabilities on edges are unknown. We adopt a combinatorial multi-armed bandit (CMAB) framework for OCIM, but unlike the non-competitive setting, the important monotonicity property (influence spread increases when influence probabilities on edges increase) no longer holds due to the competitive nature of propagation, which brings a significant new challenge to the problem. We provide a nontrivial proof showing that the Triggering Probability Modulated (TPM) condition for CMAB still holds in OCIM, which is instrumental for our proposed algorithms OCIM-TS and OCIM-OFU to achieve sublinear Bayesian and frequentist regret, respectively. We also design an OCIM-ETC algorithm that requires less feedback and easier offline computation, at the expense of a worse frequentist regret bound. Experimental evaluations demonstrate the effectiveness of our algorithms.
\end{abstract}

\section{Introduction}\label{sec:intro}
Influence maximization, motivated by viral marketing applications, has been extensively studied since \cite{kempe2003maximizing} formally defined it as a stochastic optimization problem: given a social network $G$ and a budget $k$, how should a set of $k$ seed nodes in $G$ be chosen such that the expected number of final activated nodes under a given diffusion model is maximized? They proposed the well-known Independent Cascade (IC) and Linear Threshold (LT) diffusion models, and gave a greedy algorithm that outputs a $(1-1/e-\epsilon)$-approximate solution for any $\epsilon>0$.
However, they only considered a single item (e.g., product, idea) propagating in the network. In reality, different items could propagate concurrently in the same network, interfering with each other and leading to competition during propagation. 
Several competitive diffusion models~\citep{carnes2007maximizing,bharathi2007competitive, budak2011limiting,he2012influence,ivanov2017content} have been proposed for this setting. We use a Competitive Independent Cascade (CIC) model \citep{chen2013information}, which extends the classical IC model to multi-item influence diffusion. 
We consider the competitive influence maximization problem between two items from the ``follower's perspective'': 
given the seed nodes of the competitor's item, the follower's item chooses a set of nodes so as to maximize the expected number of nodes activated by the follower's item, referred to as the influence spread of the item. 

\begin{table*}[t]
	\caption{Summary of the proposed algorithms.}	\resizebox{2\columnwidth}{!}{
	\label{tab:algorithms}
	\begin{tabular}{ccccc}
		\toprule
		\textbf{Algorithm}& \textbf{No Prior?}& \textbf{Offline computation} & \textbf{Feedback} & \textbf{Regret}\\
		\midrule
		\textit{OCIM-TS} & $\times$ & Standard & Full propagation & Bayes. $O(\sqrt{T\ln T})$\\
		\textit{OCIM-OFU} & \checkmark & Hard & Full propagation & Freq. $O(\sqrt{T\ln T})$\\
		\textit{OCIM-ETC} & \checkmark & Standard & Direct out-edges & Freq.  $O(T^{\frac{2}{3}} (\ln T )^{\frac{1}{3}})$ \\
		\bottomrule
	\end{tabular}
}
\end{table*}

We refer to the above problem as ``offline'' competitive influence maximization, since the influence probabilities on edges, i.e., the probabilities of an item’s propagation along edges, are known in advance. 
It can be solved by a greedy algorithm due to submodularity~\citep{chen2013information}. 
However, in many real-world applications, the influence probabilities on edges are unknown. 
We study the competitive influence maximization in this setting, and call it the Online Competitive Influence Maximization (OCIM) problem. 
In OCIM, the influence probabilities on edges need to be learned through repeated influence maximization trials: in each round, given the seed nodes of the competitor, we (i)  choose $k$ seed nodes; (ii) observe the resulting diffusion that follows the CIC model to update our knowledge of the edge probabilities; and (iii) obtain a reward, which is the total number of nodes activated by our item. Our goal is to choose the seed nodes in each round based on previous observations so as to maximize the cumulative reward. 

Most previous studies on the online non-competitive influence maximization problem use a combinatorial multi-armed bandit (CMAB) framework~\citep{chen2016combinatorial,wen2017online}, an extension of the classical multi-armed bandit problem that captures the tradeoff between exploration and exploitation in sequential decision making. 
In CMAB, a player chooses a combinatorial action to play in each round, observes a set of arms triggered by this action and receives a reward. 
The player aims to maximize her cumulative reward over multiple rounds, navigating a tradeoff between exploring unknown actions/arms and exploiting the best known action. 
CMAB algorithms must also deal with an exponential number of possible combinatorial actions, which makes exploring all actions infeasible. 

{\bf Our Contributions.} To the best of our knowledge, we are the first to study the online competitive influence maximization problem. We introduce a general contextual combinatorial multi-armed bandit framework with probabilistically triggered arms (C$^2$MAB-T) for OCIM.
Within this framework, OCIM presents a new challenge: the key monotonicity property (influence spread increases when influence probabilities on edges increase) no longer holds due to the competitive nature of propagation, and thus upper confidence bound (UCB) based algorithms~\citep{chen2016combinatorial,wen2017online} cannot be directly applied to OCIM. Such non-monotonicity also complicates the analysis of the important Triggering Probability Modulated (TPM) condition for CMAB \citep{wang2017improving}, and we provide a non-trivial new proof to show it still holds for OCIM. 
We are the first to identify the OCIM problem as a natural CMAB problem without monotonicity and tackle it from three directions, providing three solutions with different tradeoffs, as shown in Table~\ref{tab:algorithms}: OCIM-TS uses standard offline oracles to achieve good Bayesian regret, but requires prior knowledge of edge probabilities; OCIM-OFU has a stronger frequentist regret bound without prior knowledge, but requires harder offline computation; and OCIM-ETC uses standard offline oracles and fewer observations, but leads to a worse frequentist regret bound. None is a perfect solution for 
OCIM, but we believe their tradeoffs shed light on the challenges involved in solving OCIM and even general CMAB problems without monotonicity. Our regret analysis of OCIM-TS delicately combines the key property of Thompson Sampling (TS) with the TPM condition to tackle non-monotonicity and allows any benchmark (exact, approximate, or even heuristic) oracle; our analysis of OCIM-OFU and OCIM-ETC extends the analysis for CMAB to a new contextual setting (C$^2$MAB-T) where the contexts are defined as the feasible sets of super arms and are not bonded with base arms.
We also discuss the extension of our framework to settings with more complex competitor actions.
Experiments on two real-world datasets demonstrate the effectiveness of our proposed algorithms. Due to the space constraint, we discuss important insights of our proofs and move the complete proofs as well as the results for the general C$^2$MAB-T problem to the Appendix.

{\bf Related Work.} 
\cite{kempe2003maximizing} formally defined the influence maximization problem in their seminal work. 
Since then, the problem has been extensively studied~\citep{LiFWT18}. 
\cite{borgs2014maximizing} presented a breakthrough approximation algorithm that runs in near-linear time, which was improved by 
a series of algorithms~\citep{tang2015influence,NguyenTD16,TangTXY18}.
A number of studies~\citep{carnes2007maximizing,bharathi2007competitive,budak2011limiting,he2012influence,lin2015analyzing,ivanov2017content} addressed competitive influence maximization problems where multiple competing sources propagate in the same network. \cite{carnes2007maximizing} proposed the distance-based and wave propagation models, and considered the influence maximization problem from the follower’s perspective. \cite{bharathi2007competitive} considered the CIC model and gave an algorithm for computing the best response to an opponent’s strategy.

When the influence probabilities of edges are unknown, the non-competitive online influence maximization problem has been extensively studied~\citep{chen2016combinatorial,wang2017improving,wen2017online,IMFB2019,vaswani2017model,perrault2020budgeted}.
\cite{chen2016combinatorial} studied the problem under the IC model and proposed a general CMAB framework. We introduce a new contextual extension of CMAB, called C$^2$MAB-T, different from the contextual CMAB studied by~\citet{chen2018contextual} and \citet{qin2014contextual}: they consider the context features of all base arms and assume the action space of super arms is a subset of all base arms, while we consider the feasible set of super arms as the context, which is more flexible than a subset of all base arms.
\cite{wang2017improving} introduced a triggering probability modulated (TPM) bounded smoothness condition to remove an undesired factor in the regret bound of~\citet{chen2016combinatorial}. 
\cite{perrault2020budgeted} introduced a budgeted online influence maximization framework, where marketers optimize their seed sets under a budget rather than a cardinality constraint.
Our OCIM-TS algorithm is similar to the Combinatorial Thompson Sampling (CTS) algorithm of \citet{wang2018thompson}. However, CTS requires an exact oracle and has frequentist regret bound, while OCIM-TS allows any benchmark oracle and has Bayesian regret bound. \cite{huyuk2020thompson} studied the Bayesian regret of CTS for CMAB, but they also require an exact oracle and a monotonicity assumption that does not hold for OCIM.
Our Bayesian regret analysis is also different from that of \citet{russo2016information}: they only study a simple special CMAB problem, while we provide the regret bound for general C$^2$MAB-T instances, including the OCIM problem.

\section{OCIM Formulation}\label{sec:model}
In this section we present the formulation of OCIM. We first introduce the traditional competitive influence maximization problem, and then discuss its online extension where edge probabilities are unknown. 
\subsection{Competitive Independent Cascade Model} 
We consider a Competitive Independent Cascade (CIC) model, which is an extension of the classical IC model to multi-item influence diffusion. A network is modeled as a directed graph $G = (V, E)$ with $n = |V|$ nodes and $m = |E|$ edges. 
Every edge $(u,v) \in E$ is associated with a probability $p(u,v)$. There are two items, $A$ and $B$, trying to propagate in $G$ from their own seed sets $S_A$ and $S_B$. 
The influence propagation runs as follows: nodes in $S_A$ (resp. $S_B$)  are activated by $A$ (resp. $B$) at step $0$; at each step $s\ge 1$, a node $u$ activated by $A$ (resp. $B$) in step $s-1$ 
	tries to activate each of its inactive out-neighbors $v$ to be $A$ (resp. $B$) with an independent probability $p(u,v)$ that is the same for $A$ and $B$ (i.e., we consider a homogeneous CIC model). The homogeneity assumption is reasonable since typically $A$ and $B$ are two items of the same category (thus competing), so they are likely to have similar propagation characteristics.

If two in-neighbors of $v$ activated by $A$ and $B$ respectively both successfully activate $v$ at step $s$, then a tie-breaking rule is applied at $v$ to determine the final adoption.
In this paper, we consider two types of tie-breaking rules: dominance~\citep{budak2011limiting} and proportional~\citep{chen2011influence} tie-breaking rules. Dominance tie-breaking with $A>B$ (resp. $B>A$) means $v$ will always adopt $A$ (resp. $B$) in a competition. Proportional tie-breaking means that if there are $n_A$ in-neighbors activated by $A$ and $n_B$ in-neighbors activated by $B$ trying to activate $v$ at the same step, the probability that $v$ adopts $A$ (resp. $B$) is $\frac{n_A}{n_A+n_B}$ (resp. $\frac{n_B}{n_A+n_B}$).
The same tie-breaking rule also applies to the case when a node $u$ is selected both as an $A$-seed and a $B$-seed. 
The process stops when no nodes activated at a step $s$ have inactive out-neighbors.

We consider the follower's perspective in the optimization task: let $A$ be the follower and $B$ be the competitor. 
Then given $S_B$, our goal is to choose at most $k$ seed nodes in $G$ as $S_A$ to maximize the influence spread of $A$, denoted as $\sigma_A(S_A, S_B)$, which is the expected number of nodes activated by $A$ after the propagation ends.
According to \citet{budak2011limiting}'s result, the above optimization task under the homogeneous CIC model with the dominance tie-breaking rule has the monotone and submodular properties, and thus can be approximately solved by a greedy algorithm.


\subsection{OCIM Model}
In the online competitive influence maximization (OCIM) problem, the edge probabilities $p(u, v)$'s are unknown and need to be learned: in each round $t$, given $S_B^{(t)}$, we can choose up to $k$ seed nodes as $S_A^{(t)}$, observe the whole propagation of $A$ and $B$ that follows the CIC model, and obtain the reward, which is the number of nodes finally activated by $A$ in this round. 
The propagation feedback observed is then used to update the estimates on edge probabilities $p(u,v)$'s, so that we can achieve better influence maximization results in subsequent rounds.
Our goal is to accumulate as much reward as possible through this repeated process
	over multiple rounds. 

We introduce a new contextual combinatorial multi-armed bandit framework with probabilistically triggered arms (C$^2$MAB-T) for the OCIM problem, which is a contextual extension of CMAB-T from \citet{wang2017improving}. In OCIM, the set of edges $E$ is the set of (base) arms $[m]=\{1, ..., m\}$, and their outcomes follow $m$ independent Bernoulli distributions with expectation $\mu_e = p(u,v)$ for all $e = (u,v) \in E$. We denote the independent samples of arms in round $t$ as $X^{(t)} = (X_1^{(t)},\dots,X_m^{(t)}) \in \{0, 1\}^m$, where $X_i^{(t)}=1$ means the $i$-th edge is on (or live) and $X_i^{(t)}=0$ means the $i$-th edge is off (or blocked) in round $t$, and thus $X^{(t)} $ corresponds to the {\em live-edge graph} 
~\citep{kempe2003maximizing} in round $t$. 
We consider the seed set of the competitor, $S_B^{(t)}$, as the {\em context} in round $t$ since it is determined by the competitor and can affect our choice of $S_A^{(t)}$. We define $\bm{\mathcal{S}}^{(t)} = \left\{S \mid S=(S_A^{(t)}, S_B^{(t)}), |S_A^{(t)}| \leq k \right\}$ as the action space in round $t$ and $S^{(t)}\in \bm{\mathcal{S}}^{(t)}$ as the real action.
We define the triggered arm set $\tau_t$ as the set of edges reached by the propagation from either $S_A^{(t)}$ or $S_B^{(t)}$. 
Thus, $\tau_t$ is the set of edges $(u,v)$ where $u$ can be reached from $S^{(t)}$ by passing through only edges $e \in E$ with $X_e^{(t)} = 1$. The outcomes of $X_i^{(t)}$ for all $i\in\tau_t$ are observed as the feedback. 
We denote the obtained reward in round $t$ as $R(S^{(t)},X^{(t)})$, which is the number of nodes finally activated by $A$. The expected reward $r_{S^{(t)}}(\bm{\mu})=\mathbb{E}[R(S^{(t)}, X^{(t)})]$ is a function of the action $S^{(t)}$ and the vector $\bm{\mu} = (\mu_1,\dots,\mu_m)$. 
Note that our framework can also handle dynamic tie-breaking rules over different rounds, by treating the tie-breaking rule as a part of the context. For ease of explanation, we assume a fixed tie-breaking rule in this paper.

The performance of a learning algorithm $\mathcal{A}$ is measured by its expected regret, which is the difference in expected cumulative reward between always playing the best action and playing actions selected by algorithm $\mathcal{A}$. Let $\text{opt}^{(t)}(\bm{\mu}) = \sup_{S_A^{(t)}}r_{S^{(t)}}(\bm{\mu})$ denote the expected reward of the optimal action in round $t$. Since the offline influence maximization under the CIC model is NP-hard~\citep{budak2011limiting}, we assume that there exists an offline $(\alpha,\beta)$-approximation oracle $\mathcal{O}$, which takes $S_B^{(t)}$ and $\bm{\mu}$ as inputs and outputs an action $S^{{\mathcal{O}},(t)}$ such that $\text{Pr}\{r_{S^{{\mathcal{O}},(t)}}(\bm{\mu}) \geq \alpha \cdot \text{opt}^{(t)}(\bm{\mu})) \} \geq \beta$, where $\alpha$ is the approximation ratio and $\beta$ is the success probability. Instead of comparing 
with the exact optimal reward, we use the following $(\alpha,\beta)$-approximation {\em frequentist regret} for $T$ rounds:
\begin{equation}\textstyle
    Reg^{\mathcal{A}}_{\alpha,\beta}(T;\bm{\mu}) = \sum_{t=1}^T  \alpha \cdot \beta \cdot \text{opt}^{(t)}(\bm{\mu}) - \sum_{t=1}^T r_{S^{\mathcal{A},(t)}}(\bm{\mu}),
\end{equation}
where $S^{\mathcal{A},(t)} := (S_A^{\mathcal{A},(t)}, S_B^{(t)})$ is the action chosen by algorithm $\mathcal{A}$ in round $t$. Here $S_B^{(t)}$ is the context and $S_A^{\mathcal{A},(t)}$ is the seed set of item $A$ chosen by algorithm $\mathcal{A}$.

Another way to measure the performance of the algorithm $\mathcal{A}$ is using {\em Bayesian regret}~\citep{russo2014learning}. Denote the prior distribution of $\bm{\mu}$ as $\mathcal{Q}$ (we will discuss how to derive $\mathcal{Q}$ for OCIM in Section~\ref{sec:TS}). When the prior $\mathcal{Q}$ is given, the corresponding Bayesian regret is defined as:
\begin{equation}\textstyle\label{eq:BR}
    BayesReg^{\mathcal{A}}_{\alpha,\beta}(T) = \mathbb{E}_{\bm{\mu}\sim\mathcal{Q}}  Reg^{\mathcal{A}}_{\alpha,\beta}(T;\bm{\mu}).
\end{equation}
We will design algorithms to solve the OCIM problem and bound their achieved Bayesian and frequentist regrets in Section~\ref{sec:TS} and Section \ref{sec:FreqReg}, respectively. We also discuss the general C$^2$MAB-T problem and its solutions in the Appendix.


\section{Properties of OCIM}\label{sec:properties}
In this section, we first show that the key monotonicity property for CMAB does not hold in OCIM. We then prove that the important Triggering Probability Modulated (TPM) condition still holds, which is essential for the analysis of all proposed algorithms.

\subsection{Non-monotonicity}\label{sec:non-monotone}
The monotonicity condition given by~\citet{wang2017improving} could be stated as follows in the context of OCIM:
 for any action $S = (S_A, S_B)$, for any two expectation vectors $\bm{\mu} = (\mu_1,\dots,\mu_m)$ and $\bm{\mu}' = (\mu_1',\dots,\mu_m')$, we have $r_S(\bm{\mu}) \leq r_S(\bm{\mu}')$ if $\mu_i \leq \mu_i'$ for all $i \in [m]$.
Figure~\ref{fig:non_monetone} shows a simple example of OCIM that does not satisfy the monotonicity condition. The left and right nodes are the seed nodes of $A$ and $B$; the numbers below edges are influence probabilities. It is easy to calculate that $r_S(\bm{\mu}) = \mu_1(1-\mu_2) + 2$, for both dominance and proportional tie-breaking rules. Thus, if we increase $\mu_2$, $r_S(\bm{\mu})$ will decrease, which is contrary to monotonicity.
In general, for every edge $(u,v)$, depending on the positions of the $A$- and $B$-seeds, increasing the influence probability of $(u,v)$ may benefit the propagation of $A$ or may benefit the propagation of $B$ and thus impair the propagation of $A$. Thus, the influence spread of $A$ has intricate connections with the influence probabilities on the edges.
\begin{figure}[H]
    \centering
    \includegraphics[width=0.5\columnwidth]{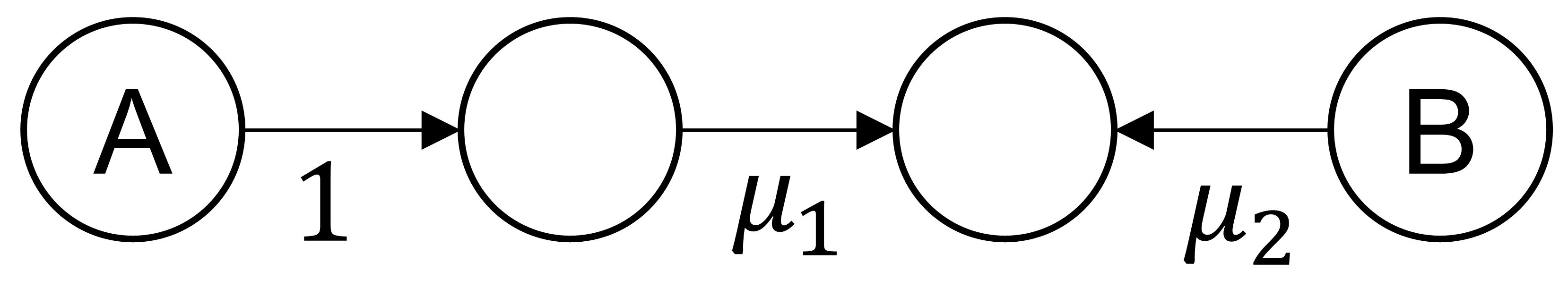}
    \caption{Example of non-monotonicity in OCIM}
    \label{fig:non_monetone}
\end{figure}
The lack of monotonicity poses a significant challenge to the OCIM problem. We cannot directly use UCB-type algorithms \citep{chen2016combinatorial}, as they will not provide optimistic solutions to bound the regret. 

\subsection{Triggering Probability Modulated (TPM) Bounded Smoothness}\label{sec:TPM}
The lack of monotonicity further complicates the analysis of the Triggering Probability Modulated (TPM) condition \citep{wang2017improving}, 
which is crucial in establishing regret bounds for CMAB algorithms.
We use $p_i^{S}(\bm{\mu})$ to denote the probability that the action $S$ triggers arm $i$ when the expectation vector is $\bm{\mu}$. 
The TPM condition in OCIM is given below.
\begin{restatable}{condition}{condOverOneNorm}(1-Norm TPM bounded smoothness). \label{cond:TPM}
We say that an OCIM problem instance satisfies 1-norm TPM bounded smoothness, if there exists $C\in \mathbb{R}^+$ (referred to as the bounded smoothness coefficient) such that, for any two expectation vectors $\bm{\mu}$ and $\bm{\mu}'$, and any action $S = (S_A, S_B)$, we have $|r_{S}(\bm{\mu}) - r_{S}(\bm{\mu}')| \leq C \sum_{i\in[m]}p_i^{S}(\bm{\mu})|\mu_i-\mu_i'|$.
\end{restatable}
Fortunately, with a more intricate analysis, we are able to show the following TPM condition.
\begin{restatable}{theorem}{thmOverTPM}\label{thm:TPM}
Under both dominance and proportional tie-breaking rules,
OCIM instances satisfy the 1-norm TPM bounded smoothness condition with coefficient $C=\tilde{C}$, where $\tilde{C}$ is the maximum number of nodes that any one node can reach in graph $G$.
\end{restatable}
The proof of the above theorem is one of the key technical contributions of the paper.
In the non-competitive setting, an edge coupling method could give a relatively simple proof for the TPM condition.\footnote{The original proof in ~\citep{wang2017improving} occupies several pages, but \cite{Li2020OIMLT} (in their Appendix E) provide a much shorter proof based on edge coupling.}
The idea of edge coupling is that for every edge $e\in E$, we sample a real number $X_e \in [0,1]$ uniformly at random, and determine $e$ to be live under $\bm{\mu}$ if $X_e\le \mu_e$ and blocked if $X_e>\mu_e$, and similarly for $\bm{\mu}'$.
This couples the live-edge graphs $L$ and $L'$ under $\bm{\mu}$ and $\bm{\mu}'$ respectively.
In the non-competitive setting, due to the monotonicity property, we only need to consider the TPM condition when $\bm{\mu} \ge \bm{\mu}'$ (coordinate-wise), and this implies that $L'$ is a subgraph of $L$, which significantly simplifies the analysis.
However, in the competitive setting, monotonicity does not hold, and we have to show the TPM condition for every pair of $\bm{\mu}$ and $\bm{\mu}'$.
	Thus, $L$ and $L'$ no longer have the subgraph relationship.
In this case, we have to show that for every coupling $L$ and $L'$, for every $v\in V$ that is activated by $A$ in $L$ but not activated by $A$ in $L'$,
	it is because either
	(a) some edge $e=(u,w)$ is live in $L$ but blocked in $L'$ while $u$ is $A$-activated (or equivalently $e$ is $A$-triggered); or
	(b) some edge $e$ is live in $L'$ but blocked in $L$ while $e$ is $B$-triggered.
The case (b) is due to the possibility of $B$ blocking $A$'s propagation, a unique scenario in OCIM.
The above claim needs nontrivial inductive proofs for dominance and proportional tie-breaking rules, and then its correctness ensures the TPM condition.

\section{Bayesian Regret Approach}\label{sec:TS}
In our OCIM model, since the samples of base arms follow Bernoulli distributions with mean vector $\bm{\mu}$, we can assume the prior distributions of $\bm{\mu}$, $\mathcal{Q}$, are Beta distributions, where $\mu_i \sim Beta(a_i, b_i)$ for all arm $i$. Given the prior distributions of all arms, we propose an Online Competitive Influence Maximization-Thompson Sampling (OCIM-TS) algorithm, which is described in Algorithm \ref{alg:OCIM-TS}. We initialize the prior distribution of each arm $i$ to $Beta(a_i, b_i)$. Then we take the context $S_B^{(t)}$ and the sampled $\bm{\mu}^{(t)}$  from prior distributions as inputs to the oracle $\mathcal{O}$, and get an output action $S^{(t)}$. After taking this action, we get feedback $X_i^{(t)}$'s from all triggered arms $i \in \tau$, then use them to update the prior distributions of all triggered base arms in $\tau$.
\begin{algorithm}[t]
 \caption{OCIM-TS with offline oracle $\mathcal{O}$}\label{alg:OCIM-TS}
 \begin{algorithmic}[1]
 \STATE \textbf{Input}: $m$, $\mathcal{O}$, Prior $\mathcal{Q} = \prod_{i\in[m]} Beta(a_i, b_i)$.
 \FOR{$t = 1,2,3,\dots$}
    \STATE For each arm $i\in[m]$, draw a sample $\mu_i^{(t)}$ from $Beta(a_i, b_i)$; let $\bm{\mu}^{(t)} = (\mu_1^{(t)},\cdots, \mu_m^{(t)})$.
    \STATE Obtain context $S_B^{(t)}$.
    \STATE $S^{(t)} \leftarrow {\mathcal{O}}(S_B^{(t)}, \bm{\mu}^{(t)})$.
    \STATE Play action $S^{(t)}$, which triggers a set $\tau \subseteq [m]$ of base arms with feedback $X_i^{(t)}$'s, $i\in \tau$.
    \FORALL {$i\in \tau$}
    \STATE $a_i \leftarrow a_i+X_i^{(t)}; b_i \leftarrow b_i + 1 - X_i^{(t)}$.
    \ENDFOR
 \ENDFOR
 \end{algorithmic} 
\end{algorithm}
Let $\widetilde{S} = \{i\in [m] \mid p_i^S(\bm{\mu}) > 0\}$ be the set of arms that can be triggered by $S$. We define $K = \max_{S\in\bm{\mathcal{S}}^{(t)}}|\widetilde{S}|$ as the largest number of arms that could be triggered by a feasible action. 
We provide the Bayesian regret bound of OCIM-TS.
\begin{restatable}{theorem}{thmOverTS}\label{thm:TS}
The OCIM-TS algorithm has the following Bayesian regret bound with $\tilde{C}$ as defined in Theorem~\ref{thm:TPM}:
\begin{align}\textstyle
    \text{BayesReg}_{\alpha,\beta}(T) \leq O(\tilde{C}\sqrt{mKT\ln T}).
\end{align}
\end{restatable}
This regret bound essentially matches the distribution-independent frequentist regret bound of OCIM-OFU in the next section. 
The proof of the above theorem is inspired by the posterior sampling regret decomposition of \citet{russo2014learning}. However, we combine the key property of posterior sampling with the TPM condition in Theorem \ref{thm:TPM} to tackle non-monotonicity.
OCIM-TS can also be applied to general C$^2$MAB-T problems and allows any benchmark offline oracles (e.g., approximate or heuristic oracles). We provide the Bayesian regret bound of OCIM-TS on general C$^2$MAB-T problems in the Appendix.

\section{Frequentist Regret Approach}\label{sec:FreqReg}
Although OCIM-TS can solve the OCIM problem with a standard offline oracle (e.g., TCIM in \cite{lin2015analyzing}), it requires the prior distribution of the network parameter $\bm{\mu}$, which might not be available in practice.
In this section, we first propose the OCIM-OFU algorithm. It achieves good frequentist regret without the prior knowledge, but requires a new oracle to solve a harder offline problem. We then design the OCIM-ETC algorithm, which requires less feedback and easier offline computation, but yields a worse frequentist regret bound.

\subsection{OCIM-OFU Algorithm}\label{sec:OCIM-OFU}
As discussed in Section \ref{sec:non-monotone}, due to the lack of monotonicity, we cannot directly use UCB-type algorithms. However, it is still possible to design bandit algorithms following the principle of Optimism in the Face of Uncertainty (OFU).
We first introduce a new offline problem that jointly optimizes for both the seed set $S^*$ and the optimal influence probability vector $\bm{\mu}^*$, where each dimension of $\bm{\mu}^*$, ${\mu}^{*}_i$, is searched within a confidence interval $c_i$, for all $i\in E$.
\begin{equation} \label{eq:newopt}\textstyle
\begin{aligned}
& \underset{S,\,\bm{\mu}}{\text{maximize}}
& & r_S(\bm{\mu}) \\
& \text{subject to}
& & |S_A|\leq k, S = (S_A, S_B)\\
& & & \mu_i \in c_i, \; i = 1, \ldots, m.
\end{aligned}
\end{equation}
We then define a new offline $(\alpha,\beta)$-approximation oracle $\widetilde{\mathcal{O}}$ to solve this problem. 
Oracle $\widetilde{\mathcal{O}}$ takes $S_B$ and $c_i$'s as inputs and outputs $\bm{\mu}^{\widetilde{\mathcal{O}}}$ and action $S^{\widetilde{\mathcal{O}}} = (S_A^{\widetilde{\mathcal{O}}}, S_B)$, such that $\text{Pr}\{r_{S^{\widetilde{\mathcal{O}}}}(\bm{\mu}^{\widetilde{\mathcal{O}}}) \geq \alpha \cdot r_{S^*}(\bm{\mu}^*) \} \geq \beta$, 
where $(S^*,\bm{\mu}^*) $ is the optimal solution for Eq.\eqref{eq:newopt}. 

With the offline oracle $\widetilde{\mathcal{O}}$, we propose an algorithm following the principle of Optimism in the Face of Uncertainty (OFU), 
named OCIM-OFU. The algorithm maintains the empirical mean $\hat{\mu}_i$ and confidence radius $\rho_i$ for each edge probability. It uses the lower and upper confidence bounds to determine the range of $\mu_i$: $c_i = \left[(\hat{\mu}_i - \rho_i)^{0+}, (\hat{\mu}_i + \rho_i)^{1-}\right]$, where we use $(x)^{0+}$ and $(x)^{1-}$ to denote $\max\{x, 0\}$ and $\min\{x, 1\}$ for any real number $x$. 
It feeds $S_B^{(t)}$ and all current $c_i$'s into the offline oracle $\widetilde{\mathcal{O}}$ to obtain the action $S^{(t)} = (S_A^{(t)}, S_B^{(t)})$ to play at round $t$. The confidence radius $\rho_i$ is large if arm $i$ is not triggered often, which leads to a wider search space $c_i$ to find the optimistic estimate of $\mu_i$. 
We provide its frequentist regret bound.
\begin{algorithm}[t]
 \caption{OCIM-OFU with offline oracle $\widetilde{\mathcal{O}}$}\label{alg:CIM-OIFU}
 \begin{algorithmic}[1]
 \STATE \textbf{Input}: $m$, Oracle $\widetilde{\mathcal{O}}$.
 \STATE For each arm $i\in [m]$, $T_i\leftarrow 0$. \{maintain the total number of times arm $i$ is played so far.\}
 \STATE For each arm $i \in [m]$, $\hat{\mu}_i \leftarrow 1$. \{maintain the empirical mean of $X_i$.\}
 \FOR{$t = 1,2,3,\dots$}
    \STATE For each arm $i\in[m], \rho_i \leftarrow \sqrt{\frac{3\ln t}{2 T_i}}$. \{the confidence radius, $\rho_i = +\infty$ if $T_i = 0$.\}
    \STATE For each arm $i\in[m], c_i \leftarrow \left[(\hat{\mu}_i - \rho_i)^{0+}, (\hat{\mu}_i + \rho_i)^{1-}\right]$. \{the estimated range of $\mu_i$.\}
    \STATE Obtain context $S_B^{(t)}$.
    \STATE $S^{(t)} \leftarrow \widetilde{\mathcal{O}}(S_B^{(t)}, c_1, c_2, \dots, c_m)$.
    \STATE Play action $S^{(t)}$, which triggers a set $\tau \subseteq [m]$ of base arms with feedback $X_i^{(t)}$'s, $i\in \tau$.
    \STATE For every $i\in \tau$ update $T_i$ and $\hat{\mu}_i$: $T_i = T_i + 1, \hat{\mu}_i = \hat{\mu}_i + (X_i^{(t)}-\hat{\mu}_i) / T_i$.
 \ENDFOR
 \end{algorithmic} 
\end{algorithm}

\begin{restatable}{theorem}{thmOverOIFU}\label{thm:OIFU}
The OCIM-OFU algorithm has the following distribution-independent bound (see the Appendix for the distribution-dependent bound) with $\tilde{C}$ defined in Theorem~\ref{thm:TPM} , 
\begin{align}\textstyle
\nonumber
    \text{Reg}_{\alpha,\beta}(T; \bm{\mu}) \leq O(\tilde{C}\sqrt{mKT\ln T})
\end{align}\end{restatable}

The above regret bound has the typical form of $\sqrt{T\ln T}$, indicating that it is tight on the important time horizon $T$.
In fact, it has the same order as in \citet{wang2017improving}'s for the CMAB problem under monotonicity, despite the fact that the OCIM problem does not enjoy monotonicity, and matches the lower bound of CMAB with general reward functions in \citep{merlis2020tight}.
This result is due to our non-trivial TPM condition analysis (Theorem~\ref{thm:TPM}) that shows the same condition as in \citet{wang2017improving}'s setting with monotonicity.

{\bf Computational Efficiency}.
We now discuss the computational complexity of implementing the OCIM-OFU algorithm. We show the complexity of the new offline optimization problem in Eq. \eqref{eq:newopt}.
\begin{restatable}{theorem}{thmOverOffline}\label{thm:sharpP}
The offline problem in Eq.\eqref{eq:newopt} is \#P-hard.
\end{restatable}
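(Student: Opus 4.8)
The plan is to prove \#P-hardness by a polynomial-time Turing reduction from the problem of computing the influence spread $\sigma(S_0)$ of a fixed seed set $S_0$ under the ordinary (non-competitive) Independent Cascade model, which is known to be \#P-hard~\cite{chen2013information}. The key observation is that the joint optimization in Eq.~\eqref{eq:newopt} is already hard on a highly restricted family of instances: take the competitor seed set to be empty, $S_B = \emptyset$, so that the CIC propagation of $A$ coincides with plain IC propagation and no competition or non-monotonicity is involved; and take every confidence set to be a singleton, $c_i = \{\mu_i\}$, so that there is no freedom left in $\bm{\mu}$ and the objective reduces to $r_{(S_A,\emptyset)}(\bm{\mu}) = \sigma(S_A)$. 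On this family the offline problem is exactly ``maximize the IC influence spread over seed sets of size at most $k$'', so it suffices to show that an oracle for this restricted problem lets us compute $\sigma(S_0)$ for an arbitrary target set $S_0$.

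First I would build a gadget that forces the optimizer to return $S_0$, thereby converting the maximization into an evaluation. Given the input IC instance $(G, S_0, \bm{\mu})$, I construct an augmented graph as follows: for each node $v \in S_0$ I add a fresh private source $v'$ with a single deterministic edge $v' \to v$ (probability $1$, i.e.\ $c_{v'v} = \{1\}$), and I attach to each $v'$ a ``bundle'' of $M = n+1$ dummy nodes, each reached from $v'$ by its own probability-$1$ edge and having no other incident edges. I keep all original edges of $G$ with their singleton intervals $\{\mu_e\}$, set $S_B = \emptyset$, and set the budget to $k = |S_0|$. Crucially, each $v'$ has no incoming edges, so its bundle is activated if and only if $v'$ is chosen as a seed. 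Feeding $(S_B, (c_i)_i)$ into the offline oracle yields the optimal value of Eq.~\eqref{eq:newopt} on this instance.

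The main obstacle — and the heart of the argument — is to show that this optimum equals a known constant plus $\sigma(S_0)$, so that $\sigma(S_0)$ can be read off. Since each bundle contributes $M$ guaranteed activations while the entire $G$-part can contribute at most $n < M$ nodes, any feasible set that omits some source $v'$ forfeits at least $M$ dummy activations and can recoup at most $n$, hence is strictly suboptimal; as more seeds never hurt when $S_B=\emptyset$, and there are exactly $|S_0|$ sources for a budget of $|S_0|$, this forces the optimizer to select precisely $\{v' : v\in S_0\}$. Selecting these sources deterministically activates all of $S_0$ and then propagates in $G$ exactly as IC from $S_0$, so by linearity of expectation the optimal value is $|S_0|\,(1+M) + \sigma(S_0)$, from which $\sigma(S_0)$ is recovered in constant time. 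The construction is polynomial in the input size (since $M = n+1$), so a polynomial-time algorithm for Eq.~\eqref{eq:newopt} would yield one for IC influence computation, establishing \#P-hardness. I would finally remark that the same reduction shows the hardness persists even without competition and without any $\bm{\mu}$-search, so the difficulty is intrinsic to influence computation rather than to the new joint-optimization structure.
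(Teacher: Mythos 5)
Your reduction is internally sound, but it proves a different statement from the one this theorem needs: it shows that computing the optimal \emph{value} of Eq.~\eqref{eq:newopt} is \#P-hard, not that \emph{finding} an optimal solution $(S^*,\bm{\mu}^*)$ is. The distinction matters here precisely because the objective $r_S(\bm{\mu})$ is itself \#P-hard to evaluate, so the standard argument that value-hardness implies search-hardness (``given an optimal solution, plug it in to get the optimal value'') breaks down --- the plugging-in step is intractable. Your reduction cannot rule out an efficient algorithm that outputs optimal solutions without their values, because in every instance your construction produces, the optimal solution is known a priori: the budget and bundle gadget force $S_A=\{v' : v\in S_0\}$ by your exchange argument, and $\bm{\mu}$ is pinned by the singleton intervals, so all information about $\sigma(S_0)$ is extracted from the optimal value alone. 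But the object whose hardness this theorem is meant to certify is the oracle $\widetilde{\mathcal{O}}$ used by OCIM-OFU, and that oracle is required to output only a pair $(S^{\widetilde{\mathcal{O}}}, \bm{\mu}^{\widetilde{\mathcal{O}}})$, never a value. Under the reading that is operationally relevant to the paper, your proof therefore has a genuine gap.

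The paper's proof closes exactly this gap by making the \emph{identity} of the optimizer, rather than the optimal value, encode a \#P-hard quantity: with $S$ fixed and a single free probability $\mu_1\in c_1$, the reward is linear in $\mu_1$, so whether the optimal $\mu_1$ sits at the upper or lower endpoint of $c_1$ reveals the sign of $(1-\gamma)\, h_A(G_1,S,v)-\gamma$, i.e., answers a threshold query about an activation probability; binary search over $\gamma$ then recovers $h_A(G_0,S_A,v)$ exactly (Lemma~\ref{lem:SharpP}), and a seed-forcing gadget embeds this single-edge problem into Eq.~\eqref{eq:newopt}. That argument places the hardness in the $\bm{\mu}$-search itself. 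Your own closing remark is the symptom of the problem: your reduction works ``even without competition and without any $\bm{\mu}$-search,'' which means it only re-establishes that influence \emph{evaluation} is \#P-hard --- a fact equally true of the original offline problem (fixed $\bm{\mu}$, optimize over $S$), for which efficient $(1-1/e)$-approximation oracles nonetheless exist. A hardness statement of that kind cannot support the paper's conclusion that implementing $\widetilde{\mathcal{O}}$ is challenging. To repair your proof you would need the paper's key idea (or something equivalent): arrange for the argmax, not the value, to carry the \#P-hard information.
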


As mentioned before, the original offline problem, i.e., maximizing $r_S(\bm{\mu})$ over $S$ when fixing $\bm{\mu}$, can be solved by several algorithms~\citep{lin2015analyzing} based on submodularity of $r_S(\bm{\mu})$ over $S$. A straightforward attempt on the new offline problem in Eq.\eqref{eq:newopt} is to show the submodularity of $g(S) = \max_{\bm{\mu}} r_S(\bm{\mu})$ over $S$, and then to use a greedy algorithm on $g$ to select $S$. Unfortunately, we find that $g(S)$ is not submodular (see the Appendix for a counterexample). Implementing the oracle $\widetilde{\mathcal{O}}$ is then a challenge. However, it is possible to design efficient approximate oracles for bipartite graphs, which model the competitive probabilistic maximum coverage problem with applications in online advertising~\citep{chen2016combinatorial}. The main idea is that we can pre-determine that either the lower or the upper bound of $c_i$ is optimal and should be chosen as $\mu^*_i$ depending on the tie-breaking rule, then use existing efficient influence maximization algorithms to get approximate solutions. The competitive propagation in the general graph is much more complicated, but we have a key observation that the optimal solution for the optimization problem in Eq.\eqref{eq:newopt} must occur at the boundaries of the intervals $c_i$. Based on that, we discuss solutions for some specific graphs such as trees. See the Appendix for more details.

\subsection{OCIM-ETC Algorithm} \label{sec:ETC}
In this section, we propose an OCIM Explore-Then-Commit (OCIM-ETC) algorithm. It has two advantages: first, it does not need the new offline oracle discussed in Sec.~\ref{sec:OCIM-OFU}; and second, it requires fewer observations than our other algorithms: instead of the observations of all triggered edges, i.e., $\tau$, it only needs the observations of all direct out-edges of seed nodes.

Like other ETC algorithms~\citep{garivier2016explore}, OCIM-ETC divides the $T$ rounds into two phases: an exploration phase and an exploitation phase. In the exploration phase, it chooses each node as the seed node of $A$ for $N$ times. The exploration phase thus takes $\lceil nN/k \rceil$ rounds. In the exploitation phase, it takes $S_B^{(t)}$ and the empirical means $\hat{\mu}_i$ as inputs to the oracle $\mathcal{O}$ mentioned in Sec.~\ref{sec:model}, then plays the output action $S^{\mathcal{O},(t)}$. We give its frequentist regret bound.
\begin{restatable}{theorem}{thmOverETC}\label{thm:ETC}
The OCIM-ETC algorithm has the following distribution-independent regret bound (see the Appendix for the distribution-dependent bound) with $\tilde{C}$ defined in Theorem~\ref{thm:TPM}, 
when $N = (\tilde{C}mk)^{\frac{2}{3}} n^{-\frac{4}{3}} T^{\frac{2}{3}} (\ln T)^{\frac{1}{3}}$,
\begin{equation}\textstyle
     Reg_{\alpha,\beta}(T;\bm{\mu}) \leq O((\tilde{C}mn)^{\frac{2}{3}} k^{-\frac{1}{3}} T^{\frac{2}{3}} (\ln T )^{\frac{1}{3}}).
\end{equation}
\end{restatable}
Although this regret bound is worse than that of the OCIM-OFU algorithm in Theorem \ref{thm:OIFU}, OCIM-ETC requires easier offline computation and less feedback since it only needs to observe the results of direct out-edges of seed nodes, which shows the tradeoff between regret bound and feedback/computation in OCIM. 

%
%
%
\section{Extension to Probabilistic Seed Distribution for the Competitor}\label{sec:extension}
\cite{lin2015analyzing} extend the offline CIM problem to a probabilistic setting where the competitor's seed distribution is known (i.e., the probability of each node being selected as a seed by the competitor). In this section, we extend our algorithms to handle two new settings where the competitor has a probabilistic seed distribution.
Note that we need to slightly modify the TPM condition for these settings.
We denote the expected reward of follower $A$ as $r(S_A, D_B, \bm{\mu})$, where $S_A$ is the seed set of $A$, $D_B$ is the seed distribution of $B$. We use $p_i(S_A, D_B, \bm{\mu})$ to denote the probability that either $S_A$ or $S_B$ will trigger arm $i$ when the seed set of $A$ is $S_A$, the seed set of $B$, $S_B$, is sampled from $D_B$, and the expectation vector is $\bm{\mu}$. The modified TPM condition is given below. 
\begin{restatable}{condition}{condOverOneNormProb}(Modified TPM bounded smoothness). \label{cond:TPM_prob}
We say that an OCIM problem instance satisfies modified TPM bounded smoothness, if there exists $C\in \mathbb{R}^+$ such that, for any two expectation vectors $\bm{\mu}$ and $\bm{\mu}'$, and any seed set $S_A$ and seed distribution $D_B$, we have $|r(S_A, D_B, \bm{\mu}) - r(S_A, D_B, \bm{\mu}')| \leq C \sum_{i\in[m]}p_i(S_A, D_B, \bm{\mu})|\mu_i-\mu_i'|$.
\end{restatable}
With the similar analysis of Theorem \ref{thm:TPM}, we can show the following TPM condition when the competitor has probabilistic seed distribution.
\begin{restatable}{theorem}{thmOverTPMProb}\label{thm:TPM_prob}
Under both dominance and proportional tie-breaking rules,
OCIM instances satisfy the modified TPM bounded smoothness condition with coefficient $C=2\tilde{C}$, where $\tilde{C}$ is the maximum number of nodes that any one node can reach in graph $G$.
\end{restatable}

{\bf Known dynamic seed distribution}. In round $t$, the competitor's seed set $S_B^{(t)}$ follows a distribution $D_B^{(t)}$, i.e., $S_B^{(t)} \sim D_B^{(t)}$. However, the follower only knows $D_B^{(t)}$ but not $S_B^{(t)}$ before choosing $S_A^{(t)}$. 
Since our proposed framework has a nice separation between online learning and offline computation, in this setting, only the offline computation part will be affected. 
Specifically, we can replace the oracle $\mathcal{O}(S_B^{(t)}, \bm{\mu}^{(t)})$ in OCIM-TS and OCIM-ETC with a new oracle $\mathcal{O}_{\text{new}}(D_B^{(t)}, \bm{\mu}^{(t)})$. For OCIM-OFU, similar to oracle $\widetilde{\mathcal{O}}$, we need a new oracle $\widetilde{\mathcal{O}}_{\text{new}}$ that takes $D_B^{(t)}$ and the confidence intervals $\left\{c_i\right\}$ as inputs and outputs $S_A^{(t)}$. We can use the TCIM algorithm of~\citep{lin2015analyzing} to design $\mathcal{O}_{\text{new}}$ and $\widetilde{\mathcal{O}}_{\text{new}}$. Our proposed algorithms will have the same regret bounds as in Theorems~\ref{thm:TS} and \ref{thm:OIFU}.

{\bf Unknown fixed seed distribution}.
In this setting, the seed distribution of the competitor, $D_B$, is unknown to the follower but fixed for all rounds. To solve this problem, we introduce a virtual $B$ seed node $u_B$, which connects to each existing node $u$ with an unknown edge probability $p(u_B, u)$ equal to the probability of $u$ being selected as a $B$ seed. This reduces the case of probabilistic seed selection to the standard CIC model with a known seed node $u_B$. The unknown edge probabilities $p(u_B, u)$'s can be learned together with the edge probabilities in the original graph. Therefore, we do not need to know the competitor's seed selection in advance and can learn it over time through the online learning process. Our algorithms will have the same regret guarantees as in Theorems~\ref{thm:TS} and \ref{thm:OIFU}.
\section{Experiments}\label{sec:experiment}
\textbf{Datasets and settings.}
To validate our theoretical findings, we conduct experiments on two real-world datasets widely used in the influence maximization literature, with detailed statistics summarized in Table~\ref{tab:dataset}.
First, we use the Yahoo! Search Marketing Advertiser Bidding Data\footnote{https://webscope.sandbox.yahoo.com} (denoted as Yahoo-Ad), which contains a bipartite graph between $1,000$ keywords and $10,475$ advertisers.
Every entry in the original Yahoo-Ad dataset is a 4-tuple, which represents a “keyword-id” bid by “advertiser-id” at “time-stamp” with “price”. We extract advertiser-ids and keyword-ids as nodes, and add an edge if the advertiser bids the keyword at least once.
Each edge shows the "who is interested in what" relationship. This dataset will contain $11,475$ nodes and $52,567$ edges.
The motivation of this experiment is to select a set of keywords that is maximally associated to advertisers, which is useful for the publisher to promote keywords to advertisers.
We then consider the DM network~\citep{tang2009social} with 679 nodes representing researchers and $3,374$ edges representing collaborations between them. We simulate a researcher asking others (i.e., $S_A$) to spread her ideas while her competitor (i.e., $S_B$) promotes a competing proposal.
We set the parameters of our experiments as the following.
For the edge weights, Yahoo-Ad uses the weighted cascade method~\citep{kempe2003maximizing}, i.e. $p(s,t)=1/deg_{-}(s)$, where $deg_{-}(s)$ is the in-degree of node $s$, and weights for DM are obtained by the learned edge parameters from ~\citep{tang2009social}.
For Bayesian regrets, we set a prior distribution of $\mu_e \sim Beta(5w_e, 5(1-w_e))$, where $w_e$ is the true edge weight as specified above.

We model non-strategic and strategic competitors by selecting the seed set $S_B$ uniformly at random (denoted as RD) or by running the non-competitive influence maximization algorithm (denoted as IM).
 In our experiments, we set $|S_A|=|S_B|=5$ for Yahoo-Ad and $|S_A|=|S_B|=10$ for the DM dataset, and $B>A$.
Since the optimal solution given the true edge probabilities cannot be derived in polynomial time, for Yahoo-Ad, we use the greedy solution as the optimal baseline, which is a $(1-1/e, 1)$-approximate solution. For the DM dataset, we use the IMM solution as the optimal baseline, which is a $(1-1/e-\epsilon, 1-n^{-l})$-approximate solution.
For frequentist regrets, we repeat each experiment 50 times and show the average regret with $95\%$ confidence interval.
For Bayesian regrets, we draw 5 problem instances according to the prior distributions, conduct 10 experiments in each instance and report the average Bayesian regret over the 50 experiments.
Due to the space constraint, results of other settings are provided in the Appendix.

\begin{table}[t]
\centering
\caption{Dataset Statistics}
\label{tab:dataset}
\begin{tabular}{cccc}
    \toprule
      Network & $n$ & $m$ & Average Degree \\
     \midrule
    DM & $679$ & $3,374$ & $4.96$ \\
     Yahoo-Ad & $11,475$ & $52,567$ & $4.58$\\
     \bottomrule
\end{tabular}
\end{table}

\begin{table}[t]
\centering
\caption{Average Running Time (second/round)}
\label{tab:runningtime}
\resizebox{1\columnwidth}{!}{
\begin{tabular}{cccccc}
    \toprule
      Dataset & OCIM-OFU & OCIM-TS &OCIM-ETC & $\epsilon$-greedy & EMP\\
     \midrule
    Yahoo-Ad & $1.221$ & $1.641$ & $0.729$& $1.244$ & $1.226$\\
    DM & $1.142$ & $1.195$ & $0.621$ &$1.173$ & $1.125$\\
     \bottomrule
\end{tabular}
}
\end{table}

\textbf{Algorithms for comparison.} 
For OCIM-TS, since the true prior distribution is unknown for the frequentist setting, we use the uninformative prior $Beta(1,1)$ for each $\mu_e$.
For OCIM-OFU, we shrink its confidence interval by $\alpha_{\rho}$, i.e., $\rho_i \leftarrow \alpha_{\rho} \sqrt{{3\ln t}/{2 T_i}}$, to speed up the learning. The role of $\alpha_\rho$ represents a tradeoff between theoretical guarantees and real-world performance. $\alpha_\rho \ge 1$ provides theoretical regret bounds for the worst-case (i.e., our algorithms have sublinear regret for any problem instance) and most of the bandit literature gives regret analysis under this condition. However, in practice, we often do not face the worst problem instance. Taking a more aggressive $\alpha_\rho$ helps speed up the learning empirically \citep{liu2021multi}, though the algorithms may incur linear regrets for bad problem instances (which are likely rare in practice), preventing us from achieving worst-case theoretical regret bounds.
We compare OCIM-OFU/OCIM-TS to the $\epsilon$-Greedy algorithm with parameter $\epsilon = 0$ (denoted as the EMP algorithm) and $\epsilon = 0.01$, which inputs the empirical mean into the offline oracle with $1-\epsilon$ probability and otherwise selects $S_A$ uniformly at random.
The results of OCIM-ETC are moved to the Appendix as it requires more rounds to learn than others.

\textbf{Running time.}
We show the average running times for different algorithms in Table~\ref{tab:runningtime}. For the Yahoo-Ad dataset, OCIM-ETC is the fastest one as it only needs to call the oracle for one time before the exploitation phase.
The running time of OCIM-TS is slower than that of OCIM-OFU because it requires an extra sampling procedure to generate Thompson samples. For the DM dataset, all algorithms consume less time since the graph is smaller, but the relative order for different algorithms are preserved.

\textbf{Experimental result for frequentist regrets} 
Figures~\ref{fig:Yahoo-Ad-RD} and~\ref{fig:Yahoo-Ad-IM} show the results for Yahoo-Ad. 
First, the regret of OCIM-OFU grows sub-linearly with respect to round $T$ for all $\alpha_{\rho}$, consistent with Theorem~\ref{thm:OIFU}'s regret bound.
Second, we can observe that OCIM-OFU is superior to EMP and $\epsilon$-Greedy when $\alpha_{\rho}=0.05$.
When $\alpha_{\rho}=0.2$, OCIM-OFU may have larger regret due to too much exploration.
The OCIM-TS algorithm has larger slope in regrets compared to other algorithms.
We speculate that such large slope comes from the uninformative prior, which requires more rounds to compensate for the mismatch of the uninformative and the true priors.

The results on the DM dataset are shown in Figs.~\ref{fig:DM-RD} and~\ref{fig:DM-IM}.
Generally, they are consistent with those on the Yahoo-Ad dataset: OCIM-OFU also grows sub-linearly w.r.t round $T$.
When $\alpha_{\rho}=0.05$, OCIM-OFU has smaller regret than all baselines.
Moreover, the difference between OCIM-OFU and the baselines for the non-strategic competitor (RD) is more significant than that of the strategic competitor's (IM), because the non-strategic competitor is less ``dominant'' and OCIM-OFU can carefully trade off exploration and exploitation to maximize $A$'s influence.
OCIM-TS learns faster and achieves better performance in this dataset compared to that in the Yahoo-Ad dataset.
\begin{figure}[t]
	\centering
	\begin{subfigure}[b]{0.23\textwidth}
		\centering
		\includegraphics[width=\textwidth]{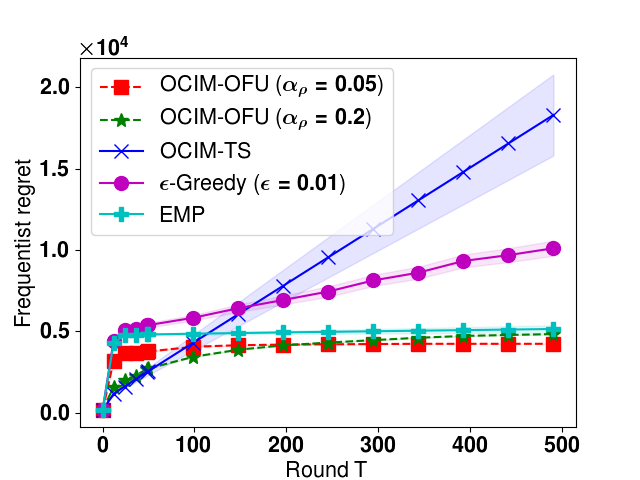}
		\caption{Yahoo-Ad, RD}
		\label{fig:Yahoo-Ad-RD}
	\end{subfigure}
	\begin{subfigure}[b]{0.23\textwidth}
		\centering
		\includegraphics[width=\textwidth]{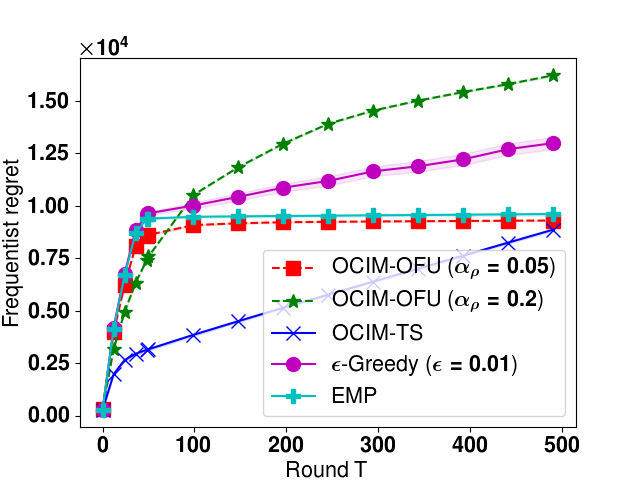}
		\caption{Yahoo-Ad, IM}
		\label{fig:Yahoo-Ad-IM}
	\end{subfigure}
	\begin{subfigure}[b]{0.23\textwidth}
		\centering
		\includegraphics[width=\textwidth]{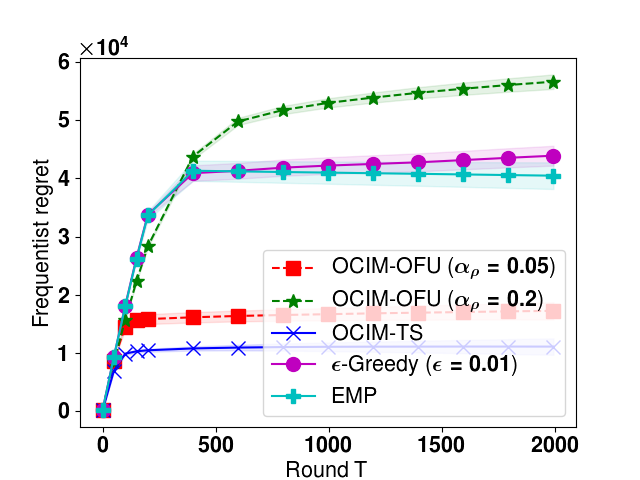}
		\caption{DM, RD}
		\label{fig:DM-RD}
	\end{subfigure}
	\begin{subfigure}[b]{0.23\textwidth}
		\centering
		\includegraphics[width=\textwidth]{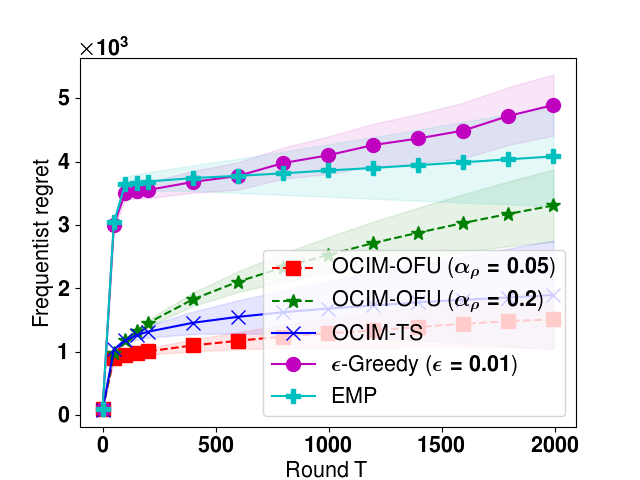}
		\caption{DM, IM}
		\label{fig:DM-IM}
	\end{subfigure}
	\caption{Frequentist regrets of algorithms for bipartite graph Yahoo-Ad and general graph DM.
	}\label{fig:GG}
\end{figure}
\begin{figure}[t]
	\centering
	\begin{subfigure}[b]{0.23\textwidth}
		\centering
		\includegraphics[width=\textwidth]{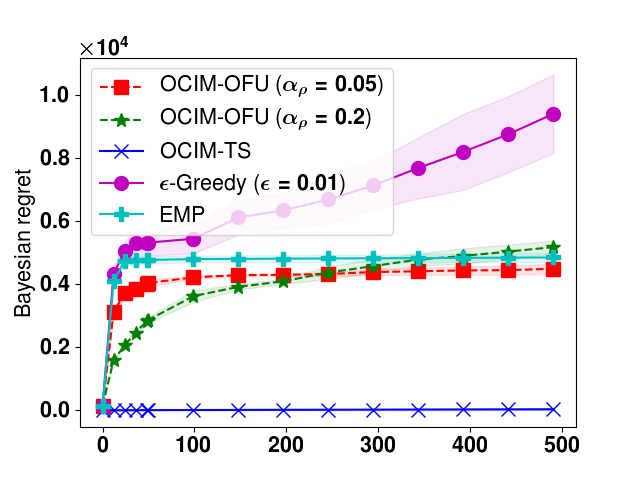}
		\caption{Yahoo-Ad, RD}
		\label{fig:Bayes_Yahoo-Ad-RD}
	\end{subfigure}
	\begin{subfigure}[b]{0.23\textwidth}
		\centering
		\includegraphics[width=\textwidth]{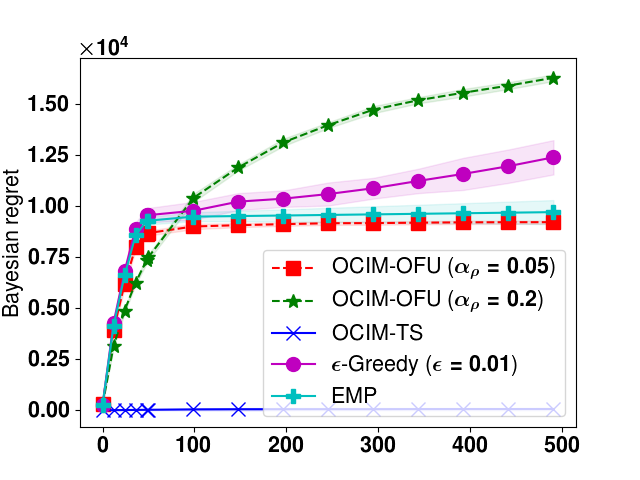}
		\caption{Yahoo-Ad, IM}
		\label{fig:Bayes_Yahoo-Ad-IM}
	\end{subfigure}
	\begin{subfigure}[b]{0.23\textwidth}
		\centering
		\includegraphics[width=\textwidth]{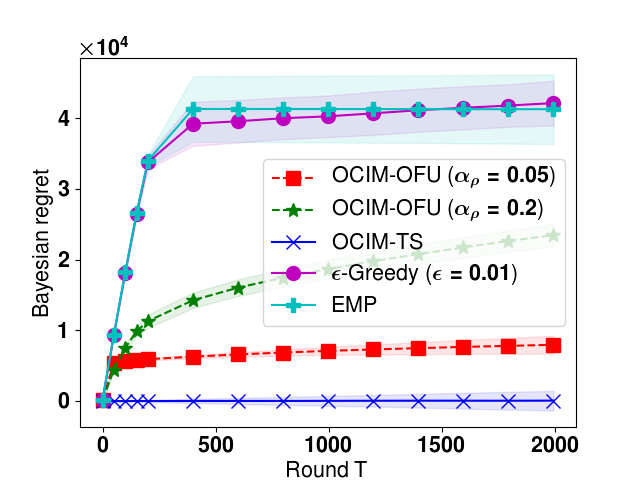}
		\caption{DM, RD}
		\label{fig:Bayes_DM-RD}
	\end{subfigure}
	\begin{subfigure}[b]{0.23\textwidth}
		\centering
		\includegraphics[width=\textwidth]{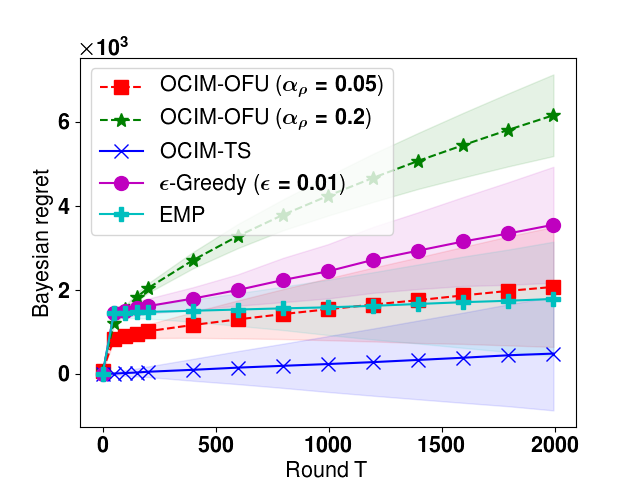}
		\caption{DM, IM}
		\label{fig:Bayes_DM-IM}
	\end{subfigure}
	\caption{Bayesian regrets of algorithms for bipartite graph Yahoo-Ad and general graph DM.
	}\label{fig:Bayesian}
\end{figure}

\textbf{Experimental result for Bayesian regrets}
We show Bayesian regrets of all algorithms in Figure~\ref{fig:Bayesian}. 
All algorithms except for OCIM-TS have similar curves.
OCIM-TS, however, achieves at least two orders of magnitudes lower regret ($BayesReg(T)\approx100$) compared with other algorithms.
The reason is that OCIM-TS leverages its prior knowledge to quickly converge to the optimal solution, but other algorithms cannot use this knowledge effectively.

\section{Conclusion and Future Work}\label{sec:conclusion}
In this paper, we formulate the OCIM problem and introduce a general C$^2$MAB-T framework for it. We prove that one important condition required by prior CMAB algorithms, the TPM condition, still holds, while the other one, monotonicity, is not satisfied. We propose three algorithms that balance between prior knowledge, offline computation, feedback and regret bound: OCIM-TS relies on prior knowledge and achieves logarithmic Bayesian regret; OCIM-OFU needs to solve a harder offline problem and achieves logarithmic frequentist regret; and OCIM-ETC requires less feedback at the expense of a worse frequentist regret bound. We extend our framework to settings with more complex competitor actions.


This paper initiates the first study on OCIM, and it opens up a number of future directions. One is to design efficient offline approximation algorithms in the competitive setting when edge probabilities take a range of values. 
Another interesting direction is to study other partial feedback models, e.g. we only observe feedback from edges triggered by $A$ but not $B$.
A further direction is to look into distributed online learning, when competitors $A$ and $B$ both learn from the propagation and deploy their seeds accordingly.
\section*{Acknowledgements}
John C.S. Lui is supported in part by the GRF 14200321.

\bibliography{references}
\onecolumn
\appendix
\section*{Appendix}\label{sec:supplementary}


\section{Proof of Theorem \ref{thm:TPM}}
\begin{proof}
Let $r_S^v(\bm{\mu})$ be the probability that node $v$ is activated by $A$. From the proof of Lemma 2 in~\citep{wang2017improving}, we know that  if for every node $v$ and every $\bm{\mu}$ and $\bm{\mu}'$ vectors we have
\begin{equation}
    \left|r_S^v(\bm{\mu}) - r_S^v(\bm{\mu'}) \right| \leq \sum_{e \in E} p_e^{S}(\bm{\mu}) \left|\mu_e - \mu'_e\right|,
\end{equation}
then Theorem~\ref{thm:TPM} is true. Notice that
\begin{align}
    r_S^{v}(\bm{\mu}) &= \mathbb{E}_{L \sim \bm{\mu}} \left[\mathds{1}\{v \text{ is activated by $A$ under } L\}\right]\\
    r_S^{v}(\bm{\mu'}) &= \mathbb{E}_{L' \sim \bm{\mu'}}\left[\mathds{1}\{v \text{ is activated by $A$ under } L'\}\right]
\end{align}
where $L$ and $L'$ are two live-edge graphs sampled under $\bm{\mu}$ and $\bm{\mu}'$, respectively. As mentioned in Sec.~\ref{sec:TPM}, we use an edge coupling method to compute the difference between $r_S^{v}(\bm{\mu})$ and $r_S^{v}(\bm{\mu'})$. Specifically, for each edge $e$, suppose we independently draw a uniform random variable $X_e$ over $[0,1]$, let
\begin{align*}
    &L(e) = L'(e) = 1, &&\text{if } X_e \leq \min(\mu_e, \mu'_e)\\
    &L(e) = 1, L'(e) = 0, &&\text{if } \mu'_e < X_e < \mu_e\\
    &L(e) = 0, L'(e) = 1, &&\text{if } \mu_e < X_e < \mu'_e\\
    &L(e) = L'(e) = 0, &&\text{if } X_e \geq \max(\mu_e, \mu'_e)
\end{align*}
where $L(e)$ represents the live/blocked state of edge $e$ in live-edge graph $L$. Notice that $L$ and $L'$ does not have the subgraph relationship. Let $\bm{X} := (X_1,\dots,X_e)$, the difference can be written as:
\begin{equation}
    r_S^v(\bm{\mu}) - r_S^v(\bm{\mu'}) = \mathbb{E}_{\bm{X}}[f(S,L,v) - f(S,L',v)],
\end{equation}
where $f(S,L,v) := \mathds{1}\{v \text{ is activated by $A$ under } L\}$.
Since $f(S,L,v) - f(S,L',v)$ could be 0, 1 or -1, we will discuss these cases separately.

1) $f(S,L,v) - f(S,L',v) = 0$.\\
This will not contribute to the expectation.

2) $f(S,L,v) - f(S,L',v) = 1$.\\
This will occur only if there exists a path such that: under $L$, $v$ can be activated by $A$ via this path, while under $L'$, $v$ cannot be activated by $A$ via this path. We denote this event as $\mathcal{E}_1$. We will show that $\mathcal{E}_1$ occurs only if at least one of $\mathcal{E}_1^A$ and $\mathcal{E}_1^B$ occurs.

$\mathcal{E}_1^A$: There exists a path $u \to v_1 \to \dots \to v_d = v$ such that:\\
1. $u$ is activated by $A$ under both $L$ and $L'$\\
2. edge $(u, v_1)$ is live under $L$ but not $L'$

$\mathcal{E}_1^B$: There exists a path $u' \to v_1' \to \dots \to v'_{d'} = v$ such that:\\
1. $u'$ is activated by $B$ under both $L$ and $L'$\\
2. edge $(u', v_1')$ is live under $L'$ but not $L$

\begin{lemma}\label{lemma:E_1}
$\mathcal{E}_1$ occurs only if at least one of $\mathcal{E}_1^A$ and $\mathcal{E}_1^B$ occurs.
\end{lemma}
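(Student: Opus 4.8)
The plan is to prove Lemma~\ref{lemma:E_1} by a structural induction that exploits the built-in symmetry of the coupling. First I would observe that the statement is invariant under the simultaneous swap $A \leftrightarrow B$ and $L \leftrightarrow L'$: under this swap the hypothesis of $\mathcal{E}_1$ ($v$ activated by $A$ in $L$ but not in $L'$) turns into the mirror event ($v$ activated by $B$ in $L'$ but not in $L$), while $\mathcal{E}_1^A$ and $\mathcal{E}_1^B$ are interchanged. Hence it suffices to prove a single unified claim---that a node whose $A$-status drops from $L$ to $L'$ forces $\mathcal{E}_1^A \vee \mathcal{E}_1^B$---and the mirror claim comes for free and may be invoked whenever the argument switches which item is being tracked.

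The main engine is a case analysis on the immediate cause of $v$'s $A$-activation in $L$. Since $v$ is $A$-activated in $L$, it has an in-neighbor $u$ with $(u,v)$ live in $L$ and $u$ itself $A$-activated in $L$ one step earlier. I would branch on the status of $u$ and of the edge $(u,v)$ in $L'$. (i) If $u$ is not $A$-activated in $L'$, then $u$ exhibits the same $A$-status drop as $v$ but one step closer to the seeds, so the inductive hypothesis yields a witnessing differing edge whose head reaches $u$ in $G$; prepending $u\to v$ extends the witness to $v$ and preserves its event type. (ii) If $u$ is $A$-activated in both $L$ and $L'$ but $(u,v)$ is live in $L$ and blocked in $L'$, then $(u,v)$ is itself a live-in-$L$/blocked-in-$L'$ edge out of a node $A$-activated in both, and as $v$ is reachable from $v$ this is exactly $\mathcal{E}_1^A$.

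The hard case, and the genuinely competitive one, is (iii): $u$ is $A$-activated in both and $(u,v)$ is live in both. In $L'$ the node $u$ then does try to activate $v$ with $A$, so the only way $v$ can fail to be $A$-activated in $L'$ is that $B$ reaches $v$ and wins the tie-break in $L'$. Thus $v$ is $B$-activated in $L'$ but not in $L$, the mirror hypothesis for $v$, and the discrepancy is caused by $B$ blocking $A$---the phenomenon unique to OCIM. By the symmetry above I would hand this off to the mirror claim and walk back along the $B$-propagation path of $v$ in $L'$ toward $S_B$, looking for the first edge that differs between $L$ and $L'$.

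The obstacle is to make this alternation terminate, and the subtlety is that the two coupled processes carry \emph{unrelated} activation times, so a naive induction on the activation step in one graph need not decrease when we switch from tracking $A$ in $L$ to tracking $B$ in $L'$. The fact that closes the argument is that the tie-break winner at a contested node cannot flip between $L$ and $L'$ without an incident difference: if $v$ is contested by $A$ through $u$ and by $B$ through some $u'$ in both graphs, with $(u,v)$ and $(u',v)$ live in both, $A$ winning in $L$ and $B$ winning in $L'$, then under the fixed dominance rule the arrival times satisfy $t_L^A(u)\le t_L^B(u')$ and $t_{L'}^B(u')\le t_{L'}^A(u)$; were the arrival times of $u$ and $u'$ the same in $L$ and $L'$, these would force equal arrival at $v$ and hence the \emph{same} winner, a contradiction. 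Therefore at least one of $u,u'$ already has an arrival-time discrepancy between $L$ and $L'$, strictly earlier than that of $v$, and I recurse there. I would thus run the induction on the activation step while tracking timing discrepancies, not merely item discrepancies, so that each hand-off between the claim and its mirror strictly decreases the step and cannot loop on the same node; verifying this well-foundedness is the main technical obstacle. Once Lemma~\ref{lemma:E_1} and its symmetric counterpart for the $f=-1$ case are in hand, I bound $\Pr[\mathcal{E}_1^A]$ and $\Pr[\mathcal{E}_1^B]$ by $\sum_{e} p_e^S(\bm{\mu})\,|\mu_e-\mu'_e|$---each differing, $A$- or $B$-triggered edge being charged to the at most $\tilde{C}$ targets $v$ reachable from its head---which yields the per-node inequality and hence Theorem~\ref{thm:TPM}.
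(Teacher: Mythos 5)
Your proposal follows the same architecture as the paper's proof: an alternating analysis that traces $A$'s propagation in $L$ and $B$'s propagation in $L'$, the same two witness events $\mathcal{E}_1^A$ and $\mathcal{E}_1^B$, and termination driven by the timing constraints of the dominance tie-breaking rule. Your cases (i)--(iii) are a node-by-node reorganization of the paper's ``first failure node along a path'' analysis (the paper runs the induction over shortest paths $P_0, P_1, P_2, \dots$ with explicit length inequalities such as $l_1 < i \le l_0$ under $A>B$ and $l_1 \le i \le l_0$ under $B>A$, so that the lengths strictly decrease every other alternation). The difference is in how termination is closed, and that is exactly where your argument has a genuine gap.

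The gap: when you escape the loop at a doubly-contested node by recursing on ``at least one of $u,u'$ has an arrival-time discrepancy,'' the strengthened induction hypothesis you implicitly invoke---\emph{any} node with \emph{any} timing discrepancy yields $\mathcal{E}_1^A \vee \mathcal{E}_1^B$---is false. Timing discrepancies are directional. If, say, $A$ reaches $u$ \emph{earlier} in $L'$ than in $L$, then tracing the cause produces either an edge live in $L'$ but blocked in $L$ with an $A$-activated tail, or an edge live in $L$ but blocked in $L'$ with a $B$-activated tail; these are witnesses of the events the paper calls $\mathcal{E}_{-1}^A$ and $\mathcal{E}_{-1}^B$, neither of which appears in the conclusion of Lemma~\ref{lemma:E_1}. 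So the recursion closes only if the discrepancy at $u$ is ``$A$ delayed in $L'$'' or the discrepancy at $u'$ is ``$B$ delayed in $L$,'' and your tie-break-flip argument establishes only that the two pairs of arrival times cannot both coincide, not that one of these two \emph{right-direction} discrepancies holds. The missing step is provable from your own inequalities: if instead $t_{L'}^A(u) \le t_L^A(u)$ and $t_L^B(u') \le t_{L'}^B(u')$, then chaining with $t_L^A(u) \le t_L^B(u')$ ($A$ wins at $v$ in $L$) and $t_{L'}^B(u') \le t_{L'}^A(u)$ ($B$ wins at $v$ in $L'$) forces a contradiction, because under $A>B$ the second inequality is strict and under $B>A$ the first is strict. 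With this directional fact added, the induction hypothesis must be restated directionally ($A$ delayed in $L'$, or $B$ delayed in $L$, relative to the other graph), and then the activation-time measure strictly decreases every other hand-off and the recursion terminates---this is precisely what the paper's explicit path-length bookkeeping accomplishes by construction, since it only ever follows $A$-paths in $L$ and $B$-paths in $L'$. As written, however, your recursion can land on a node whose only witnessing edge differences are of the $\mathcal{E}_{-1}$ type, and the proof of the lemma does not go through.
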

\begin{proof}
Let us first discuss the relationship between $\mathcal{E}_1$, $\mathcal{E}_1^A$ and $\mathcal{E}_1^B$. For $\mathcal{E}_1$, if $v$ can be activated by $A$ under $L$ but not $L'$, it is because either: (a) some edge $e=(u,w)$ is live in $L$ but blocked in $L'$ while $u$ is $A$-activated (or equivalently $e$ is $A$-triggered); or (b) some edge $e$ is live in $L'$ but blocked in $L$ while $e$ is $B$-triggered. The former could be relaxed to $\mathcal{E}_1^A$, and the latter could be relaxed to $\mathcal{E}_1^B$. Notice that $\mathcal{E}_1^A$ and $\mathcal{E}_1^B$ are not mutually exclusive and we are interested in the upper bound of $\mathbb{P}\{\mathcal{E}_1\}$. 
\begin{figure}[H]
    \centering
    \includegraphics[width=0.7\columnwidth]{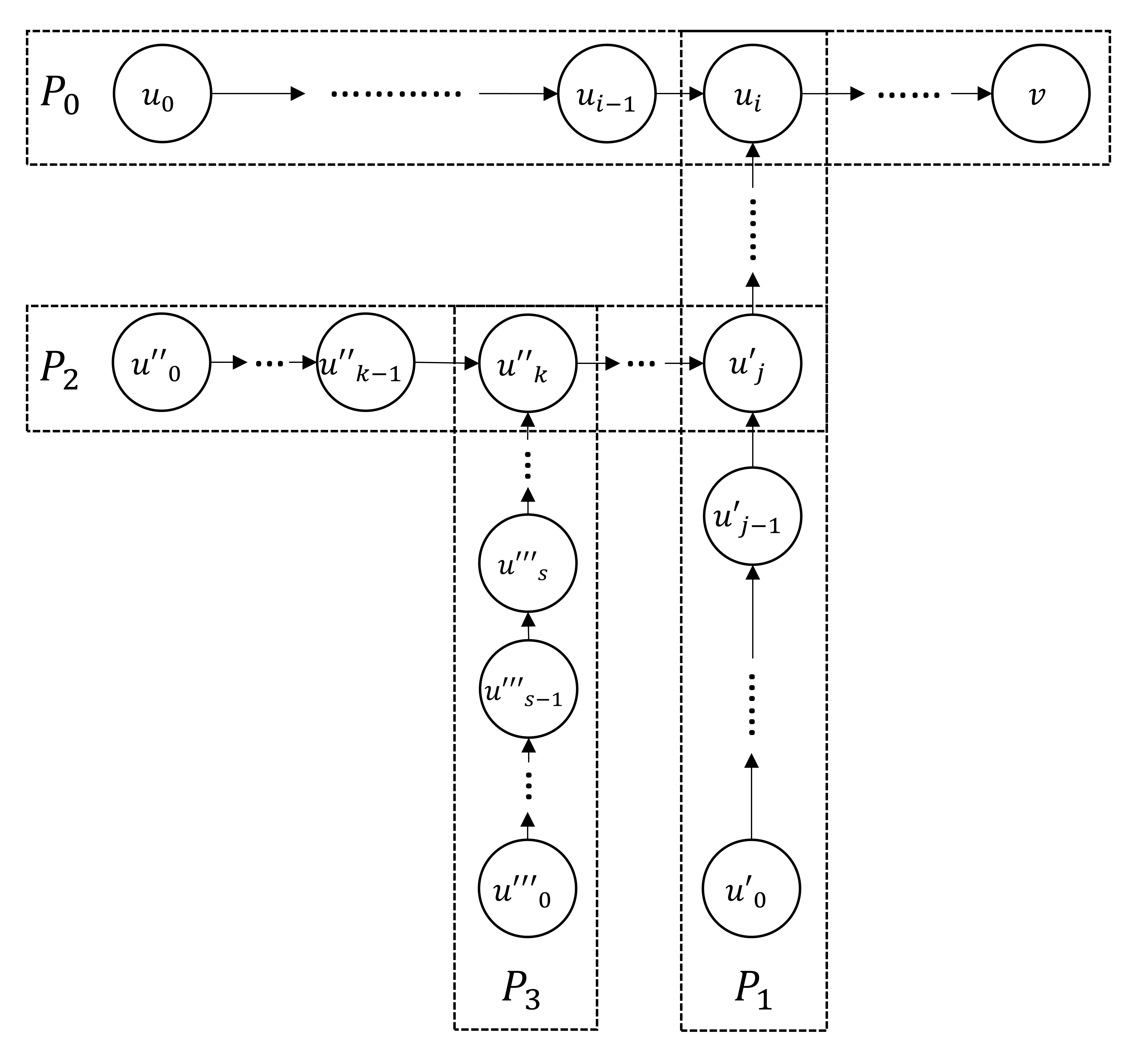}
    \caption{Path $P_0, P_1, P_2$ and $P_3$}
    \label{fig:TPM}
\end{figure}
Assuming $\mathcal{E}_1$ is true, consider the shortest path $P_0 := \{u_0 \to u_1 \to \dots \to u_{l_0} = v\}$ from one seed node of $A$, $u_0$, to node $v$, such that under $L$ node $v$ is activated by $A$ but under $L'$ it is not. When $\mathcal{E}_1$ is true, there must exist a node that is not activated by $A$ in $P_0$ under $L'$. We denote the first node from $u_0$ to $v$ (i.e., closest to $u_0$) in $P_0$ that is not activated by $A$ under $L'$ as $u_i$.

Next, let us consider the live/blocked state of edge $(u_{i-1}, u_i)$. We already know edge $(u_{i-1}, u_i)$ is live under $L$. If edge $(u_{i-1}, u_i)$ is blocked under $L'$, since $u_{i-1}$ is activated by $A$ under both $L$ and $L'$, it directly becomes $\mathcal{E}_1^A$. 
Otherwise, if edge $(u_{i-1}, u_i)$ is live under $L'$, the reason that node $u_i$ is not activated by $A$ could only be that it is activated by $B$. In this case, there must exist a path $P_1 := \{u'_0 \to u'_1 \to \dots \to u'_{l_1} = u_i\}$ from one seed node of $B$, $u'_0$, to node $u_i$, such that $u_i$ is activated by $B$ under $L'$ but not $L$. This can only occur when there exists a node that is not activated by $B$ in $P_1$ under $L$. We denote the first node from $u'_0$ to $u'_{l_1}$ (i.e., closest to $u'_0$) in $P_1$ that is not activated by $B$ under $L$ as $u'_j$. Notice that when the tie-breaking rule is $A>B$, we have $l_1 < i \leq l_0$ as $B$ should arrive at $u_i$ earlier than $A$; when the tie-breaking rule is $B>A$, we have $l_1 \leq i \leq l_0$ as $B$ should arrive at $u_i$ no later than $A$. We will discuss the case of the proportional tie-breaking rule separately after the discussion of the dominance tie-breaking rules.

Then, let us consider the live/blocked state of edge $(u'_{j-1}, u'_j)$. We already know edge $(u'_{j-1}, u'_j)$ is live under $L'$. If edge $(u'_{j-1}, u'_j)$ is blocked under $L$, since $u'_{j-1}$ is activated by $B$ under both $L$ and $L'$, it directly becomes $\mathcal{E}_1^B$. 
Otherwise, if edge $(u'_{j-1}, u'_j)$ is live under $L$, the reason that node $u'_j$ is not activated by $B$ could only be that it is activated by $A$. It also means neither $\mathcal{E}_1^A$ nor $\mathcal{E}_1^B$ occurs so far. In this case, there must exist a path $P_2 := \{u''_0 \to u''_1 \to \dots \to u''_{l_2} = u'_j\}$ from one seed node of $A$, $u''_0$, to node $u'_j$, such that $u'_j$ is activated by $A$ under $L$ but not $L'$. 
This can only occur when there exists a node that is not activated by $A$ in $P_2$ under $L'$. We denote the first node from $u''_0$ to $u''_{l_2}$ (i.e., closest to $u''_0$) in $P_2$ that is not activated by $A$ under $L'$ as $u''_k$.
Notice that when $A>B$, we have $l_2 \leq j \leq l_1 < l_0$ as $A$ should arrive at $u'_j$ no later than $B$; when $B>A$, we have $l_2 < j \leq l_1 \leq l_0$ as $A$ should arrive at $u'_j$ earlier than $B$. 

Now let us consider the live/blocked state of edge $(u''_{k-1}, u''_k)$. We already know edge $(u''_{k-1}, u''_k)$ is live under $L$. If edge $(u''_{k-1}, u''_k)$ is blocked under $L'$, since $u''_{k-1}$ is activated by $A$ under both $L$ and $L'$, it directly becomes $\mathcal{E}_1^A$. 
Otherwise, if edge $(u''_{k-1}, u''_k)$ is live under $L'$, the reason that node $u''_k$ is not activated by $A$ could only be that it is activated by $B$. In this case, there must exist a path $P_3 := \{u'''_0 \to u'''_1 \to \dots \to u'''_{l_3} = u''_k\}$ from one seed node of $B$, $u'''_0$, to node $u''_k$, such that $u''_k$ is activated by $B$ under $L'$ but not $L$. This can only occur when there exists a node that is not activated by $B$ in $P_3$ under $L$. We denote the first node from $u'''_0$ to $u'''_{l_3}$ (i.e., closest to $u'''_0$) in $P_3$ that is not activated by $B$ under $L$ as $u'''_s$.
Notice that when $A>B$, we have $l_3 < k \leq l_2 \leq l_1$ as $B$ should arrive at $u''_k$ earlier than $A$; when $B>A$, we have $l_3 \leq k \leq l_2 < l_1$ as $B$ should arrive at $u''_k$ no later than $A$. 

Again, let us consider the live/blocked state of edge $(u'''_{s-1}, u'''_s)$. We already know edge $(u'''_{s-1}, u'''_s)$ is live under $L'$. If edge $(u'''_{s-1}, u'''_s)$ is blocked under $L$, since $u'''_{s-1}$ is activated by $B$ under both $L$ and $L'$, it directly becomes $\mathcal{E}_1^B$. Otherwise, if edge $(u'''_{s-1}, u'''_s)$ is live under $L$, similar to the discussion above, we need to consider a new path $P_4$ with length $l_4$ and $l_4 < l_2$.

For the case of the proportional tie-breaking rule, in addition to the edge coupling, we also need to couple the permutation order~\citep{chen2011influence} for each node in $L$ and $L'$. More specific, for each node $j$, we randomly permute all of its in-neighbors, then when we need to break a tie on $j$, we find its activated neighbor $i$ that is ordered first in the permutation order, and assign the state of $i$ as $j$'s state. Assuming the same permutation order in $L$ and $L'$, let us consider path $P_0$ and $P_1$ again. If $l_0 = l_1$, then $u_i$ must be $v$. If $\mathcal{E}_1^A$ does not occur in $P_0$, then the only neighbor of $v$ in $P_1$ must be ordered before the only neighbor of $v$ in $P_0$ in the permutation order on $v$. However, if $\mathcal{E}_1^B$ does not occur in $P_1$, with such permutation order, it is impossible that $v$ is activated by $A$ under $L$ but not $L'$. As a result, if neither $\mathcal{E}_1^A$ nor $\mathcal{E}_1^B$ occurs in path $P_0$ and $P_1$, we have $l_2\le l_1 < l_0$ in the case of the proportional tie-breaking rule.

To sum up, if neither $\mathcal{E}_1^A$ nor $\mathcal{E}_1^B$ occurs in path $P_0$ and $P_1$, we need to check whether they could occur in a new path $P_2$ shorter than $P_0$, and $P_3$ shorter than $P_1$. As a result, we only need to check whether $\mathcal{E}_1^A$ or $\mathcal{E}_1^B$ occurs in the path with only one edge. In that case, $\mathcal{E}_1^A$ or $\mathcal{E}_1^B$ occurs for sure. Thus, by induction, we conclude that at least one of $\mathcal{E}_1^A$ and $\mathcal{E}_1^B$ occurs when considering  any path with more than one edge, so $\mathcal{E}_1$ will occur only if at least one of $\mathcal{E}_1^A$ and $\mathcal{E}_1^B$ occurs.
\end{proof}

Now, let us consider the two events in $\mathcal{E}_1^A$ for a specific edge $e = (u, v_1)$. We find that the first event \{$u$ is activated by $A$ under both $L$ and $L'$\}, is independent of the second event \{edge $e$ is live under $L$ but not $L'$\}, since the live/blocked state of edge $e$ does not affect the activation of its tail node $u$. Also, for edge $e = (u, v_1)$, the probability of these two events can be written as
\begin{align}
    &\mathbb{P}\{\text{$u$ is activated by $A$ under $L$ and $L'$}\} = \mathbb{P}\{\text{$e$ is triggered by $A$ under $L$ and $L'$}\},\\
    &\mathbb{P}\{\text{$e$ is live under $L$ but not $L'$}\} = 
    \begin{cases}
        \mu_e - \mu'_e &\quad\text{if $\mu_e > \mu'_e$}\\
        0 &\quad\text{otherwise.} \\ 
     \end{cases}
\end{align}
As a result, we have:
\begin{equation}
    \mathbb{P}\{\mathcal{E}_1^A\} \leq \sum_{e:\,\mu_e > \mu'_e} \mathbb{P}\{\text{$e$ is triggered by $A$ under $L$ and $L'$}\} (\mu_e - \mu'_e)
\end{equation}
Since $\mathcal{E}_1^A$ and $\mathcal{E}_1^B$ are symmetric, we also have:
\begin{equation}
    \mathbb{P}\{\mathcal{E}_1^B\} \leq \sum_{e:\,\mu'_e > \mu_e} \mathbb{P}\{\text{$e$ is triggered by $B$ under $L$ and $L'$}\} (\mu'_e - \mu_e)
\end{equation}
Combining with Lemma.~\ref{lemma:E_1}, we have
\begin{equation}
    \mathbb{P}\{\mathcal{E}_1\} \leq \mathbb{P}\{\mathcal{E}_1^A\} + \mathbb{P}\{\mathcal{E}_1^B\}
\end{equation}

3) $f(S,\bm{w_1},v) - f(S,\bm{w_2},v) = -1$.\\
Similar to the previous case, this will occur only if there exists a path such that: under $L'$, $v$ can be activated by $A$ via this path, while under $L$, $v$ cannot be activated by $A$ via this path. We denote this event as $\mathcal{E}_{-1}$. We show that $\mathcal{E}_{-1}$ occurs only if at least one of $\mathcal{E}_{-1}^A$ and $\mathcal{E}_{-1}^B$ occurs.

$\mathcal{E}_{-1}^A$: There exists a path $u \to v_1 \to \dots \to v_d = v$ such that:\\
1. $u$ is activated by $A$ under both $L$ and $L'$\\
2. edge $(u, v_1)$ is live under $L'$ but not $L$

$\mathcal{E}_{-1}^B$: There exists a path $u' \to v_1' \to \dots \to v'_{d'} = v$ such that:\\
1. $u'$ is activated by $B$ under both $L$ and $L'$\\
2. edge $(u', v_1')$ is live under $L$ but not $L'$

Since they are symmetric with $\mathcal{E}_{1}^A$ and $\mathcal{E}_{1}^B$, following the same analysis, we can get
\begin{align}
    &\mathbb{P}\{\mathcal{E}_{-1}^A\} \leq \sum_{e:\,\mu'_e > \mu_e} \mathbb{P}\{\text{$e$ is triggered by $A$ under $L$ and $L'$}\} (\mu'_e - \mu_e)\\
    &\mathbb{P}\{\mathcal{E}_{-1}^B\} \leq \sum_{e:\,\mu_e > \mu'_e} \mathbb{P}\{\text{$e$ is triggered by $B$ under $L$ and $L'$}\} (\mu_e - \mu'_e)\\
    &\mathbb{P}\{\mathcal{E}_{-1}\} \leq \mathbb{P}\{\mathcal{E}_{-1}^A\} + \mathbb{P}\{\mathcal{E}_{-1}^B\}
\end{align}

Combining all cases together, we have:
\begin{align}
    \left|r_S^v(\bm{\mu}) - r_S^v(\bm{\mu'})\right| &= \left|\mathbb{E}_{\bm{X}}[f(S,L,v) - f(S,L',v)]\right|\nonumber\\
    &\leq \left|1\cdot\mathbb{P}\{\mathcal{E}_{1}\} + (-1)\cdot \mathbb{P}\{\mathcal{E}_{-1}\}\right|\nonumber\\
    &\leq \left|1\cdot\left(\mathbb{P}\{\mathcal{E}_{1}^A\}+\mathbb{P}\{\mathcal{E}_{1}^B\}\right) + (-1)\cdot \left(\mathbb{P}\{\mathcal{E}_{-1}^A\}+\mathbb{P}\{\mathcal{E}_{-1}^B\}\right)\right|\nonumber\\
    &\leq \sum_{e \in E} \mathbb{P}\{\text{$e$ is triggered by $A$ or $B$ under $L$ and $L'$}\} \left|\mu_e - \mu'_e\right|.\label{eq:TPM_r}
\end{align}
The last inequality above is due to: 
\begin{align*}
    |\mathbb{P}\{\mathcal{E}_{1}^A\}-\mathbb{P}\{\mathcal{E}_{-1}^B\}| \leq \sum_{e:\,\mu_e > \mu'_e} \mathbb{P}\{\text{$e$ is triggered by $A$ or $B$ under $L$ and $L'$}\} |\mu_e - \mu'_e|\\
    |\mathbb{P}\{\mathcal{E}_{1}^B\}-\mathbb{P}\{\mathcal{E}_{-1}^A\}| \leq \sum_{e:\,\mu'_e > \mu_e} \mathbb{P}\{\text{$e$ is triggered by $A$ or $B$ under $L$ and $L'$}\} |\mu_e - \mu'_e|
\end{align*}
Notice that Eq.\eqref{eq:TPM_r} could be relaxed to:
\begin{align}
    \left|r_S^v(\bm{\mu}) - r_S^v(\bm{\mu'})\right| &\leq \sum_{e\in E} \mathbb{P}\{\text{$e$ is triggered by $A$ or $B$ under $L$}\} \left|\mu_e - \mu'_e\right|\nonumber\\
    &\leq \sum_{e \in E} p_e^{S}(\bm{\mu}) \left|\mu_e - \mu'_e\right|.
\end{align}
\end{proof}

\section{Proof of Theorem \ref{thm:TS}} \label{appendix:TS}
\begin{proof}
We define $G^{(t)}$ as the feedback of OCIM in round $t$, which includes the outcomes of $X_i^{(t)}$ for all $i\in \tau_t$. We denote by $\mathcal{F}_{t-1}$ the history $(S^{(1)}, G^{(1)}, \cdots, S^{(t-1)}, G^{(t-1)})$ of observations available to the player when choosing an action $S^{(t)}$. For the Bayesian analysis, we assume the mean vector $\bm{\mu}$ follows a prior distribution $\mathcal{Q}$. In round $t$, given $\mathcal{F}_{t-1}$, we define the posterior distribution of $\bm{\mu}$  as $\mathcal{Q}^{(t)}$ (i.e., $\bm{\mu}^{(t)}\sim \mathcal{Q}^{(t)}$ where $\bm{\mu}^{(t)}$ is given in Alg.~\ref{alg:OCIM-TS}). As mentioned in Section~\ref{sec:TS}, OCIM-TS allows any benchmark offline oracles, including approximation oracles. We consider a general benchmark oracle $\mathcal{O}(S_B, \bm{\mu})$. As oracle $\mathcal{O}$ might be a randomized policy (e.g., an $(\alpha,\beta)$-approximation oracle with success probability $\beta$), we use a random variable $\omega \sim \Omega$ to represent all its randomness. In order to discuss the performance of OCIM-TS with oracle $\mathcal{O}$, we rewrite the Bayesian regret in Eq.\eqref{eq:BR} as
\begin{equation}\label{eq:generalBR}
        BayesReg(T) = \mathbb{E}_{\omega \sim \Omega, \bm{\mu}\sim \mathcal{Q}}  \left[\sum_{t=1}^T  \left(r_{\mathcal{O}(S_B^{(t)}, \bm{\mu})}(\bm{\mu}) - r_{\mathcal{O}(S_B^{(t)}, \bm{\mu_t)})}(\bm{\mu})\right) \right].
\end{equation}
Notice that $\mathcal{O}(S_B^{(t)}, \bm{\mu})$ is the action taken by the player if the true $\bm{\mu}$ is known, while $\mathcal{O}(S_B^{(t)}, \bm{\mu}_t)$ is the real action chosen by OCIM-TS. The original regret definition in Eq.\eqref{eq:BR} is a special case of Eq.\eqref{eq:generalBR} for an $(\alpha,\beta)$-approximation oracle, and will focus on this general form in this proof.

The key step to derive the Bayesian regret bound of OCIM-TS is to show that the conditional distributions of $\bm{\mu}$ and $\bm{\mu}_{t}$ given $\mathcal{F}_{t-1}$ are the same:
\begin{equation}\label{eq:CondDist}
    \mathbb{P}(\bm{\mu} = \cdot \mid \mathcal{F}_{t-1}) = \mathbb{P}(\bm{\mu}_t = \cdot \mid \mathcal{F}_{t-1}),
\end{equation}
which is true since we use Thompson sampling to update the posterior distribution of $\bm{\mu}$. With this finding, we consider the Bayesian regret in Eq.\eqref{eq:BR}: 
\begin{align}
\nonumber
    & BayesReg(T)\\
    =&\mathbb{E}_{\omega \sim \Omega}\left[ \sum_{t=1}^{T}\mathbb{E}_{\bm{\mu}\sim\mathcal{Q},\bm{\mu}_t\sim\mathcal{Q}_t}\left[r_{\mathcal{O}(S_B^{(t)}, \bm{\mu})}(\bm{\mu}) - r_{\mathcal{O}(S_B^{(t)}, \bm{\mu}_t)}(\bm{\mu})\right] \right] \\\label{eq:BR-mu}
    =&\mathbb{E}_{\omega \sim \Omega}\left[ \sum_{t=1}^{T}\mathbb{E}_{\mathcal{F}_{t-1}}\left[\mathbb{E}_{\bm{\mu}\sim\mathcal{Q},\bm{\mu}_t\sim\mathcal{Q}_t}\left[r_{\mathcal{O}(S_B^{(t)}, \bm{\mu})}(\bm{\mu}) - r_{\mathcal{O}(S_B^{(t)}, \bm{\mu}_t)}(\bm{\mu})\right] \mid \mathcal{F}_{t-1}\right]\right] \\\label{eq:BR-mu_t}
    =& \mathbb{E}_{\omega \sim \Omega}\left[ \sum_{t=1}^{T}\mathbb{E}_{\mathcal{F}_{t-1}}\left[\mathbb{E}_{\bm{\mu}\sim\mathcal{Q},\bm{\mu}_t\sim\mathcal{Q}_t}\left[r_{\mathcal{O}(S_B^{(t)}, \bm{\mu}_t)}(\bm{\mu}_t) - r_{\mathcal{O}(S_B^{(t)}, \bm{\mu}_{t})}(\bm{\mu})\right] \mid \mathcal{F}_{t-1}\right]\right] \\\label{eq:BR_last}
    =& \mathbb{E}\left[ \sum_{t=1}^{T} \left[r_{\mathcal{O}(S_B^{(t)}, \bm{\mu}_t)}(\bm{\mu}_t) - r_{\mathcal{O}(S_B^{(t)}, \bm{\mu}_{t})}(\bm{\mu})\right]\right],
\end{align}
where Eq.\eqref{eq:BR-mu_t} comes from applying Eq.\eqref{eq:CondDist} to Eq.\eqref{eq:BR-mu}. Let $S_t = \mathcal{O}(S_B^{(t)}, \bm{\mu}_t)$ and $\mathcal{C}_t=\{\bm{\mu}': |\mu'_i - \hat{\mu}_{i,t}| \le \rho_{i,t}, \forall i\}$, where $\rho_{i,t}=\sqrt{3\ln t/2T_{i,t-1}}$ and $T_{i,t-1}$ is the total number of times arm $i$ is played until round $t$. We define ${\Delta}_{S_t}=r_{S_t}(\bm{\mu}_t) - r_{S_t}(\bm{\mu})$ and $M=\sqrt{576\tilde{C}^2mK\ln T/T}$. By Eq.\eqref{eq:BR_last}, we have 
\begin{align}
\nonumber
    & BayesReg(T)\\
=&\mathbb{E}[\sum_{t=1}^T \Delta_{S_t}]\\\nonumber
\le & \underbrace{\mathbb{E}\left[\sum_{t=1}^T \Delta_{S_t} \mathbb{I}\{\Delta_{S_t} \ge M, \bm{\mu}_t \in \mathcal{C}_t, \bm{\mu} \in \mathcal{C}_t, \mathcal{N}_t^{\text{t}}\}\right]}_{(a)} + \underbrace{\mathbb{E}[\sum_{t=1}^T \Delta_{S_t}\mathbb{I}\{\bm{\mu}_t \notin \mathcal{C}_t\}] + \mathbb{E}[\sum_{t=1}^T\Delta_{S_t} \mathbb{I}\{\bm{\mu} \notin \mathcal{C}_t\}]}_{(b)} \\ \label{eq:BR3terms}
&+ \underbrace{\mathbb{E}[\sum_{t=1}^T\Delta_{S_t} \mathbb{I}\{\Delta_{S_t} \le M\}]}_{(c)} + \underbrace{\mathbb{E}[\sum_{t=1}^T\Delta_{S_t} \mathbb{I}\{\neg \mathcal{N}_t^{\text{t}}\}]}_{(d)}
\end{align}
We can bound these three terms separately. For term (a), 
when $\bm{\mu}_t \in \mathcal{C}_t, \bm{\mu} \in \mathcal{C}_t$, we could bound $|\mu_{i,t} - \mu_{i}| \le |\mu_{i,t} - \hat{\mu}_{i,t}| + |\mu_i - \hat{\mu}_{i,t}| \le 2\rho_{i,t}, \forall i$.
When $\Delta_{S_t}\geq M$ and $\mathcal{N}_t^{\text{t}}$ (Definition 7 in \citep{wang2017improving}) holds, by the proof of Lemma 5 in~\citep{wang2017improving}, we have $\Delta_{S_t}\le\sum_{i \in \tilde{S}_t}\kappa_{j_i,T}(M_i, N_{i,j_i, t-1})$ where $\tilde{S}_t$ is the set of arms triggered by $S_t$ and $\kappa_{j_i,T}(M_i, N_{i,j_i, t-1})$ is defined in~\citep{wang2017improving}. We have

\begin{align*}
(a) &= \mathbb{E}\left[\sum_{t=1}^T \Delta_{S_t} \mathbb{I}\{ \Delta_{S_t} \ge M, \bm{\mu}_t \in \mathcal{C}_t, \bm{\mu} \in \mathcal{C}_t, \mathcal{N}_t^{\text{t}}\}\right]\\
&\le \mathbb{E}\left[ \sum_{t=1}^T \sum_{i \in \tilde{S}_t}\kappa_{j_i,T}(M_i, N_{i,j_i, t-1}) \right]\\
&\le \mathbb{E}\left[ \sum_{i \in [m]} \sum_{j=1}^{+\infty}\sum_{s=0}^{N_{i,j,T}-1} \kappa_{j,T}(M,s)\right]\\
\\
&\le 4\tilde{C}m + \sum_{i \in [m]} \frac{576 \tilde{C}^2 K \ln T}{M}
\end{align*}

For term (b), we can observe that $\mathbb{E}[\mathbb{I}\{\bm{\mu}\in \mathcal{C}_t\}|\mathcal{F}_{t-1}]=\mathbb{E}[\mathbb{I}\{\bm{\mu}_t\in \mathcal{C}_t\}|\mathcal{F}_{t-1}]$, since  $\mathcal{C}_t$ is determined given $\mathcal{F}_{t-1}$, and given $\mathcal{F}_{t-1}$, $\bm{\mu}$ and $\bm{\mu}_t$ follow the same distribution. Since $\max_{S_t} \Delta_{S_t} \le n$, we have

\begin{align*}
(b) &= \mathbb{E}[\sum_{t=1}^T \Delta_{S_t}\mathbb{I}\{\bm{\mu}_t \notin \mathcal{C}_t\}] + \mathbb{E}[\sum_{t=1}^T\Delta_{S_t} \mathbb{I}\{\bm{\mu} \notin \mathcal{C}_t\}]\\
&\le n \left( \mathbb{E}[\sum_{t=1}^T \mathbb{I}\{\bm{\mu}_t \notin \mathcal{C}_t\}] + \mathbb{E}[\sum_{t=1}^T \mathbb{I}\{\bm{\mu} \notin \mathcal{C}_t\}]\right)\\
&= n \left( \mathbb{E}\left[\sum_{t=1}^T \mathbb{E}\left[\mathbb{I}\{\bm{\mu}_t \notin \mathcal{C}_t\}|\mathcal{F}_{t-1}\right]\right]\right) + n \left( \mathbb{E}\left[\sum_{t=1}^T \mathbb{E}\left[\mathbb{I}\{\bm{\mu} \notin \mathcal{C}_t\}|\mathcal{F}_{t-1}\right]\right]\right)\\
&= 2n \left( \mathbb{E}\left[\sum_{t=1}^T \mathbb{E}\left[\mathbb{I}\{\bm{\mu} \notin \mathcal{C}_t\}|\mathcal{F}_{t-1}\right]\right]\right) \\
&=2n \left( \mathbb{E}\left[\sum_{t=1}^T \mathbb{I}\{\bm{\mu} \notin \mathcal{C}_t\}\right]\right)\\
&= 2n \left( \sum_{t=1}^T \mathbb{P}\left(\bm{\mu} \notin \mathcal{C}_t\right)\right) \\
&\le \frac{2\pi^2 mn}{3}
\end{align*}

For term (c), we can bound it by
\begin{align*}
(c) =\mathbb{E}[\sum_{t=1}^T\Delta_{S_t} \mathbb{I}\{\Delta_{S_t} \le M\}] \le TM
\end{align*}

For term (d), similar to Eq.(20) in~\citep{wang2017improving}, we have
\begin{align*}
(d) =\mathbb{E}[\sum_{t=1}^T\Delta_{S_t} \mathbb{I}\{\neg \mathcal{N}_t^{\text{t}}\}] \le \frac{\pi^2}{6} \cdot \sum_{i\in[m]}j_{\max}^i \cdot n
\end{align*}

Combine them together, we have
\begin{align}\nonumber
BayesReg(T) \le& 4\tilde{C}m + \sum_{i \in [m]} \frac{576 \tilde{C}^2 K \ln T}{M} + \frac{2\pi^2 mn}{3} + TM + \frac{\pi^2}{6} \cdot \sum_{i\in[m]}j_{\max}(M) \cdot n
\end{align}
where $j_{\max}(M) = \left\lceil \log_2 \frac{2\tilde{C}K}{M}\right\rceil_0$. Take $M=\sqrt{576\tilde{C}^2mK\ln T/T}$, we finally get finally get the Bayesian regret bound of TS-OCIM:
\begin{align}\textstyle
\nonumber
    &BayesReg(T) \leq 12\tilde{C}\sqrt{mKT\ln T}+ 2\tilde{C}m +\left(\left\lceil \log_2 \frac{T}{18\ln T}\right\rceil_0 + 4\right) \cdot \frac{\pi^2}{6} \cdot n \cdot m.
\end{align}
\end{proof}

\section{Proof of Theorem \ref{thm:OIFU}}\label{appendix:TS:OFU}
\begin{proof}
We first introduce the following definitions to assist our analysis.
Recall that $\bm{\mathcal{S}}^{(t)}$ is the action space in round $t$. We define the reward gap $\Delta_{S}^{(t)}{=}\max(0, \alpha \cdot \text{opt}^{(t)}(\bm{\mu}) - r_S(\bm{\mu}))$ for all actions $S \in \bm{\mathcal{S}}^{(t)}$. 
For each base arm $i$, we define $\Delta^{i,T}_{\max} = \max_{t\in[T]}\sup_{S\in\bm{\mathcal{S}}^{(t)}:p_i^S(\bm{\mu}) > 0, \Delta_{S}^{(t)} > 0} \Delta_{S}^{(t)}$ and  $\Delta^{i,T}_{\min} = \min_{t\in[T]}\inf_{S\in\mathcal{S}^{(t)}:p_i^S(\bm{\mu}) > 0, \Delta_{S}^{(t)} > 0} \Delta_{S}^{(t)}$. 
If there is no action $S$ such that $p_i^S(\bm{\mu}) > 0$ and $\Delta_{S}^{(t)} > 0$, we define $\Delta^{i,T}_{\max} = 0$ and $\Delta^{i,T}_{\min} = +\infty$. 
We define $\Delta^{(T)}_{\max} = \max_{i \in [m]}\Delta^{i,T}_{\max}$ and $\Delta^{(T)}_{\min} = \min_{i \in [m]}\Delta^{i,T}_{\min}$. Let $\widetilde{S} = \{i\in [m] \mid p_i^S(\bm{\mu}) > 0\}$ be the set of arms that can be triggered by $S$. We define $K = \max_{S\in\bm{\mathcal{S}}^{(t)}}|\widetilde{S}|$ as the largest number of arms could be triggered by a feasible action. We use $\lceil x \rceil_0$ to denote $\max\{\lceil x \rceil, 0\}$.
If $\Delta^{(T)}_{\min} > 0$, we provide the distribution-dependent bound of the OCIM-OFU algorithm.
\begin{align}\textstyle
    \text{Reg}_{\alpha,\beta}(T; \bm{\mu}) \leq \sum_{i \in [m]} \frac{576\tilde{C}^2K\ln T}{\Delta^{i,T}_{\min}}  + 4\tilde{C}m +\sum_{i \in [m]} \left(\left\lceil \log_2 \frac{2\tilde{C}K}{\Delta^{i,T}_{\min}}\right\rceil_0 + 2\right)\cdot \frac{\pi^2}{6}\cdot\Delta^{(T)}_{\max}.\nonumber
\end{align}

To prove the distribution-dependent and the distribution-independent regret bounds, we generally follow the proof of Theorem 1 in~\cite{wang2017improving}. However, since we extend the original CMAB problem to a new contextual setting where the action space $\bm{\mathcal{S}}^{(t)}$ is the context, and monotonicity does not hold in the OCIM setting, we need to modify their analysis to tackle these changes. We introduce a positive real number $M_i$ for each arm $i$ and define $M_{S^{(t)}} = \max_{i\in\tilde{S}^{(t)}} M_i$.
Define
$$\kappa_{j, T}(M, s) = \begin{cases}
4\cdot 2^{-j}\tilde{C}, &\mbox{if } s=0,\\
2\tilde{C}\sqrt{\frac{72\cdot 2^{-j} \ln T}{s}}, &\mbox{if } 1\le s\le \ell_{j, T}(M),\\
0, &\mbox{if } s \ge \ell_{j, T}(M)+1,
\end{cases}$$
where
$$\ell_{j, T}(M)=\left\lfloor\frac{288\cdot 2^{-j} \tilde{C}^2 K^2 \ln T}{M^2}\right\rfloor.$$
Let $\mathcal{N}_{t}^{\text{s}}$ be the event that at the beginning of round $t$, for every arm $i \in [m]$, $|\hat{\mu}_{i,t} - \mu_{i}| \leq 2\rho_{i,t}$. Let $\mathcal{H}_t$ be the event that at round $t$ oracle $\widetilde{\mathcal{O}}$ outputs a solution, $S^{(t)} = \{S_A^{(t)}, S_B^{(t)}\}$ and $\bm{\mu}^{(t)} = (\mu_1^{(t)},\dots,\mu_m^{(t)})$, such that $r_{S^{(t)}}(\bm{\mu}^{(t)}) < \alpha \cdot r_{S^{*}}(\bm{\mu}^*)$, i.e., oracle $\widetilde{\mathcal{O}}$ fails to output an $\alpha$-approximate solution. Let $\mathcal{N}_{t}^{\text{t}}$ be the event that the triggering is nice at the beginning of round $t$ (Definition 7 in~\citep{wang2017improving}). The following lemma explains how $\kappa$ contributes to the regret.
\begin{lemma}\label{lemma:kappa}
	For any vector $\{M_i\}_{i\in[m]}$ of positive real numbers and $1\le t \le T$, if $\{\Delta^{(t)}_{S^{(t)}} \ge M_{S^{(t)}}\}, \lnot \mathcal{H}_t, \mathcal{N}_{t}^{\text{s}}$ and $\mathcal{N}_{t}^{\text{t}}$ hold,
	we have
	$$\Delta^{(t)}_{S^{(t)}} \le \sum_{i\in \tilde{S}^{(t)}} \kappa_{j_i, T}(M_i, N_{i, j_i, t-1}),$$
	where $j_i$ is the index of the TP group with $S^{(t)}\in \mathcal{S}_{i, j_i}$ (see Definition 5 in~\citep{wang2017improving}).
\end{lemma}
\begin{proof}
By $\mathcal{N}_{t}^{\text{s}}$ and $0 \leq \mu_i \leq 1$ for all $i\in [m]$, we have
\begin{equation}
    \forall i\in [m], \mu_{i} \in c_{i,t} = \left[(\hat{\mu}_{i,t} - \rho_{i,t})^{0+}, (\hat{\mu}_{i,t} + \rho_{i,t} )^{1-}\right].
\end{equation}
It means that we have the correct estimated range of $\mu_i$ for all $i \in [m]$ at round $t$. Combining with $\neg \mathcal{H}_t$ for the offline oracle $\widetilde{\mathcal{O}}$, we have
\begin{equation}
    r_{S^{(t)}}(\bm{\mu}^{(t)}) \geq \alpha \cdot r_{S^{*}}(\bm{\mu}^*) \geq  \alpha \cdot \text{opt}^{(t)}(\bm{\mu}) = r_{S^{(t)}}(\bm{\mu}) + \Delta^{(t)}_{S^{(t)}}.
\end{equation}
By the TPM condition in Theorem.~\ref{thm:TPM}, we have
\begin{equation}
\Delta^{(t)}_{S^{(t)}} \leq  r_{S^{(t)}}(\bm{\mu}^{(t)}) - r_{S^{(t)}}(\bm{\mu})  \leq \tilde{C} \sum_{i\in [m]} p_i^{S^{(t)}}(\bm{\mu}) |\mu_{i}^{(t)} - \mu_{i}|.
\end{equation}
We want to bound $\Delta^{(t)}_{S^{(t)}}$ by bounding $p_i^{S^{(t)}}(\bm{\mu})|\mu_{i}^{(t)} - \mu_{i}|$. We first perform a transformation. Since $\Delta^{(t)}_{S^{(t)}} \ge M_{S^{(t)}}$, we have $\tilde{C} \sum_{i\in [m]} p_i^{S^{(t)}}(\bm{\mu}) |\mu_{i}^{(t)} - \mu_{i}| \ge \Delta^{(t)}_{S^{(t)}} \ge M_{S^{(t)}}$. Then we have
\begin{align}
\Delta^{(t)}_{S^{(t)}}
&\le \tilde{C} \sum_{i\in [m]} p_i^{S^{(t)}}(\bm{\mu}) |\mu_{i}^{(t)} - \mu_{i}|\nonumber\\
&\le -M_{S^{(t)}} + 2\tilde{C} \sum_{i\in [m]} p_i^{S^{(t)}}(\bm{\mu}) |\mu_{i}^{(t)} - \mu_{i}|\nonumber\\
&\le 2\tilde{C}\sum_{i\in [m]} \left[p_i^{S^{(t)}}(\bm{\mu}) |\mu_{i}^{(t)} - \mu_{i}| - \frac{M_i}{2\tilde{C}K}\right]. \label{eq:TPMkappa.transform}
\end{align}
In fact, if $\mathcal{N}_{t}^{\text{s}}$ holds and $\mu_{i}^{(t)} \in c_{i,t}$ for all $i \in [m]$,
\begin{equation}
    \forall i\in [m], |\mu_{i}^{(t)} - \mu_{i}| \leq 2\rho_{i,t} = 2\sqrt{\frac{3\ln t}{2T_{i, t-1}}}.
\end{equation}
So far, all requirements on bounding $\Delta_{S_t}$ in Lemma 5 from~\citep{wang2017improving} are also satisfied by $\Delta^{(t)}_{S^{(t)}}$ of OCIM-OFU algorithm in the OCIM setting without monotonicity. We can then follow the same steps to bound $p_i^{S^{(t)}}(\bm{\mu})|\mu_{i}^{(t)} - \mu_{i}|$ in the two cases they considered (combining their Eq.(11)-(13)) and get
\begin{align}
\Delta^{(t)}_{S^{(t)}}
&\le 2\tilde{C}\sum_{i\in [m]} \left[p_i^{S^{(t)}}(\bm{\mu}) |\mu_{i}^{(t)} - \mu_{i}| - \frac{M_i}{2\tilde{C}K}\right]\nonumber\\
&\le \sum_{i\in \tilde{S}^{(t)}} \kappa_{j_i, T}(M_i, N_{i, j_i, t-1}). \nonumber
\end{align}
\end{proof}
With Lemma~\ref{lemma:kappa}, we can follow the proof of Lemma 6 in~\citep{wang2017improving} to bound the regret when $\{\Delta^{(t)}_{S^{(t)}} \ge M_{S^{(t)}}\}, \lnot \mathcal{H}_t, \mathcal{N}_{t}^{\text{s}}$ and $\mathcal{N}_{t}^{\text{t}}$ hold.
\begin{equation}
    Reg(\{\Delta^{(t)}_{S^{(t)}}\geq M_{S^{(t)}}\} \land \lnot \mathcal{H}_t \land \mathcal{N}_{t}^{\text{s}} \land \mathcal{N}_{t}^{\text{t}})
\le \sum_{i\in [m]} \frac{576\tilde{C}^2K\ln T}{M_i} + 4\tilde{C}m.
\end{equation}

Finally, we take $M_i = \Delta^{i,T}_{\min}$. If $\Delta^{(t)}_{S^{(t)}}< M_{S^{(t)}}$, then $\Delta^{(t)}_{S^{(t)}} = 0$, since we have either $\tilde{S}^{(t)} =  \emptyset$ or $\Delta^{(t)}_{S^{(t)}} < M_{S^{(t)}} \leq M_i$ for some $i \in \tilde{S}^{(t)}$. Thus, no regret is accumulated when $\Delta^{(t)}_{S^{(t)}}< M_{S^{(t)}}$. Following Eq.(17)-(21) in~\citep{wang2017improving}, we can derive the distribution-dependent regret bound
\begin{align}\textstyle
    \text{Reg}_{\alpha,\beta}(T; \bm{\mu}) \leq \sum_{i \in [m]} \frac{576\tilde{C}^2K\ln T}{\Delta^{i,T}_{\min}}  + 4\tilde{C}m +\sum_{i \in [m]} \left(\left\lceil \log_2 \frac{2\tilde{C}K}{\Delta^{i,T}_{\min}}\right\rceil_0 + 2\right)\cdot \frac{\pi^2}{6}\cdot\Delta^{(T)}_{\max}.
\end{align}
To derive the distribution-independent bound, we take $M_i=M=\sqrt{(576\tilde{C}^2mK\ln T)/T}$, follow Eq.(23) in~\citep{wang2017improving} and get
\begin{align}\textstyle
    \text{Reg}_{\alpha,\beta}(T; \bm{\mu}) \leq 12\tilde{C}\sqrt{mKT\ln T}+ 2\tilde{C}m + \left(\left\lceil \log_2 \frac{T}{18\ln T}\right\rceil_0 + 2\right)\cdot \frac{\pi^2}{6}\cdot n \cdot m.
\end{align}
\end{proof}

\section{Computational Efficiency of OCIM-OFU}
\subsection{Proof of Theorem \ref{thm:sharpP}}
\begin{proof}
In order to prove Theorem \ref{thm:sharpP}, we first introduce a new optimization problem denoted as $P_{1}$: given $S$, the new problem aims to find the optimal $\mu_i$ for one edge $i$ to maximize $r_S(\bm{\mu})$, while fixing the values of all others. The following lemma shows it is \#P-hard.
\begin{restatable}{lemma}{lemSharpP}\label{lem:SharpP}
Given $S$ and fixing $\mu_e$ for all $e \neq i$, finding the optimal $\mu_i \in c_i$ for one edge $i$ that maximizes $r_S(\bm{\mu})$ is \#P-hard.
\end{restatable}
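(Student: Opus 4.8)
The plan is to exploit the fact that $r_S(\bm{\mu})$ is affine in each single coordinate $\mu_i$ when all other coordinates are held fixed, so that the single-edge optimization collapses to a comparison of two influence-spread evaluations, each of which is already \#P-hard. Thus the hardness is inherited from the hardness of evaluating the spread itself, not from the optimization structure.

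First I would establish the affineness. Conditioning on the live/blocked state of edge $i$ in the live-edge graph, and using that the states of all other edges $e\neq i$ are sampled independently with their fixed probabilities $\mu_e$, we can write
\begin{equation}
 r_S(\bm{\mu}) = \mu_i\cdot \mathbb{E}[R(S,X)\mid X_i = 1] + (1-\mu_i)\cdot \mathbb{E}[R(S,X)\mid X_i = 0] = a_i\,\mu_i + b_i,
\end{equation}
where $a_i$ and $b_i$ are constants independent of $\mu_i$ (both conditional expectations range only over the fixed edges). Hence $r_S$ is linear in $\mu_i$ over $c_i=[l_i,u_i]$, its maximum is attained at an endpoint, and solving the single-edge problem amounts to evaluating $r_S$ at $l_i$ and at $u_i$ and returning the larger value.

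Next, the reduction. I would reduce from the problem of computing the influence spread of $A$, i.e.\ evaluating $r_S(\bm{\mu})$ at a fixed point, which is \#P-hard: taking $S_B=\emptyset$ degenerates the homogeneous CIC model to the classical IC model, whose spread computation is known to be \#P-hard (cf.\ \cite{chen2013information}). Given a hard IC instance $(G_0, S_A, \bm{\mu}^0)$ for which computing $\sigma := \sigma_A(S_A,\emptyset)$ is \#P-hard, I would build an instance of the single-edge problem by adding a fresh root $r$ as the sole $A$-seed, a fresh node $w$, the target edge $i=(r,w)$ with interval $c_i=[0,1]$, and probability-$1$ edges $(w,a)$ for every $a\in S_A$, keeping every edge of $G_0$ at its fixed probability in $\bm{\mu}^0$. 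Conditioning on edge $i$ gives
\begin{equation}
 r_S(\bm{\mu}) = \mu_i\,(2+\sigma) + (1-\mu_i)\cdot 1 = 1 + \mu_i\,(1+\sigma),
\end{equation}
so the optimal value over $c_i=[0,1]$ equals $2+\sigma$, attained at $\mu_i=1$. An oracle for the single-edge optimization therefore yields $\sigma$, establishing \#P-hardness.

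The main obstacle is to pin down the semantics of ``finding the optimal $\mu_i$'' and to confirm that the reduction recovers a genuinely \#P-hard quantity. Because of the affineness, the hardness lives in the optimal objective \emph{value} rather than in the maximizer (in the gadget above the maximizer is trivially the upper endpoint), so I would state the lemma as solving the single-variable optimization and returning its optimal value, which the affineness makes equivalent to two spread evaluations. A cleaner but weaker variant simply takes a degenerate interval $c_i=\{\mu_i^0\}$, for which the optimal value is exactly $r_S(\bm{\mu}^0)$; the nondegenerate gadget above is what I would use to show that the hardness persists for proper intervals, and to make the reduction self-contained I would rely on the established \#P-hardness of IC spread computation rather than re-deriving it from a canonical counting problem.
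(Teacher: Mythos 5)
Your affineness observation is correct and matches the paper's own Lemma~\ref{lem:UpperLower}, but your reduction proves a different (and weaker) statement than the lemma asserts. The lemma is about \emph{finding the optimal $\mu_i$}, i.e., producing the maximizer, which is exactly what the offline oracle $\widetilde{\mathcal{O}}$ in OCIM-OFU must output (it returns $\bm{\mu}^{\widetilde{\mathcal{O}}}$ and $S^{\widetilde{\mathcal{O}}}$, not the optimal value). In your gadget the maximizer is trivially the upper endpoint $\mu_i=1$ regardless of $\sigma$, because the coefficient $1+\sigma$ is always positive; an oracle that returns only the argmax therefore reveals nothing about $\sigma$, and your reduction collapses. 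You recognize this and propose to restate the lemma as returning the optimal \emph{value} --- but that restated claim is essentially vacuous (as your degenerate-interval remark shows, it is just spread evaluation), and it does not support the downstream conclusion of Theorem~\ref{thm:sharpP}: if the maximizer were easy to find while only the value were hard, one could still implement $\widetilde{\mathcal{O}}$, and the claimed hardness of the offline problem would not follow.

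The missing idea, which is the crux of the paper's proof, is to make the \emph{identity of the optimal endpoint} information-bearing. The paper builds a gadget with an auxiliary edge of tunable constant probability $\gamma$ so that the linear coefficient of $\mu_1$ equals $\frac{1}{2}\bigl[(1-\gamma)\,h_A(G_1,S,v)-\gamma\bigr]$; here the sign of the coefficient, hence which endpoint of $c_1$ is optimal, encodes the comparison $h_A \gtrless \gamma/(1-\gamma)$. A single argmax query is thus a threshold test on the \#P-hard quantity $h_A$, and since all probabilities in $G_0$ are $1/2$, $h_A$ is a multiple of $2^{-m}$, so $O(m)$ queries with binary search over $\gamma$ recover $h_A$ exactly. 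Competition (the $B$-seed) is essential to this construction --- it is what allows the coefficient to be negative at all --- whereas your gadget, having no $B$-interference on the target edge, can never produce a sign ambiguity. To repair your argument you would need to add such a competing path so that which endpoint is optimal depends on the unknown spread, and then extract that quantity via repeated threshold queries.
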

\begin{proof}
We prove the hardness of this optimization problem via a reduction from the influence computation problem. We first consider a general graph $G_0$ with $n$ nodes and $m$ edges, where all influence probabilities on edges are set to $1/2$. Given $S_A$, computing the influence spread of $A$ in such a graph is \#P-hard. Notice that there is no seed set of $B$ in $G_0$. Now let us take one node $v$ in $G_0$ and denote its activation probability by $A$ as $h_A(G_0, S_A, v)$. Actually, computing $h_A(G_0, S_A, v)$ is also \#P-hard and we want to show that it can be reduced to our optimization problem in polynomial time. 
\begin{figure}[H]
    \centering
    \includegraphics[width=0.6\columnwidth]{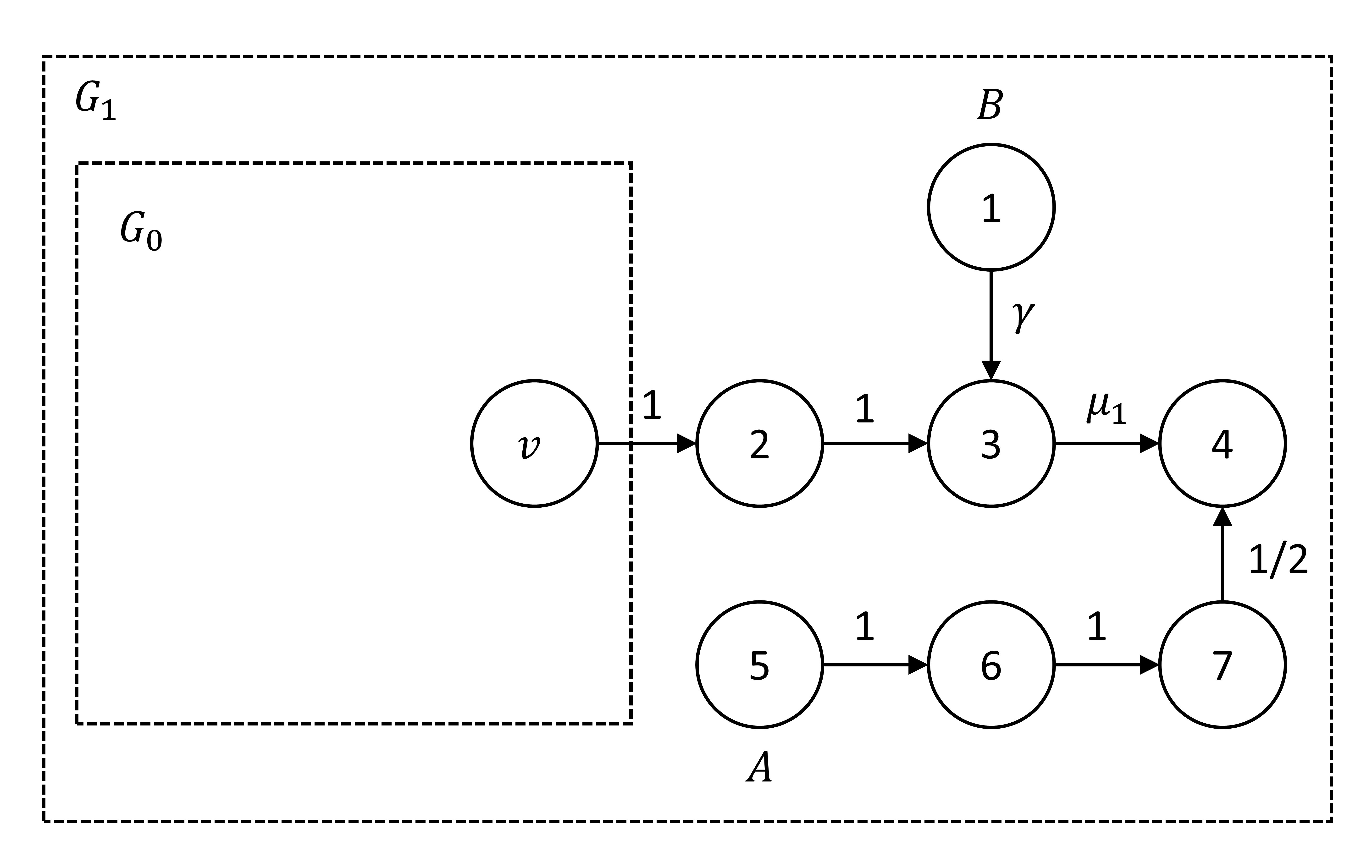}
    \caption{Construction of $G_1$ based on $G_0$}
    \label{fig:sharpP}
\end{figure}
We first construct a new graph $G_1$ based on $G_0$. For $G_1$, we keep $G_0$ and $S_A$ unchanged, then add several nodes and edges as shown in Fig.~\ref{fig:sharpP}. We add node $1$ to the seed set of $B$ and node $5$ to the seed set of $A$, so the joint action $S = \{S_A \cup\{5\}, S_B=\{1\}\}$. In this new graph $G_1$, we consider the optimization problem of finding the optimal $\mu_1$ (influence probability on edge $(3,4)$) within its range $c_1$ that maximizes $r_S(\bm{\mu})$. Notice that the influence probability $\gamma$ on edge $(1,3)$ is a constant and $\mu_1$ would only affect the activation probability of node $4$. 
We denote the activation probability by $A$ of node $4$ as $h_A(G_1, S, 4)$. In order to maximize $r_S(\bm{\mu})$, we only need to maximize $h_A(G_1, S, 4)$. It can be written as:
\begin{equation}\label{eq:inf_4}
    h_A(G_1, S, 4) = \frac{1}{2}\Big[(1-\gamma) \cdot h_A(G_1, S, v) - \gamma\Big]\cdot \mu_1 + \frac{1}{2}.
\end{equation}
It is easy to see $h_A(G_1, S, 4)$ has a linear relationship with $\mu_1$, so the optimal $\mu_1$ could only be either the lower or upper bound of its range $c_1$. Assuming we can solve the optimization problem of finding the optimal $\mu_1$, then we can determine the sign of $\mu_1$'s coefficient in Eq.\eqref{eq:inf_4}: if the optimal $\mu_1$ is the upper bound value in $c_1$, we have $(1-\gamma) \cdot h_A(G_1, S, v) - \gamma \geq 0$; otherwise, $(1-\gamma) \cdot h_A(G_1, S, v) - \gamma < 0$. It means we can answer the question that whether $h_A(G_1, S, v)$ is larger (or smaller) than $\frac{\gamma}{1-\gamma}$. Notice that $h_A(G_0, S_A, v) = h_A(G_1, S, v)$, so we can manually change the value of $\gamma$ to check whether $h_A(G_0, S_A, v)$ is larger (or smaller) than $x = \frac{\gamma}{1-\gamma}$ for any $x \in [0,1]$, Recall that all edge probabilities in $G_0$ are set to $1/2$, so the highest precision of $h_A(G_0, S_A, v)$ should be $2^{-m}$. Hence, we can use a binary search algorithm to find the exact value of $h_A(G_0, S_A, v)$ in at most $m$ times. It means computing the activation probability of $v$ in $G_0$ can be reduced to the optimization problem of finding the optimal $\mu_1$ in $G_1$, which completes the proof.
\end{proof}
We then show that $P_{1}$ is a special case of Eq.\eqref{eq:newopt}. The main idea is to relax the constraints $|S_A| \leq k, S = \{S_A, S_B\}$ in Eq.\eqref{eq:newopt} and show that it can find the optimal $\bm{\mu}$ for any given $S$. Consider a graph $G$ with $n$ nodes and a given seed set $S=\{S_A, S_B\}$. We construct a new graph $G'$ by manually add additional $n+1$ nodes pointing from each seed node in $S_A$. If we can solve the optimization problem Eq.\eqref{eq:newopt} in the new graph $G'$, since $S_A$ must be the optimal seed set of $A$ and the added nodes will not affect the prorogation in $G$, we will also find the optimal $\mu_i$'s in the original graph $G$ for the given $S$. Then, it is easy to see $P_{1}$ is a special case of Eq.\eqref{eq:newopt} since $P_{1}$ only find the optimal $\mu_i$ for one edge $i$. With Lemma~\ref{lem:SharpP}, we know Eq.\eqref{eq:newopt} is also \#P-hard.

\end{proof}

\subsection{Non-submodularity of $g(S)$}
In Section \ref{sec:OCIM-OFU}, we introduce $g(S) = \max_{\bm{\mu}} r_S(\bm{\mu})$, which is an upper bound function of $r_S(\bm{\mu})$ for each $S$. If $g(S)$ is submodular over $S$, we can use a greedy algorithm on $g(S)$ to find an approximate solution. However, the following example in Fig.~\ref{fig:non_submodular} shows that $g(S)$ is not submodular.
\begin{figure}[H]
    \centering
    \includegraphics[width=0.5\columnwidth]{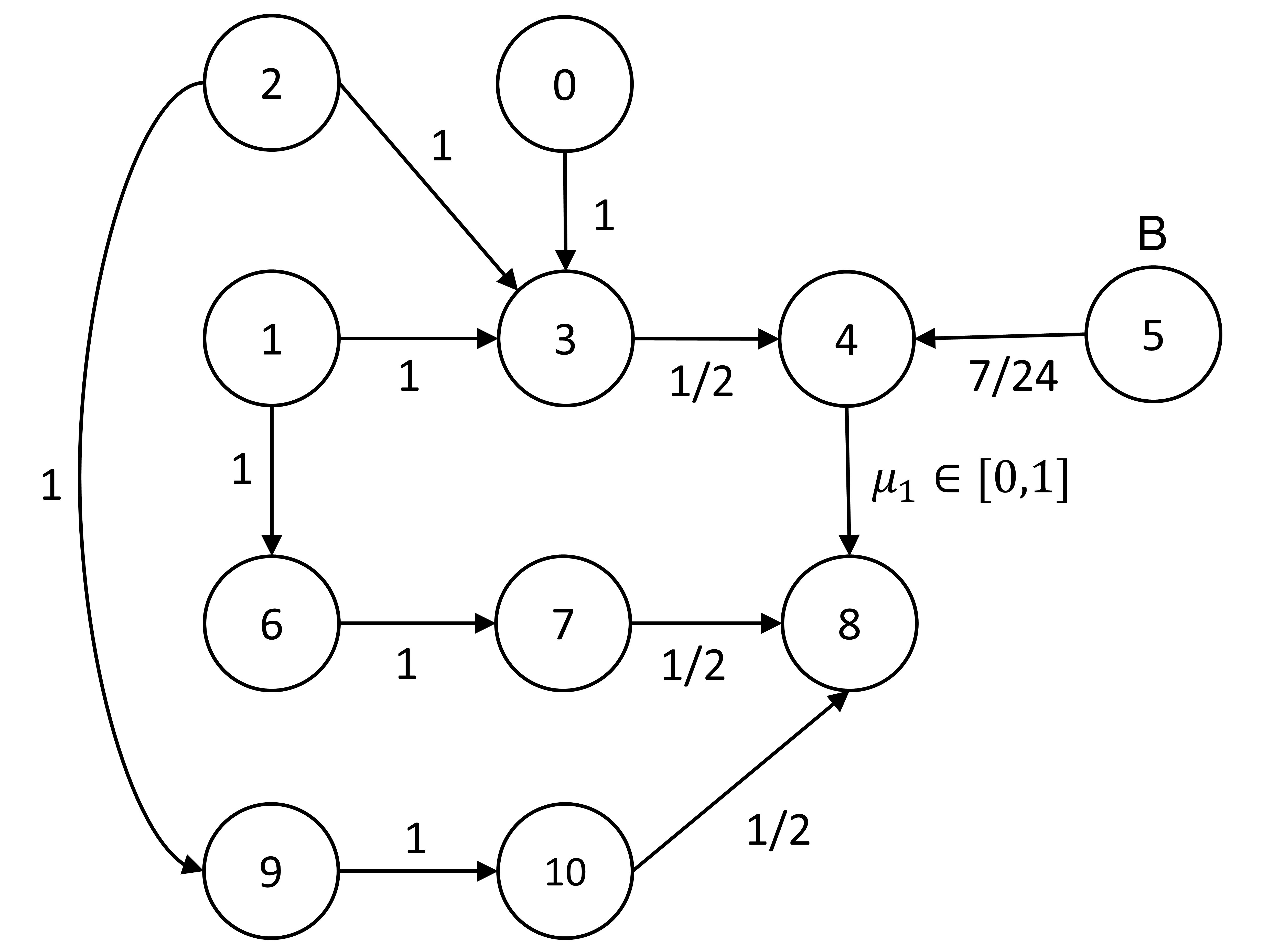}
    \caption{Example showing that $g(S)$ is not submodular}
    \label{fig:non_submodular}
\end{figure}
In Fig.~\ref{fig:non_submodular}, the numbers attached to edges are influence probabilities. Only the influence probability of edge $(4,8)$ is a variable and we denote it as $\mu_1$. We assume $\mu_1 \in [0,1]$ and $S_B = \{5\}$. Let us consider some choices of $S_A$. When $S_A$ is chosen as $\{0\}, \{0,1\}$ or $\{0,2\}$, the optimal $\mu_1$ that maximizes $r_S(\bm{\mu})$ is 1; when $S_A$ is chosen as $\{0,1,2\}$, the optimal $\mu_1$ that maximizes $r_S(\bm{\mu})$ is 0. Based on this observation, we can calculate $g(S)$ (assuming $S_B = \{5\}$):
\begin{align*}
    &g(S_A = \{0\}) = 2 + \frac{17}{24},\nonumber\\
    &g(S_A = \{0,1\}) = 5 + \frac{17}{24} \times \frac{4}{5},\nonumber\\
    &g(S_A = \{0,2\}) = 5 + \frac{17}{24} \times \frac{4}{5},\nonumber\\
    &g(S_A = \{0,1,2\}) = 8 +  \frac{17}{24} \times \frac{1}{2} + \frac{3}{4}.\nonumber\\
\end{align*}
Thus we have
\begin{equation}
    g(S_A = \{0,1\}) + g(S_A = \{0,2\}) < g(S_A = \{0\}) + g(S_A = \{0,1,2\}),
\end{equation}
which is contrary to submodularity.

\subsection{Bipartite Graph}\label{sec:compute_bipart}
We consider a weighted bipartite graph $G = (L,R,E)$ where each edge $(u, v)$ is associated with a probability $p(u, v)$. Given the competitor's seed set $S_B \subseteq L$, we need to choose $k$ nodes from $L$ as $S_A$ that maximizes the expected number of nodes activated by $A$ in $R$, where a node $v \in R$ can be activated by a node $u \in L$ with an independent probability of $p(u, v)$. As mentioned before, if $A$ and $B$ are attempting to activate a node in $L$ at the same time, the result will depend on the tie-breaking rule. If all edge probabilities are fixed, i.e., $\bm{\mu}$ is fixed, $r_{S}(\bm{\mu})$ is still submodular over $S_A$, so we can use a greedy algorithm as a $(1-1/e, 1)$-approximation oracle $\mathcal{O}_{\text{greedy}}$. 
Based on it, let us discuss the new offline optimization problem in Eq.\eqref{eq:newopt} under our two tie-breaking rules: (1) $A>B$: since $B$ will never influence nodes in $R$ earlier than $A$ in bipartite graphs, and $A$ will always win the competition, from $A$'s perspective, we can ignore $S_B$ to choose $S_A$. In this case, all edge probabilities should take the maximum values: for all $i\in E$, $\mu_i$ equals to the upper bound of $c_i$, and  we then use the oracle $\mathcal{O}_{\text{greedy}}$ to find $S_A$. 
(2) $B>A$: since $A$ will never influence nodes in $R$ earlier than $B$ in bipartite graphs, and $B$ will always win the competition, all out-edges of $S_B$, denoted as $E_{S_B}$, should take the minimum probabilities to maximize the influence spread of $A$. 
All the other edges in $E \backslash E_{S_B}$ should take the maximum probabilities. 
Formally, for all $i \in E_{S_B}$, $\mu_i$ equals to the lower bound of $c_i$; for all $i \in E \backslash E_{S_B}$, $\mu_i$ equals to the upper bound of $c_i$. We then use the oracle $\mathcal{O}_{\text{greedy}}$ to find $S_A$. To sum up, in bipartite graphs, $r_{S}(\bm{\mu})$ is optimized by pre-determining $\bm{\mu}$ based on the tie-breaking rule, and then using the greedy algorithm to get a $(1-1/e, 1)$-approximation solution. Since the time complexity of influence computation in the bipartite graph is $O(m)$, the time complexity of the offline algorithm is equal to that of the greedy algorithm, $O(kmn)$.

\subsection{General Graph}\label{sec:compute_general}GraphWe
The competitive propagation in the general graph is much more complicated, so it is hard to pre-determine all edge probabilities as in the bipartite graph case. 
However, we have a key observation:
\begin{restatable}{lemma}{lemUpperLower}\label{lem:UpperLower}
When fixing the seed set $S=\{S_A,S_B\}$, reward $r_{S}(\bm{\mu}) $ has a linear relationship with each $\mu_i$ (when other $\mu_j$'s with $j\ne i$
	are fixed).
This implies that the optimal solution for the optimization problem in Eq.\eqref{eq:newopt} must occur at the boundaries of the intervals $c_i$'s.
\end{restatable}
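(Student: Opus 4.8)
The plan is to exploit the live-edge graph representation of the reward together with the independence of the edge samples: I would show that $r_S(\bm{\mu})$ is a \emph{multilinear} polynomial in the entries of $\bm{\mu}$, which gives the linear relationship in each coordinate directly, and then invoke the elementary fact that a function that is affine in each coordinate attains its maximum over a box at a vertex.

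First I would write the reward as an expectation over live-edge graphs, reusing the notation of the proof of Theorem~\ref{thm:TPM}. Setting $f(S,L,v) := \mathbbm{1}\{v \text{ is activated by } A \text{ under } L\}$, which is a \emph{deterministic} function of the live-edge graph $L$ once the dominance tie-breaking rule is fixed, we have $r_S(\bm{\mu}) = \sum_{v \in V} \sum_L \mathbb{P}(L)\, f(S,L,v)$, where $\mathbb{P}(L) = \prod_{e:\,L(e)=1} \mu_e \prod_{e:\,L(e)=0}(1-\mu_e)$ because the edges are sampled independently. The crucial observation is that for each edge $e$ exactly one of the factors $\mu_e$ or $(1-\mu_e)$ appears in $\mathbb{P}(L)$, so every $\mu_e$ occurs to the first power; hence $r_S(\bm{\mu})$ is multilinear, i.e.\ affine in any single $\mu_i$ when the remaining $\mu_j$'s are held fixed.

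To make the linear relationship explicit I would partition the sum over $L$ according to the state of edge $i$, factoring out $\mu_i$ (edge $i$ live) versus $(1-\mu_i)$ (edge $i$ blocked). This yields $r_S(\bm{\mu}) = \mu_i\, A_{-i} + (1-\mu_i)\, B_{-i} = B_{-i} + \mu_i\,(A_{-i} - B_{-i})$, where $A_{-i}$ and $B_{-i}$ depend only on $\{\mu_j\}_{j\neq i}$. This is precisely the claimed linear dependence, and its slope $A_{-i}-B_{-i}$ is the change in $A$'s expected spread induced by turning edge $i$ on; consistent with the non-monotonicity discussed in Section~\ref{sec:non-monotone}, this slope may have either sign.

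For the boundary claim I would argue that, for any fixed $S$, maximizing $r_S(\bm{\mu})$ over the box $\prod_i c_i$ is attained at a vertex. Starting from an optimal $\bm{\mu}^*$ and processing coordinates one at a time, affineness in $\mu_i$ (with the other coordinates fixed) means that moving $\mu_i$ to whichever endpoint of $c_i$ matches the sign of the slope never decreases the objective; iterating over all $i$ drives every coordinate to an endpoint of its interval without loss, so a boundary point is optimal. Since the outer maximization over $S$ in Eq.\eqref{eq:newopt} ranges over finitely many seed sets, the joint optimum $(S^*,\bm{\mu}^*)$ likewise has $\bm{\mu}^*$ on the boundary. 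The statement is not deep: the only point requiring care is justifying that $f(S,L,v)$ is a genuine deterministic function of the live-edge graph under the dominance tie-breaking (so that the multilinear expansion is valid), after which the vertex-attainment argument is sign-agnostic and goes through regardless of the slopes' signs.
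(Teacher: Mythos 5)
Your proof is correct and takes essentially the same approach as the paper's: both expand $r_S(\bm{\mu})$ as a sum over live-edge graphs, observe that each $\mu_e$ appears only through the factor $\mu_e$ or $(1-\mu_e)$ so the reward is multilinear (affine in each coordinate), and conclude that the optimum of Eq.\eqref{eq:newopt} is attained at the boundaries of the intervals $c_i$. The extra details you supply (the explicit decomposition $r_S(\bm{\mu}) = B_{-i} + \mu_i(A_{-i}-B_{-i})$, the coordinate-by-coordinate push to endpoints, and the remark that activation is a deterministic function of $L$ under the dominance tie-breaking rule) simply make rigorous what the paper's one-line conclusion leaves implicit.
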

\begin{proof}
We can expand $r_{S}(\bm{\mu})$ based on the live-edge graph model (Chen et al., 2013a):
\begin{equation}\label{eq:live-edge}
r_{S}(\bm{\mu}) =  \sum_{L} |\Gamma_A(L,S)| \cdot \text{Pr}(L) =  \sum_{L} |\Gamma_A(L,S)| \prod_{e \in E(L)} \mu_e \prod_{e \notin E(L)} (1-\mu_e),
\end{equation}
where $L$ is one possible live-edge graph (each edge $e\in E$ is in $L$ with probability $\mu_e$ and not in $L$ with probability $1-\mu_e$, and this is independent from other edges), 
$\Gamma_A(L,S)$ is the set of nodes activated by $A$ from seed sets $S = \{S_A,S_B\}$ under live-edge graph $L$ and $E(L)$ is the set of edges that appear in live-edge graph $L$. 
Eq.\eqref{eq:live-edge} shows that $r_{S}(\bm{\mu})$ is linear with each $\mu_i$, so the optimal $\mu_i$ must take either the minimum or the maximum value in its range $c_i$. 
\end{proof}
Lemma~\ref{lem:UpperLower} implies that for any edge $e$ not reachable from $B$ seeds, it is safe to always take its upper bound value since it can only
	helps the propagation of $A$.
This further suggests that if we only have a small number (e.g. $\log m$) of edges reachable from $B$, then we can afford enumerating all the boundary value combinations of these edges.
For each such boundary setting $\bm{\mu}$,  we can use the IMM algorithm (Tang et al., 2014) to design a $(1-1/e-\epsilon, 1-n^{-l})$-approximation oracle $\mathcal{O}_{\text{IMM}}$ with time complexity $T_{\text{IMM}} = O((k+l)(m+n)\log n / \epsilon^{2})$. We discuss such graphs that satisfy the above condition in directed trees. Specifically, we consider the in-arborescence, where all edges point towards the root. For any node $u$ in the in-arborescence, there only exists one path from $u$ to the root; if $u$ is selected as the seed node of $B$, it could only propagate via this path. Hence, if the depth of the in-arborescence is in the order of $O(\log m)$, the number of edges reachable from $S_B$ would be $O(|S_B| \cdot \log m)$. In this case, we can use the IMM algorithm for $O(m^{|S_B|})$ combinations to obtain an approximate solution with time complexity $O(m^{|S_B|}\cdot T_{\text{IMM}})$. Examples of such in-arborescences with depth $O(\log m)$ could be the complete or full binary trees.

For general graphs, designing efficient approximation algorithms for the offline problem in Eq. (\ref{eq:newopt}) remains a challenging open problem, due to the joint optimization over $S$ and $\bm{\mu}$ and the complicated function form of $r_{S}(\bm{\mu})$.
Nevertheless, heuristic algorithms are still possible. In the experiment section, we employee the following heuristic with the $B>A$ tie-breaking rule: for all outgoing edges from $B$ seeds, we set their influence probabilities to their lower bound values, while for the rest, we set them to their upper bound values.
This setting guarantees that the first-level edges from the seeds are always set correctly, no matter how we select $A$ seeds.
They do not guarantee the correctness of second or higher level edge settings in the cascade, but the impact of those edges to influence spread decays significantly, so the above choice is reasonable as a heuristic.

\section{Proof of Theorem~\ref{thm:ETC}} \label{appendix:ETC}
\begin{proof}
\begin{algorithm}
 \caption{OCIM-ETC with offline oracle $\mathcal{O}$}
 \begin{algorithmic}[1]\label{alg:ETC}
 \STATE \textbf{Input}: $m$, $N$, $T$, Oracle $\mathcal{O}$.
 \STATE For each arm $i$, $T_i\leftarrow 0$. \{maintain the total number of times arm $i$ is played so far.\}
 \STATE For each arm $i$, $\hat{\mu}_i \leftarrow 0$. \{maintain the empirical mean of $X_i$.\}
 \STATE \textbf{Exploration phase}:
 \FOR{$t = 1,2,3,\dots, \lceil nN/k \rceil$}
    \STATE Take $k$ nodes that have not been chosen for $N$ times as $S_A$.
    \STATE Observe the feedback $X_i^{(t)}$ for each direct out-edge of $S_A$, $i \in \tau_{\text{direct}}$.
    \STATE For each arm $i\in \tau_{\text{direct}}$ update $T_i$ and $\hat{\mu}_i$: $T_i = T_i + 1, \hat{\mu}_i = \hat{\mu}_i + (X_i^{(t)}-\hat{\mu}_i) / T_i$.
 \ENDFOR 
    \STATE \textbf{Exploitation phase}:
 \FOR{$t =  \lceil nN/k \rceil+1,\dots, T$}
    \STATE Obtain context $S_B^{(t)}$.
    \STATE $S^{(t)} \leftarrow \mathcal{O}(S_B^{(t)}, \hat{\mu}_1, \hat{\mu}_2, \dots, \hat{\mu}_m)$.
    \STATE Play action $S^{(t)}$.
 \ENDFOR
 \end{algorithmic} 
\end{algorithm}
The OCIM-ETC algorithm is described in Alg.~\ref{alg:ETC}. We utilize the following well-known tail bound in our proof.
\begin{restatable}{lemma}{lemHoeffding}\label{lem:Hoeffding}
(Hoeffding’s Inequality) Let $X_1,\dots,X_n$ be independent and identically distributed random variables with common support $[0,1]$ and mean $\mu$. Let $Y = X_1 + \dots, + X_n$. Then for all $\delta \geq 0$,
\begin{equation*}
    \mathbb{P}\left\{|Y-n\mu|\geq \delta\right\} \leq 2e^{-2\delta^2/n}.
\end{equation*}
\end{restatable}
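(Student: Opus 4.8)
The plan is to derive this from the standard Chernoff-bounding technique combined with a moment-generating-function estimate for bounded random variables (Hoeffding's Lemma). First I would reduce to a one-sided tail: since the $X_i$ have support $[0,1]$, the variables $1-X_i$ also have support $[0,1]$ with mean $1-\mu$, so bounding $\mathbb{P}\{Y-n\mu \geq \delta\}$ by $e^{-2\delta^2/n}$ immediately yields the symmetric bound on $\mathbb{P}\{n\mu - Y \geq \delta\}$, and a union bound over the two tails supplies the factor $2$. Thus it suffices to show $\mathbb{P}\{Y-n\mu \geq \delta\} \leq e^{-2\delta^2/n}$.

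For the one-sided bound I would apply the exponential Markov inequality: for any $s>0$,
\begin{equation*}
\mathbb{P}\{Y-n\mu \geq \delta\} = \mathbb{P}\left\{e^{s(Y-n\mu)} \geq e^{s\delta}\right\} \leq e^{-s\delta}\,\mathbb{E}\!\left[e^{s(Y-n\mu)}\right].
\end{equation*}
Using independence of the $X_i$ together with $Y-n\mu=\sum_{i=1}^n (X_i-\mu)$, the expectation factorizes as $\prod_{i=1}^n \mathbb{E}[e^{s(X_i-\mu)}]$. The key estimate is then Hoeffding's Lemma: each centered variable $Z_i = X_i-\mu$ satisfies $\mathbb{E}[Z_i]=0$ and lies in an interval of length $1$ (namely $[-\mu,\,1-\mu]$), so $\mathbb{E}[e^{sZ_i}] \leq e^{s^2/8}$. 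Substituting gives $\mathbb{E}[e^{s(Y-n\mu)}] \leq e^{ns^2/8}$, and hence $\mathbb{P}\{Y-n\mu \geq \delta\} \leq e^{-s\delta + ns^2/8}$. Minimizing the exponent over $s>0$ via elementary calculus yields the optimal $s = 4\delta/n$ and the value $-2\delta^2/n$, which is exactly the claimed one-sided bound.

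The main obstacle, and the only nonroutine step, is proving Hoeffding's Lemma. I would argue via the cumulant generating function $\psi(s) = \ln \mathbb{E}[e^{sZ}]$ for a zero-mean $Z$ supported in $[a,b]$. One checks $\psi(0)=0$ and $\psi'(0)=\mathbb{E}[Z]=0$, while $\psi''(s)$ equals the variance of $Z$ under the exponentially tilted probability measure $d\tilde{\mathbb{P}} \propto e^{sZ}\,d\mathbb{P}$. Since that tilted law is again supported in $[a,b]$, its variance is at most $(b-a)^2/4$ (a bounded random variable on an interval of length $b-a$ has variance maximized by the two-point distribution at the endpoints). A second-order Taylor expansion of $\psi$ about $0$ then gives $\psi(s) \leq s^2(b-a)^2/8$, i.e. $\mathbb{E}[e^{sZ}] \leq e^{s^2(b-a)^2/8}$; here $b-a=1$. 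Assembling the Chernoff bound, Hoeffding's Lemma, the optimization over $s$, and the symmetric tail via the reflection $X_i \mapsto 1-X_i$ completes the proof.
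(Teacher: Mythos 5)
Your proof is correct: it is the standard textbook derivation of Hoeffding's inequality (Chernoff's exponential Markov bound, factorization by independence, Hoeffding's Lemma via the tilted-measure variance bound, and optimization of $s$), and every step checks out, including the reflection $X_i \mapsto 1-X_i$ to get the lower tail and the union bound supplying the factor $2$. For comparison, the paper does not prove this lemma at all --- it states it as a well-known tail bound (Hoeffding's inequality) and simply invokes it in the regret analysis of OCIM-ETC, so there is no in-paper argument to measure yours against; your self-contained derivation is strictly more than the paper provides. The only cosmetic remark is that your optimal choice $s=4\delta/n$ requires $\delta>0$, so the case $\delta=0$ should be dispatched separately (trivially, since the right-hand side is then $2\geq 1$), but this does not affect correctness.
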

Let $\hat{\bm{\mu}} = (\hat{\mu}_1,\dots,\hat{\mu}_m)$ be the empirical mean of $\bm{\mu}$. Recall that oracle $\mathcal{O}$ takes $S_B^{(t)}$ and $\hat{\bm{\mu}}$ as inputs and outputs a solution $S^{(t)}$. Let us define event $\mathcal{F} = \left\{r_{S^{(t)}}(\hat{\bm{\mu}}) < \alpha \cdot \text{opt}^{(t)}(\hat{\bm{\mu}})\right\}$, which represents that oracle $\mathcal{O}$ fails to output an $\alpha$-approximate solution, and we know $\mathbb{P}(\mathcal{F}) < 1-\beta$.

With the same definitions in Appendix~\ref{appendix:TS:OFU}, we can decompose the regret as:
\begin{align}
    Reg_{\alpha,\beta}(T; \bm{\mu}) &\leq \lceil nN/k \rceil \cdot \Delta^{(T)}_{\max} + \sum_{t = T- \lceil nN/k\rceil+1}^{T} \Big[\alpha \beta \cdot \text{opt}^{(t)}(\bm{\mu}) - \mathbb{E}\big[r_{S^{(t)}}(\hat{\bm{\mu}})\big]\Big] \nonumber\\
    &\leq \lceil nN/k \rceil \cdot \Delta^{(T)}_{\max} + \sum_{t = T- \lceil nN/k\rceil+1}^{T} \Big[\alpha \beta \cdot \text{opt}^{(t)}(\bm{\mu}) - \beta\cdot\mathbb{E}\big[r_{S^{(t)}}(\hat{\bm{\mu}})\mid \neg \mathcal{F}\big]\Big] \nonumber\\
    &\leq \lceil nN/k \rceil \cdot \Delta^{(T)}_{\max} + \sum_{t = T- \lceil nN/k\rceil+1}^{T} \Big[\alpha\cdot \text{opt}^{(t)}(\bm{\mu}) - \mathbb{E}\big[r_{S^{(t)}}(\hat{\bm{\mu}})\mid \neg \mathcal{F}\big]\Big].\label{eq:reg_etc}
\end{align}
Next, let us rewrite the TPM condition in Theorem~\ref{thm:TPM}. For any $S$, $\bm{\mu}$ and $\bm{\mu}'$, we have
\begin{align}
|r_{S}(\bm{\mu}) - r_{S}(\bm{\mu}')| &\leq C \sum_{i\in[m]}p_i^{S}(\bm{\mu})|\mu_i-\mu_i'| \nonumber\\
&\leq C \sum_{i\in[m]}|\mu_i-\mu_i'| \nonumber\\
&\leq C m \cdot \max_{i\in [m]}|\mu_i-\mu_i'|, \label{eq:TPM_etc}
\end{align}
where $C$ is the maximum number of nodes that any one node can reach in graph $G$. Let $S^{*,t}_{\bm{\mu}}$ denote the optimal action for $\bm{\mu}$ in round $t$. Under $\neg \mathcal{F}$, we have
\begin{align}
    r_{S^{(t)}}(\hat{\bm{\mu}}) & \geq \alpha \cdot r_{S_{\hat{\bm{\mu}}}^{*,t}}(\hat{\bm{\mu}}) \nonumber\\
    & \geq \alpha \cdot r_{S_{\bm{\mu}}^{*,t}}(\hat{\bm{\mu}}) \nonumber\\
    & \geq \alpha \cdot r_{S_{\bm{\mu}}^{*,t}}(\bm{\mu}) - \alpha \cdot Cm \cdot \max_{i\in [m]}|\mu_i-\hat{\mu}_i| \nonumber\\
    & \geq r_{S^{(t)}}(\bm{\mu}) + \Delta^{(t)}_{S^{(t)}} - \alpha \cdot Cm \cdot \max_{i\in [m]}|\mu_i-\hat{\mu}_i|, \label{eq:gap_etc}
\end{align}
where the third inequality is due to Eq.\eqref{eq:TPM_etc}. Combining Eq.\eqref{eq:TPM_etc} and Eq.\eqref{eq:gap_etc} together, we have
\begin{align}
    \Delta^{(t)}_{S^{(t)}} & \leq r_{S^{(t)}}(\hat{\bm{\mu}}) - r_{S^{(t)}}(\bm{\mu}) + \alpha \cdot Cm \cdot \max_{i\in [m]}|\mu_i-\hat{\mu}_i| \nonumber\\
    & \leq (1+\alpha) \cdot Cm \cdot \max_{i\in [m]}|\mu_i-\hat{\mu}_i|. \label{eq:delta_etc}
\end{align}
Let us define $\delta_0 := \frac{\Delta^{(T)}_{\min}}{2Cm}$. If $\max_{i\in [m]}|\mu_i-\hat{\mu}_i| <\delta_0$, then we know $S^{(t)}$ is at least an $\alpha$-approximate solution, such that $\Delta^{(t)}_{S^{(t)}} = 0$. Then the regret in Eq.\eqref{eq:reg_etc} can be written as
\begin{align}
    Reg_{\alpha,\beta}(T; \bm{\mu}) &\leq \lceil nN/k \rceil \cdot \Delta^{(T)}_{\max} + \Big(T -  \lceil nN/k \rceil\Big) \cdot 2m\exp(-2N\delta_0^2)\cdot \Delta^{(T)}_{\max}\nonumber\\
    &\leq \Big(\lceil nN/k \rceil + T \cdot 2m\exp(-2N\delta_0^2)\Big) \cdot \Delta^{(T)}_{\max}. \label{eq:reg2_etc}
\end{align}
The first inequality is obtained by applying the Hoeffding’s Inequality (Lemma~\ref{lem:Hoeffding}) and union bound to the event $\max_{i\in [m]}|\mu_i-\hat{\mu}_i| \geq \delta_0$. Now we need to choose an optimal $N$ that minimizes Eq.\eqref{eq:reg2_etc}. By taking $N = \max\left\{1, \frac{1}{2\delta_0^2} \ln \frac{4kmT\delta_0^2}{C}\right\} = \max\left\{1, \frac{2C^2 m^2}{(\Delta^{(T)}_{\min})^2}\ln(\frac{kT(\Delta^{(T)}_{\min})^2}{C^3 m})\right\}$, when $\Delta^{(T)}_{\min} > 0$, we can get the distribution-dependent bound
\begin{equation}
    \text{Reg}_{\alpha,\beta}(T; \bm{\mu}) \leq \frac{2C^2 m^2 n \Delta^{(T)}_{\max}}{k (\Delta^{(T)}_{\min})^2}\left(\max\left\{ \ln\left(\frac{kT(\Delta^{(T)}_{\min})^2}{C^2 mn}\right), 0\right\} + 1\right) + \frac{n}{k} \Delta^{(T)}_{\max},
\end{equation}

Next, let us prove the distribution-independent bound. Let $\mathcal{N}$ denote the event that $|\hat{\mu}_i - \mu_i| \leq \sqrt{\frac{2 \ln T}{N}}$ for all $i \in [m]$. By the Hoeffding’s Inequality and union bound, we have 
\begin{equation}\label{eq:prob_etc}
    \mathbb{P}\{\neg \mathcal{N}\} \leq m \cdot \frac{2}{T^4} \leq \frac{2}{T^3}.
\end{equation}
When $\mathcal{N}$ holds, with Eq.\eqref{eq:delta_etc}, we have
\begin{equation}
    \Delta^{(t)}_{S^{(t)}} \leq 2Cm \cdot \sqrt{\frac{2 \ln T}{N}},
\end{equation}
and the regret in Eq.\eqref{eq:reg_etc} can be written as
\begin{align}
    Reg_{\alpha,\beta}(T; \bm{\mu}) &\leq \lceil nN/k \rceil \cdot n + \sum_{t = T- \lceil nN/k\rceil+1}^{T} \Delta^{(t)}_{S^{(t)}}\nonumber\\
    &\leq \lceil nN/k \rceil \cdot n + O\left(T\cdot Cm \cdot \sqrt{\frac{\ln T}{N}}\right).
\end{align}
We can choose $N$ so as to (approximately) minimize the regret. For $N = (Cmk)^{\frac{2}{3}} n^{-\frac{4}{3}} T^{\frac{2}{3}} (\ln T)^{\frac{1}{3}}$, we obtain:
\begin{equation}
     Reg_{\alpha,\beta}(T; \bm{\mu}) \leq O((Cmn)^{\frac{2}{3}} k^{-\frac{1}{3}} T^{\frac{2}{3}} (\ln T )^{\frac{1}{3}}).
\end{equation}
To complete the proof, we need to consider both $\mathcal{N}$ and $\neg \mathcal{N}$. As shown in Eq.\eqref{eq:prob_etc}, the probability that $\neg \mathcal{N}$ occurs is very small, and we have:
\begin{align}
    Reg_{\alpha,\beta}(T; \bm{\mu}) &= \mathbb{E}\left[Reg_{\alpha,\beta}(T; \bm{\mu})\mid \mathcal{N} \right] \cdot \mathbb{P}\{\mathcal{N}\} + \mathbb{E}\left[Reg_{\alpha,\beta}(T; \bm{\mu})\mid \neg \mathcal{N} \right] \cdot \mathbb{P}\{\neg \mathcal{N}\}\nonumber\\
    &\leq \mathbb{E}\left[Reg_{\alpha,\beta}(T; \bm{\mu})\mid \mathcal{N} \right] + T\cdot n \cdot O(T^{-3})\nonumber\\
    &\leq O((Cmn)^{\frac{2}{3}} k^{-\frac{1}{3}} T^{\frac{2}{3}} (\ln T )^{\frac{1}{3}}).
\end{align}
\end{proof}

\section{Proof of Theorem~\ref{thm:TPM_prob}}


\begin{proof}
As mentioned in Section~\ref{sec:extension}, we need to introduce a virtual $B$ seed node $u_B$, which connects to each existing node $u$ with an unknown edge probability $p(u_B, u)$ equal to the probability of $u$ being selected as a $B$ seed. By adding these virtual nodes and edges, we get a new graph $G'$ with $2n$ nodes and $m+n$ edges. Since $S_B$ is fixed under $G'$, we can follow the same steps in the proof of Theorem~\ref{thm:TPM} to show the TPM condition holds under $G'$. Note that the maximum number of nodes that any one node can reach in $G'$ is twice as that in the original graph $G$, so the new bounded smoothness coefficient $C=2\tilde{C}$.
\end{proof}

\section{Additional Experiments}

\subsection{Experiments for $A>B$ Tie-breaking Rule}
When we consider $A>B$ in bipartite graphs, we can trivially ignore $S_B$ to choose $S_A$ since the influence spread ends in one diffusion round, and OCIM becomes the online influence maximization problem without competition. We show such results in Figure~\ref{fig:Yahoo, A>B}. Note that the distribution of $B$ no longer affects the performance of $A$ when $A>B$ and we only use one figure for the IM and RD distribution.
For general graphs, we use the same DM dataset and parameter settings described in Sec.~\ref{sec:experiment}, and the only difference is that $A$ now dominates $B$.
We show the results in Figure~\ref{fig:GG, A>B}.
Overall, the results and the analysis for $A>B$ are consistent with $B>A$.

\subsection{Experiments for OCIM-ETC}
We show the frequentist/Bayesian regret results for the OCIM-ETC algorithm in Figure~\ref{fig:ETC_Yahoo}, Figure~\ref{fig:ETC_DM} and Figure~\ref{fig:ETC_more_rounds}. In Figure~\ref{fig:ETC_Yahoo}, we set exploration phase to be $250$ rounds and the experiments show that we suffer linear regrets in both the exploration and the exploitation phase, meaning that the unknown parameters are under-explored. Thus we reset exploration to be $1500$ and Figure~\ref{fig:ETC_more_rounds} shows that OCIM-ETC now has constant regret in the exploitation phase.
For DM dataset, since the node number and the edge number are less than Yahoo-Ad, we can see constant regrets after $1000$ rounds of exploration in Figure~\ref{fig:ETC_DM}.
Compared with OCIM-OFU/OCIM-TS, OCIM-ETC requires more rounds to learn the unknown influence probabilities and has larger regrets than OCIM-OFU/OCIM-TS, but with sufficient exploration (which is much less than the theoretical requirements $N = (\tilde{C}m)^{\frac{2}{3}} (nk)^{-\frac{1}{3}} T^{\frac{2}{3}} (\ln T)^{\frac{1}{3}}$ in Theorem~\ref{thm:ETC}) OCIM-ETC can yield constant regrets during the exploitation phase in our experiments.

\begin{figure}[ht]
	\centering
	\begin{subfigure}[b]{0.25\textwidth}
		\centering
		\includegraphics[width=\textwidth]{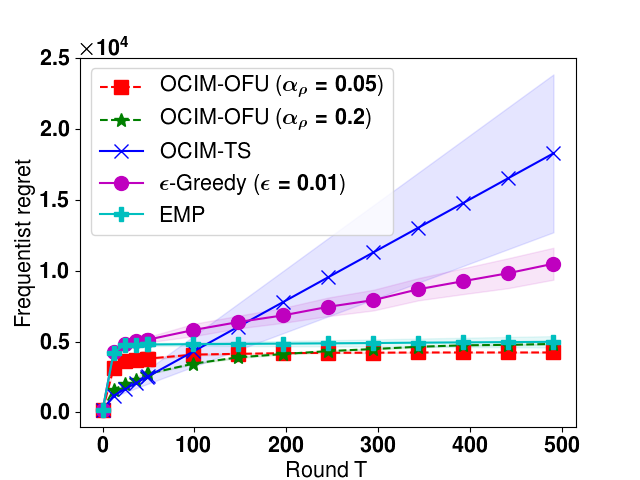}
		\caption{Yahoo-Ad, Frequentist}
		\label{fig:Yahoo, A>B, Freq}
	\end{subfigure}
	\begin{subfigure}[b]{0.25\textwidth}
		\centering
		\includegraphics[width=\textwidth]{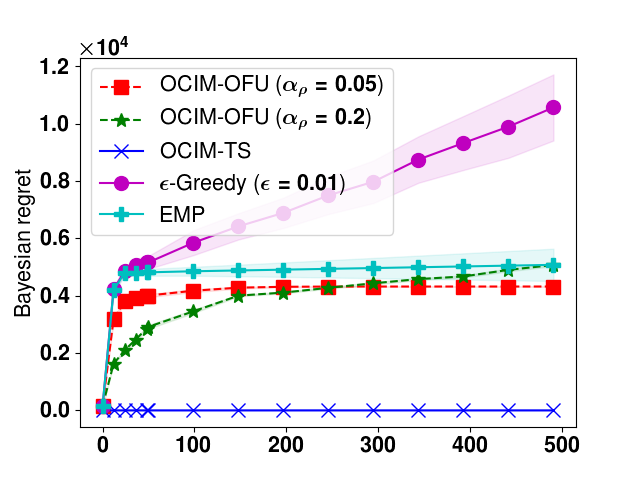}
		\caption{Yahoo-Ad, Bayesian}
		\label{fig:Yahoo, A>B, Bayes}
	\end{subfigure}
	\caption{Frequentist/Bayesian regrets of different algorithms for the Yahoo-Ad graph when $A>B$.
	}\label{fig:Yahoo, A>B}
\end{figure}

\begin{figure}[ht]
	\centering
	\begin{subfigure}[b]{0.23\textwidth}
		\centering
		\includegraphics[width=\textwidth]{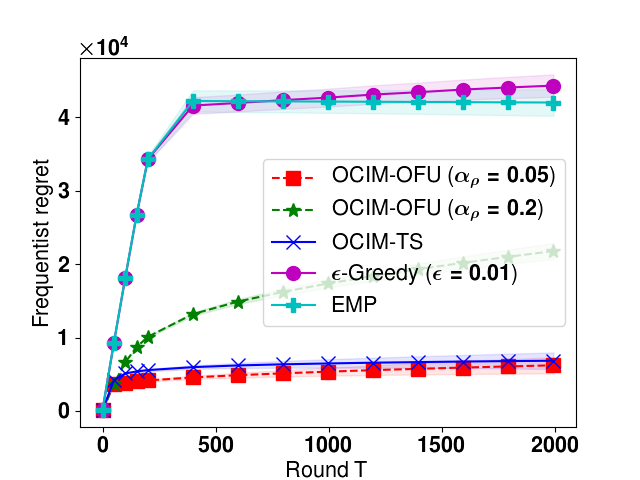}
		\caption{DM, RD, Frequentist}
		\label{fig:DM, RD, A>B}
	\end{subfigure}
	\begin{subfigure}[b]{0.23\textwidth}
		\centering
		\includegraphics[width=\textwidth]{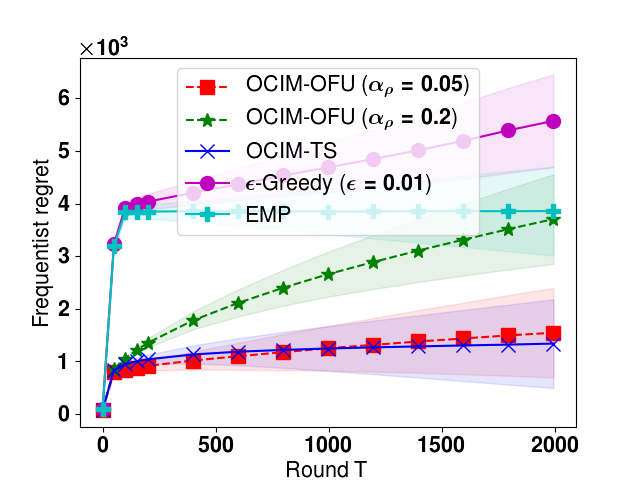}
		\caption{DM, IM, Frequentist}
		\label{fig:DM, IM, A>B}
	\end{subfigure}
	\begin{subfigure}[b]{0.23\textwidth}
		\centering
		\includegraphics[width=\textwidth]{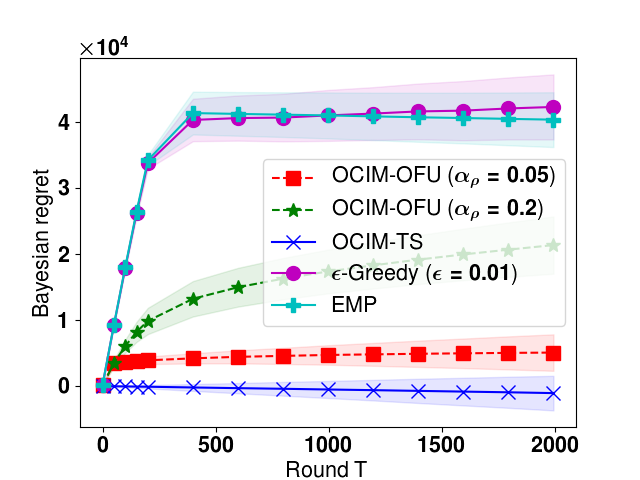}
		\caption{DM, RD, Bayesian}
		\label{fig:DM, RD, A>B, Bayes}
	\end{subfigure}
	\begin{subfigure}[b]{0.23\textwidth}
		\centering
		\includegraphics[width=\textwidth]{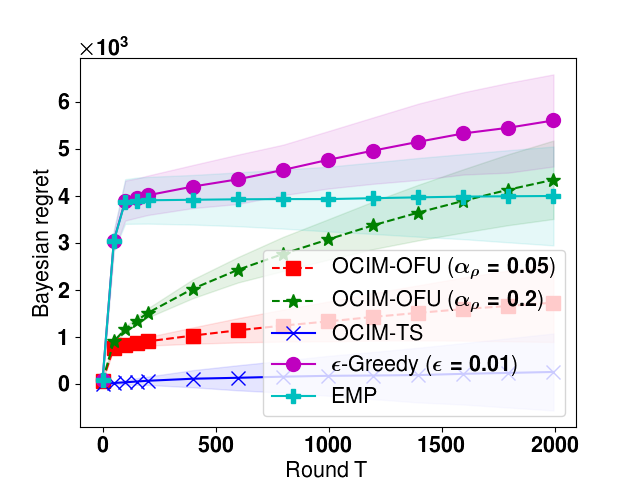}
		\caption{DM, IM, Bayesian}
		\label{fig:DM, IM, A>B, Bayes}
	\end{subfigure}
	\caption{Frequentist/Bayesian regrets of different algorithms for the general graph DM when $A>B$.
	}\label{fig:GG, A>B}
\end{figure}

\begin{figure}[ht]
	\centering
	\begin{subfigure}[b]{0.235\textwidth}
		\centering
		\includegraphics[width=\textwidth]{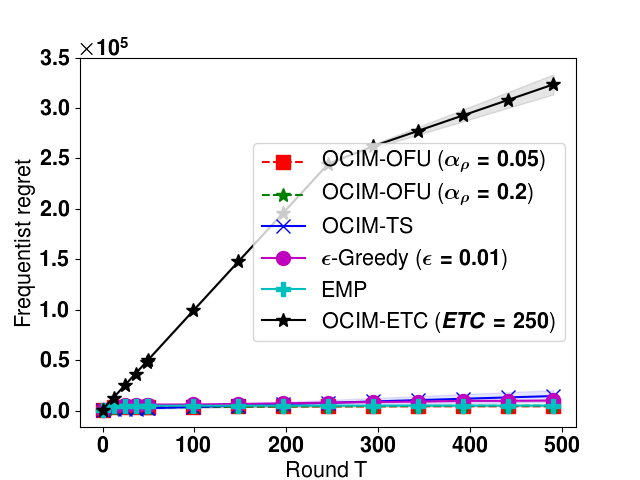}
		\caption{RD, Frequentist}
	\end{subfigure}
	\begin{subfigure}[b]{0.235\textwidth}
		\centering
		\includegraphics[width=\textwidth]{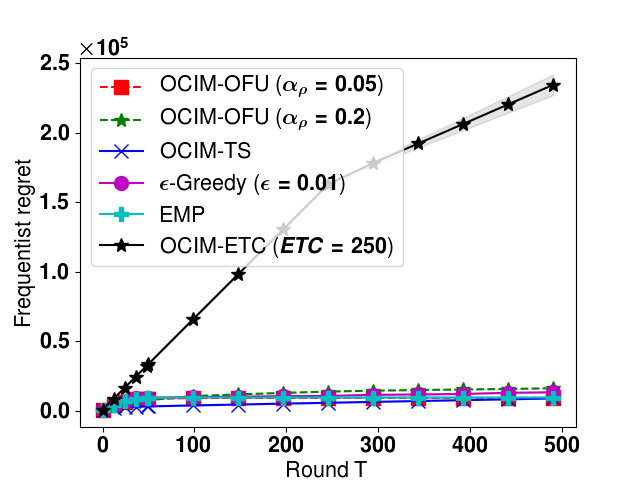}
		\caption{IM, Frequentist}
	\end{subfigure}
	\begin{subfigure}[b]{0.235\textwidth}
		\centering
		\includegraphics[width=\textwidth]{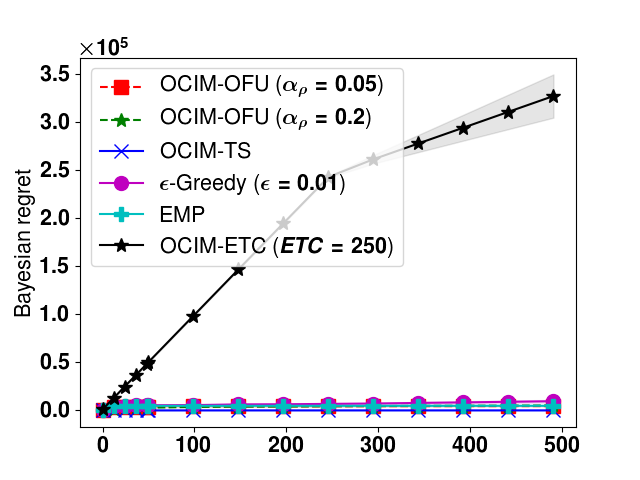}
		\caption{RD, Bayesian}
	\end{subfigure}
	\begin{subfigure}[b]{0.235\textwidth}
		\centering
		\includegraphics[width=\textwidth]{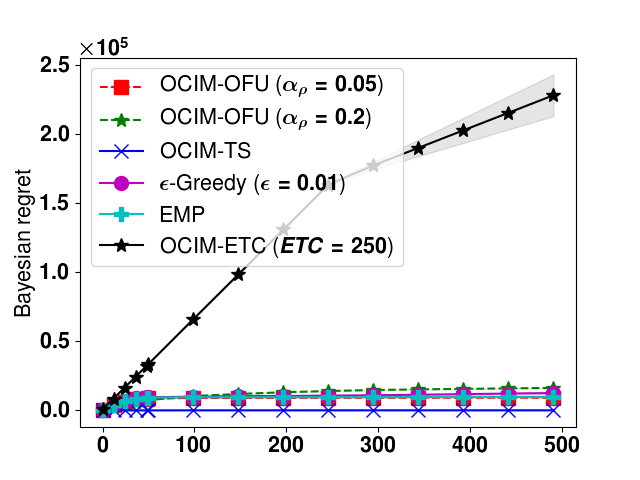}
		\caption{IM, Bayesian}
	\end{subfigure}
	\caption{Frequentist/Bayesian regrets of OCIM-ETC for the Yahoo-Ad graph.
	}\label{fig:ETC_Yahoo}
\end{figure}

\begin{figure}[ht]
	\centering
	\begin{subfigure}[b]{0.235\textwidth}
		\centering
		\includegraphics[width=\textwidth]{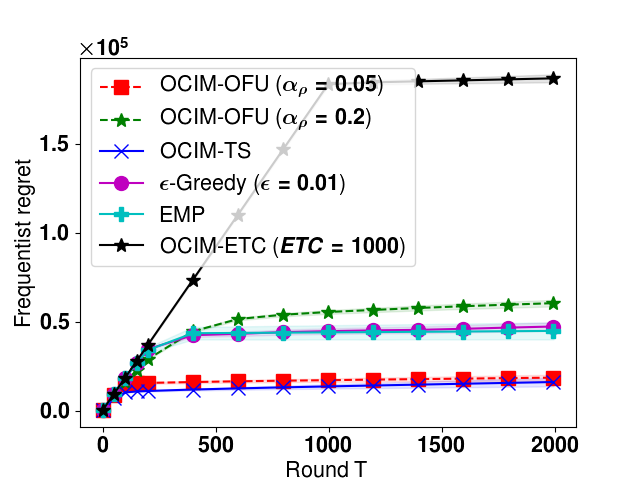}
		\caption{RD, Frequentist}
	\end{subfigure}
	\begin{subfigure}[b]{0.235\textwidth}
		\centering
		\includegraphics[width=\textwidth]{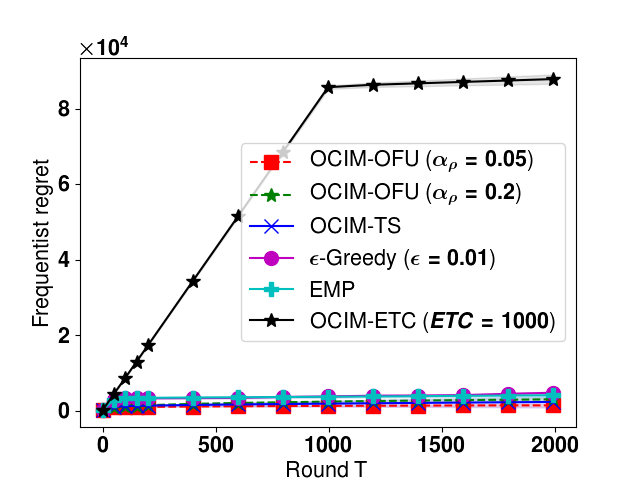}
		\caption{IM, Frequentist}
	\end{subfigure}
	\begin{subfigure}[b]{0.235\textwidth}
		\centering
		\includegraphics[width=\textwidth]{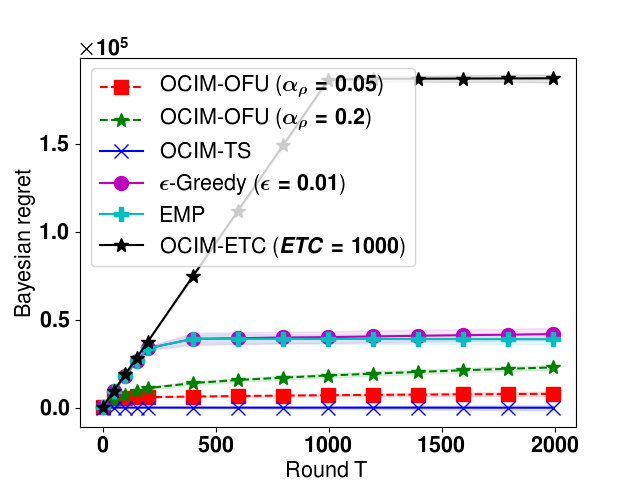}
		\caption{RD, Bayesian}
	\end{subfigure}
	\begin{subfigure}[b]{0.235\textwidth}
		\centering
		\includegraphics[width=\textwidth]{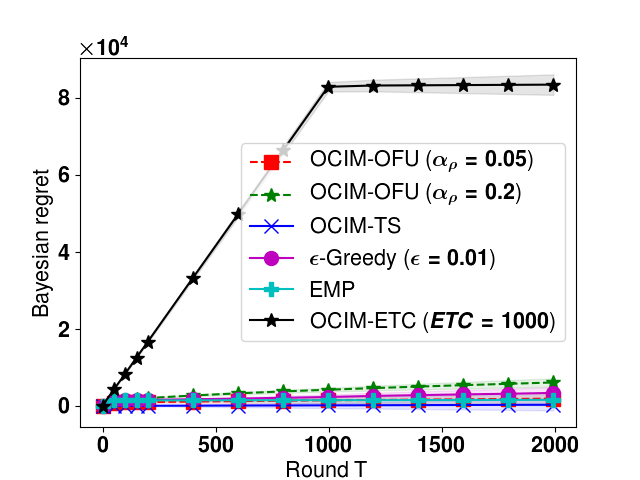}
		\caption{IM, Bayesian}
	\end{subfigure}
	\caption{Frequentist/Bayesian regrets of OCIM-ETC for the DM graph.
	}\label{fig:ETC_DM}
\end{figure}

\begin{figure}[ht]
	\centering
	\begin{subfigure}[b]{0.235\textwidth}
		\centering
		\includegraphics[width=\textwidth]{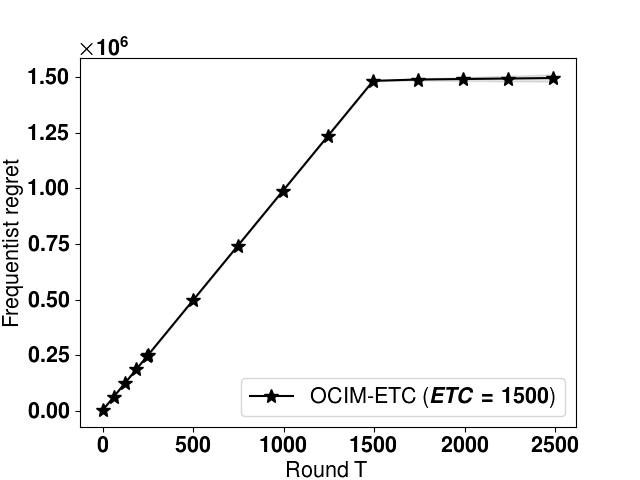}
		\caption{RD, Frequentist}
	\end{subfigure}
	\begin{subfigure}[b]{0.235\textwidth}
		\centering
		\includegraphics[width=\textwidth]{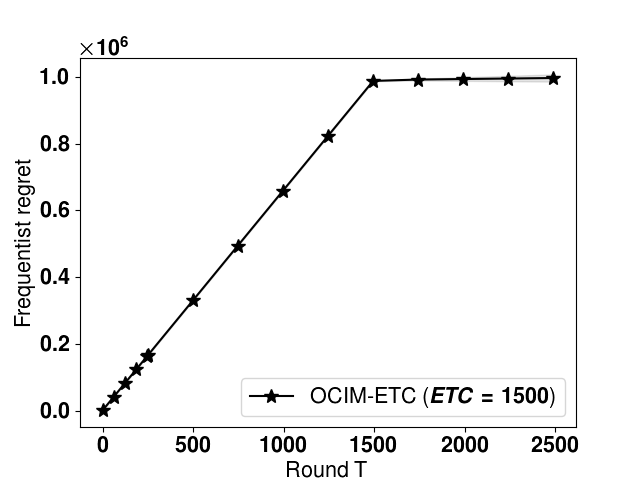}
		\caption{IM, Frequentist}
	\end{subfigure}
	\begin{subfigure}[b]{0.235\textwidth}
		\centering
		\includegraphics[width=\textwidth]{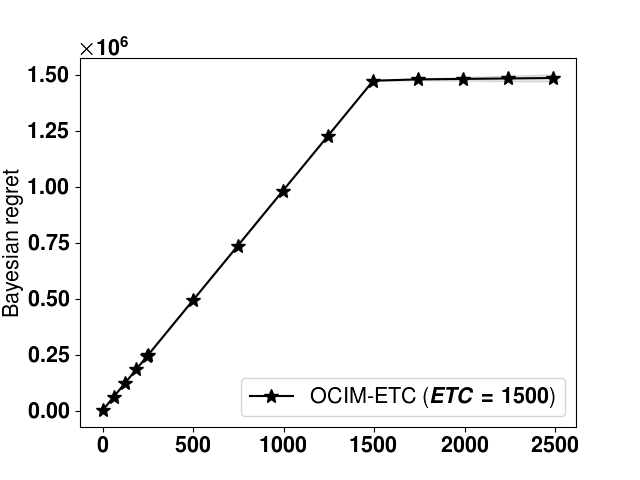}
		\caption{RD, Bayesian}
	\end{subfigure}
	\begin{subfigure}[b]{0.235\textwidth}
		\centering
		\includegraphics[width=\textwidth]{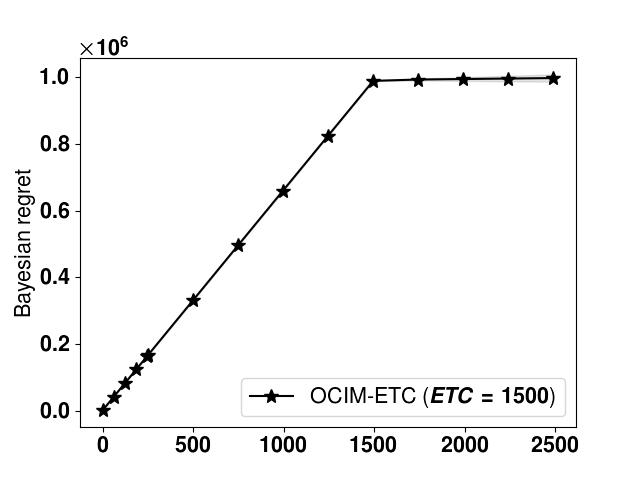}
		\caption{IM, Bayesian}
	\end{subfigure}
	\caption{Frequentist/Bayesian regrets of OCIM-ETC for the Yahoo-Ad graph with 1500 rounds of exploration.
	}\label{fig:ETC_more_rounds}
\end{figure}

\subsection{Experiments for Probabilistic Seed Distribution}
For the settings where the competitor has unknown fixed seed distribution, we first run the non-competitive influence maximization algorithm for $S_B$. and get the best $5$ seeds on Yahoo-Ad and the best $10$ seeds on DM, respectively, We then consider the seed distribution of $S_B$ as choosing each node from the best seeds with probability $0.5$, i.e., the probability that choosing all best $5$ seeds on Yahoo-Ad is $0.5^{5}$ and the probability that choosing all best $10$ seeds on DM is $0.5^{10}$. This seed distribution of $S_B$ is unknown to our algorithms. In our experiments, we set $|S_A|=5$ for Yahoo-Ad and $|S_A|=10$ for DM, and assume $B>A$. Figure~\ref{fig:prob_seed} shows that OCIM-OFU is still superior to EMP and $\epsilon$-Greedy for this setting with more complex competitor actions. We omit the results of OCIM-TS here as it requires the prior knowledge of the competitor's seed distribution. However, as long as the given prior does not differ much from the true prior, OCIM-TS will also achieve good regret results.
\begin{figure}[ht]
	\centering
	\begin{subfigure}[b]{0.35\textwidth}
		\centering
		\includegraphics[width=\textwidth]{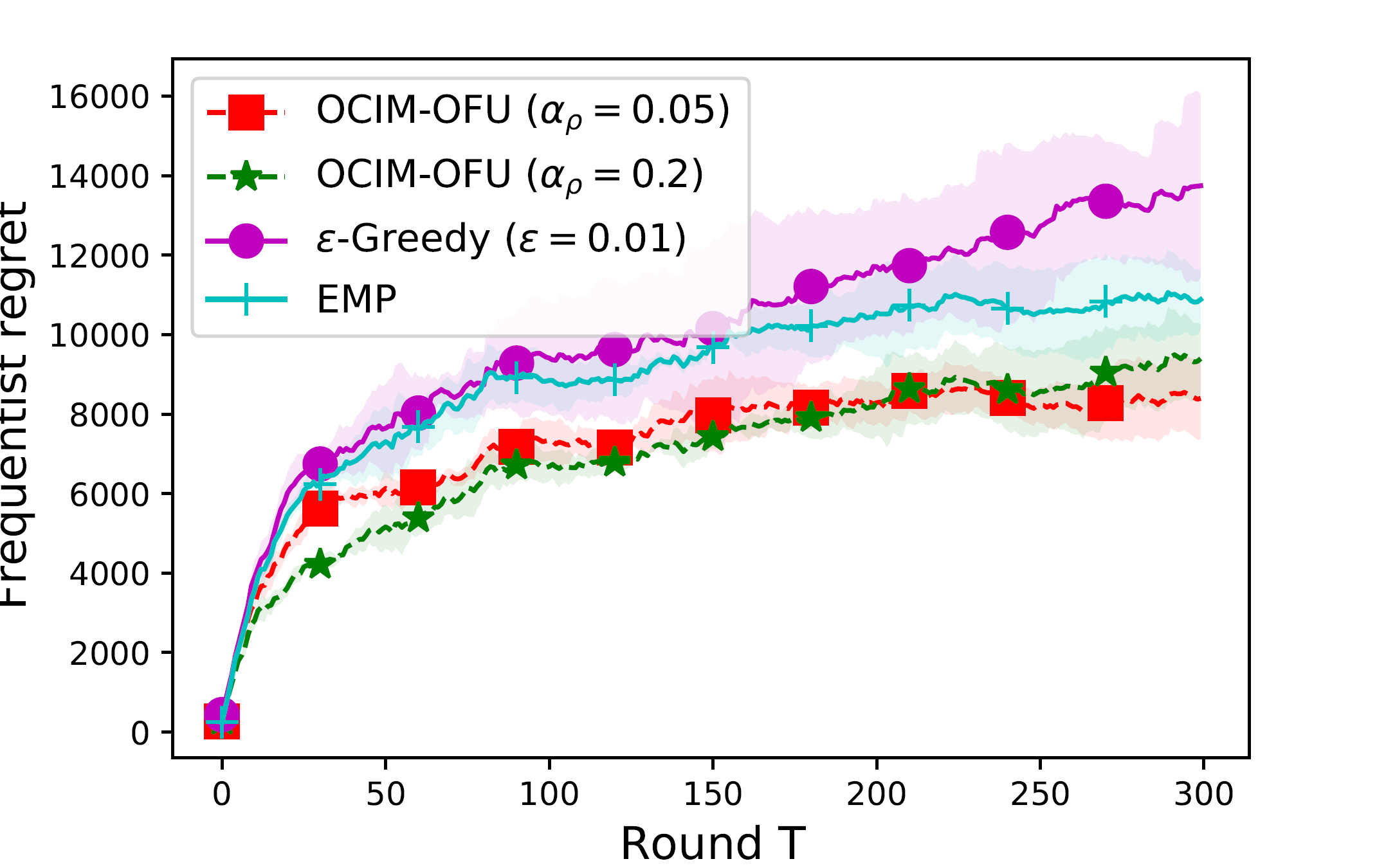}
		\caption{Yahoo-Ad}
	\end{subfigure}
	\begin{subfigure}[b]{0.35\textwidth}
		\centering
		\includegraphics[width=\textwidth]{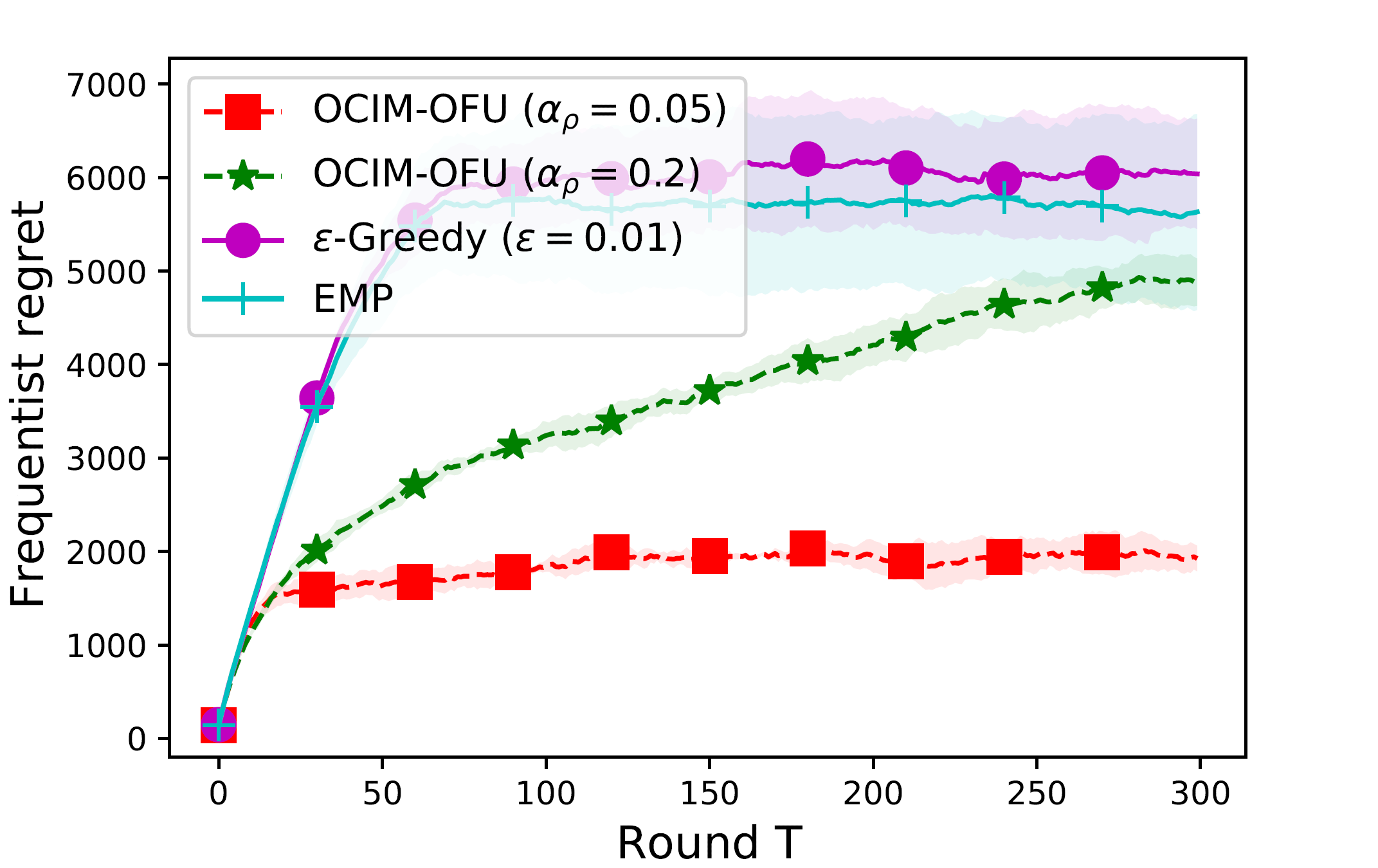}
		\caption{DM}
	\end{subfigure}
	\caption{Frequentist regrets for Yahoo-Ad and DM with unknown fixed competitor's seed distribution.}
	\label{fig:prob_seed}
\end{figure}

\clearpage
\section{Contextual combinatorial multi-armed bandit framework C$^2$MAB-T}
\subsection{General Framework}
We propose a general framework of contextual combinatorial multi-armed bandit with probabilistically triggered arms (C$^2$MAB-T), which is a contextual extension of CMAB-T in~\citep{wang2017improving}.

C$^2$MAB-T is a learning game between a learning player and an environment. The environment consists of $m$ random variables $X_1, \cdots, X_m$ called \emph{base arms} following a joint distribution $D$ over $[0,1]^m$. Distribution $D$ is chosen by the environment from a class of distributions $\mathcal{D}$ before the game starts. The player knows $\mathcal{D}$ but not the actual distribution $D$ in advance. Different from that in CMAB-T, the environment in C$^2$MAB-T also provides \emph{contexts} for the learning agent, which will be discussed in detail later.

The learning process runs in discrete rounds. In round $t$, the environment first provides a context, $\bm{\mathcal{S}}^{(t)} \subseteq \bm{\mathcal{S}}$, to the player, where $\bm{\mathcal{S}}$ is the full action space and $\bm{\mathcal{S}}^{(t)}$ is a subset of it, representing the current action space in round $t$. The player then chooses an action $S^{(t)} \in \bm{\mathcal{S}}^{(t)}$ based on the feedback history from previous rounds. The environment also draws an independent sample $X^{(t)} = (X_1^{(t)}, \cdots, X_m^{(t)})$ from the joint distribution $D$. When action $S^{(t)}$ is played on the environment outcome $X^{(t)}$, a random subset of arms $\tau_t\in[m]$ are triggered, and the outcomes of $X_i^{(t)}$ for all $i\in\tau_t$ are observed as the feedback to the player. 
$\tau_t$ may have additional randomness beyond the randomness of $X^{(t)}$. Let $D_{\text{trig}}(S,X)$ denote a distribution of the triggered subset of $[m]$ for a given action $S$ and an environment outcome $X$. We assume $\tau_t$ is drawn independently from $D_{\text{trig}}(S^{(t)}, X^{(t)})$. The player obtains a reward $R(S^{(t)}, X^{(t)}, \tau_t)$ fully determined by $S^{(t)}, X^{(t)}$ and $\tau_t$. A learning algorithm aims at selecting actions $S^{(t)}$'s over time based on the past feedback to accumulate as much reward as possible.

For each arm $i$, let $\mu_i = \mathbb{E}_{X\sim D}[X_i]$. Let $\bm{\mu}= (\mu_1, \cdots, \mu_m)$ denote the expectation vector of arms. We assume that the expected reward $\mathbb{E}[R(S, X, \tau)]$, where the expectation is taken over $X \sim D$ and $\tau \sim D_{\text{trig}}(S,X)$, is a function of
action $S$ and the expectation vector $\bm{\mu}$ of the arms. Thus, we denote $r_{S}(\bm{\mu}) := \mathbb{E}[R(S, X, \tau)]$. We assume the outcomes of arms do not depend on whether they are triggered, i.e., $\mathbb{E}_{X \sim D, \tau \sim D_{\text{trig}}(S,X)}[X_i \mid i \in \tau] = \mathbb{E}_{X \sim D}[X_i]$.

The performance of a learning algorithm $\mathcal{A}$ is measured by its expected regret, which is the difference in expected cumulative reward between always playing the best action and playing actions selected by algorithm $\mathcal{A}$. Let $\text{opt}^{(t)}(\bm{\mu}) = \sup_{S^{(t)}\in \bm{\mathcal{S}}^{(t)}}(\bm{\mu})$ denote the expected reward of the optimal action in round $t$. We assume that there exists an offline oracle $\mathcal{O}$, which takes context $\bm{\mathcal{S}}^{(t)}$ and $\bm{\mu}$ as inputs and outputs an action $S^{{\mathcal{O}},(t)}$ such that $\text{Pr}\{r_{S^{{\mathcal{O}},(t)}}(\bm{\mu}) \geq \alpha \cdot \text{opt}^{(t)}(\bm{\mu})) \} \geq \beta$, where $\alpha$ is the approximation ratio and $\beta$ is the success probability. Instead of comparing 
with the exact optimal reward, we take the $\alpha \beta$ fraction of it and use the following $(\alpha,\beta)$-approximation {\em frequentist regret} for $T$ rounds:
\begin{equation}\textstyle
    Reg^{\mathcal{A}}_{\alpha,\beta}(T;\bm{\mu}) = \sum_{t=1}^T  \alpha \cdot \beta \cdot \text{opt}^{(t)}(\bm{\mu}) - \sum_{t=1}^T r_{S^{\mathcal{A},(t)}}(\bm{\mu}),
\end{equation}
where $S^{\mathcal{A},(t)}$ is the action chosen by algorithm $\mathcal{A}$ in round $t$.

Another way to measure the performance of the algorithm $\mathcal{A}$ is using {\em Bayesian regret}. Denote the prior distribution of $\bm{\mu}$ as $\mathcal{Q}$. When the prior $\mathcal{Q}$ is given, the corresponding Bayesian regret is defined as:
\begin{equation}\textstyle
    BayesReg^{\mathcal{A}}_{\alpha,\beta}(T) = \mathbb{E}_{\bm{\mu}\sim\mathcal{Q}}  Reg^{\mathcal{A}}_{\alpha,\beta}(T;\bm{\mu}).
\end{equation}
Note that the contextual combinatorial bandit problem is also studied in~\citep{chen2018contextual,qin2014contextual}. They consider the context features of all bases arms, which can affect their expected outcomes in each round, and assume the action space of super arms is a subset of $[m]$. However, we do not bond the context with base arms and consider the feasible set of super arms,  $\bm{\mathcal{S}}^{(t)}$, as the context, which is more flexible than a subset of $[m]$. Besides, we are the first to consider probabilistically triggered arms in the contextual combinatorial bandit problem.


\subsection{Monotonicity and Triggering Probability Modulated Condition}
In order to guarantee the theoretical regret bounds, we consider two conditions given in~\citep{wang2017improving}. The first one is monotonicity, which is stated below.
\begin{restatable}{condition}{condOverMonotonicity}(Monotonicity). \label{cond:Monotonicity}
We say that a C$^2$MAB-T problem instance satisfies monotonicity, if for any action $S$, for any two expectation vectors $\bm{\mu} = (\mu_1,\dots,\mu_m)$ and $\bm{\mu}' = (\mu_1',\dots,\mu_m')$, we have $r_S(\bm{\mu}) \leq r_S(\bm{\mu}')$ if $\mu_i \leq \mu_i'$ for all $i \in [m]$.
\end{restatable}
The second condition is Triggering Probability Modulated (TPM) Bounded Smoothness. We use $p_{i}^{S}(\bm{\mu})$ to denote the probability that the action $S$ triggers arm $i$ when the expectation vector is $\bm{\mu}$. The TPM condition in C$^2$MAB-T is given below.
\begin{restatable}{condition}{condOverTPM}(1-Norm TPM bounded smoothness). \label{cond:generalTPM}
We say that a C$^2$MAB-T problem instance satisfies 1-norm TPM bounded smoothness, if there exists $C\in \mathbb{R}^+$ (referred as the bounded smoothness coefficient) such that, for any two expectation vectors $\bm{\mu}$ and $\bm{\mu}'$, and any action $S$, we have $|r_{S}(\bm{\mu}) - r_{S}(\bm{\mu}')| \leq C \sum_{i\in[m]}p_i^{S}(\bm{\mu})|\mu_i-\mu_i'|$.
\end{restatable}

\subsection{Regret Bounds with Monotonicity} \label{general_regret_monotone}
\begin{algorithm}[t]
 \caption{Contextual CUCB with offline oracle $\mathcal{O}$, C$^2$-UCB}\label{alg:C2UCB}
 \begin{algorithmic}[1]
 \STATE \textbf{Input}: $m$, Oracle $\mathcal{O}$.
 \STATE For each arm $i\in [m]$, $T_i\leftarrow 0$. \{maintain the total number of times arm $i$ is played so far.\}
 \STATE For each arm $i \in [m]$, $\hat{\mu}_i \leftarrow 1$. \{maintain the empirical mean of $X_i$.\}
 \FOR{$t = 1,2,3,\dots$}
    \STATE For each arm $i\in[m], \rho_i \leftarrow \sqrt{\frac{3\ln t}{2 T_i}}$. \{the confidence radius, $\rho_i = +\infty$ if $T_i = 0$.\}
    \STATE For each arm $i\in[m], \bar{\mu}_i = \min\{\hat{\mu}_i + \rho_i, 1\}$. \{the upper confidence bound.\}
    \STATE Obtain context $\bm{\mathcal{S}}^{(t)}$.
    \STATE $S^{(t)} \leftarrow \mathcal{O}(\bm{\mathcal{S}}^{(t)}, \bar{\mu}_1, \bar{\mu}_2, \dots, \bar{\mu}_m)$.
    \STATE Play action $S^{(t)}$, which triggers a set $\tau \subseteq [m]$ of base arms with feedback $X_i^{(t)}$'s, $i\in \tau$.
    \STATE For every $i\in \tau$ update $T_i$ and $\hat{\mu}_i$: $T_i = T_i + 1, \hat{\mu}_i = \hat{\mu}_i + (X_i^{(t)}-\hat{\mu}_i) / T_i$.
 \ENDFOR
 \end{algorithmic} 
\end{algorithm}
For the general C$^2$MAB-T problem that satisfies both monotonicity (Condition \ref{cond:Monotonicity}) and TPM bounded smoothness (Condition \ref{cond:generalTPM}), we introduce a contextual version of the CUCB algorithm~\citep{wang2017improving}, which is described in Algorithm~\ref{alg:C2UCB}. Recall that $\bm{\mathcal{S}}^{(t)}$ is the action space in round $t$. We define the reward gap $\Delta_{S}^{(t)}{=}\max(0, \alpha \cdot \text{opt}^{(t)}(\bm{\mu}) - r_S(\bm{\mu}))$ for all actions $S \in \bm{\mathcal{S}}^{(t)}$. 
For each arm $i$, we define $\Delta^{i,T}_{\min} = \min_{t\in[T]}\inf_{S\in\mathcal{S}^{(t)}:p_i^S(\bm{\mu}) > 0, \Delta_{S}^{(t)} > 0} \Delta_{S}^{(t)}$ and $\Delta^{i,T}_{\max} = \max_{t\in[T]}\sup_{S\in\bm{\mathcal{S}}^{(t)}:p_i^S(\bm{\mu}) > 0, \Delta_{S}^{(t)} > 0} \Delta_{S}^{(t)}$. 
If there is no action $S$ such that $p_i^S(\bm{\mu}) > 0$ and $\Delta_{S}^{(t)} > 0$, we define $\Delta^{i,T}_{\min} = +\infty$ and $\Delta^{i,T}_{\max} = 0$. We define $\Delta^{(T)}_{\min} = \min_{i \in [m]}\Delta^{i,T}_{\min}$ and $\Delta^{(T)}_{\max} = \max_{i \in [m]}\Delta^{i,T}_{\max}$.
Let $\widetilde{S} = \{i\in [m] \mid p_i^S(\bm{\mu}) > 0\}$ be the set of arms that can be triggered by $S$. We define $K = \max_{S\in\bm{\mathcal{S}}^{(t)}}|\widetilde{S}|$ as the largest number of arms could be triggered by a feasible action. We use $\lceil x \rceil_0$ to denote $\max\{\lceil x \rceil, 0\}$. Contextual CUCB (C$^2$-UCB) has the following regret bounds.
\begin{restatable}{theorem}{thmOverCUCB}\label{thm:CUCB}
For the Contextual CUCB algorithm C$^2$-UCB (Algorithm~\ref{alg:C2UCB}) on an C$^2$MAB-T problem satisfying 1-norm TPM bounded smoothness (Condition \ref{cond:generalTPM}) with bounded smoothness constant $C$, (1) if $\Delta^{(T)}_{\min} > 0$, we have a distribution-dependent bound
\begin{align}\textstyle
    \text{Reg}_{\alpha,\beta}(T; \bm{\mu}) \leq \sum_{i \in [m]} \frac{576C^2K\ln T}{\Delta^{i,T}_{\min}}  + 4Cm +\sum_{i \in [m]} \left(\left\lceil \log_2 \frac{2CK}{\Delta^{i,T}_{\min}}\right\rceil_0 + 2\right)\cdot \frac{\pi^2}{6}\cdot\Delta^{(T)}_{\max},
\end{align}
and (2) we have a distribution-independent bound
\begin{align}\textstyle
\nonumber
    \text{Reg}_{\alpha,\beta}(T; \bm{\mu}) \leq 12C\sqrt{mKT\ln T}+ 2Cm + \left(\left\lceil \log_2 \frac{T}{18\ln T}\right\rceil_0 + 2\right)\cdot m \cdot \frac{\pi^2}{6}\cdot\Delta^{(T)}_{\max}.
\end{align}\end{restatable}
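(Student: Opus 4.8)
The plan is to adapt the CUCB regret analysis of \citet{wang2017improving} to the contextual setting, exploiting that both monotonicity (Condition~\ref{cond:Monotonicity}) and the 1-norm TPM condition (Condition~\ref{cond:generalTPM}) hold. First I would define the concentration (``sampling is nice'') event $\mathcal{N}_t^{\text{s}}$ that at the start of round $t$ every arm satisfies $|\hat{\mu}_{i,t}-\mu_i|\le\rho_{i,t}$ with $\rho_{i,t}=\sqrt{3\ln t/(2T_{i,t-1})}$, and bound $\sum_t \mathbb{P}\{\neg\mathcal{N}_t^{\text{s}}\}$ via Hoeffding plus a union bound; this contributes the $\pi^2/6$-type additive terms scaled by $\Delta^{(T)}_{\max}$. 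Next, under $\mathcal{N}_t^{\text{s}}$ and when the oracle succeeds (event $\neg\mathcal{H}_t$), monotonicity yields optimism: since $\bar\mu_i=\min\{\hat\mu_i+\rho_i,1\}\ge\mu_i$ coordinatewise, Condition~\ref{cond:Monotonicity} gives $r_S(\bar{\bm\mu})\ge r_S(\bm\mu)$ for every $S$, so the played action obeys $r_{S^{(t)}}(\bar{\bm\mu})\ge\alpha\cdot\text{opt}^{(t)}(\bm\mu)=r_{S^{(t)}}(\bm\mu)+\Delta^{(t)}_{S^{(t)}}$.

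The second step converts this gap into a per-arm confidence sum using the TPM condition. Combining the optimism inequality with Condition~\ref{cond:generalTPM} gives
\[
\Delta^{(t)}_{S^{(t)}}\le r_{S^{(t)}}(\bar{\bm\mu})-r_{S^{(t)}}(\bm\mu)\le C\sum_{i\in[m]}p_i^{S^{(t)}}(\bm\mu)\,|\bar\mu_{i}-\mu_i|\le 2C\sum_{i\in\widetilde{S}^{(t)}}p_i^{S^{(t)}}(\bm\mu)\,\rho_{i,t},
\]
where the last bound uses $|\bar\mu_i-\mu_i|\le 2\rho_{i,t}$ on $\mathcal{N}_t^{\text{s}}$. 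The crucial feature is the triggering probability $p_i^{S^{(t)}}(\bm\mu)$ multiplying each radius, which is exactly what removes the undesirable $1/p^*$ factor of the earlier CMAB analysis.

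The main obstacle, and the heart of the argument, is bounding $\sum_{t}\sum_i p_i^{S^{(t)}}(\bm\mu)\rho_{i,t}$, since $\rho_{i,t}$ depends on the \emph{actual} trigger count $T_{i,t-1}$ while in round $t$ arm $i$ is only triggered with probability $p_i^{S^{(t)}}(\bm\mu)$. I would introduce the ``triggering is nice'' event $\mathcal{N}_t^{\text{t}}$ (Definition~7 of \citet{wang2017improving}), under which each arm's cumulative triggering probability is comparable to its realized count, group the triggering probabilities geometrically into levels, and apply the reverse-amortization bound (the $\kappa_{j,T}$ function of Lemmas~4 and~5 of \citet{wang2017improving}) that controls, for a fixed gap threshold $M$, how much regret a single arm can accumulate across rounds where it is the ``cause'' of a gap exceeding $M$. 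Because the per-round and per-arm estimates never require the action space to be fixed, and the gaps $\Delta^{i,T}_{\min},\Delta^{i,T}_{\max}$ are already defined as inf/sup over rounds \emph{and} over $\bm{\mathcal{S}}^{(t)}$, this machinery carries over essentially verbatim to the contextual setting, which is the only adaptation needed.

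Finally, I would assemble the bound by splitting the regret into (a) the ``nice'' rounds handled by the amortization, (b) the failure of $\mathcal{N}_t^{\text{s}}$, (c) the rounds with $\Delta^{(t)}_{S^{(t)}}\le M_i$, which contribute no regret once $M_i$ is chosen below the relevant gap, and (d) the failure of $\mathcal{N}_t^{\text{t}}$. Choosing $M_i=\Delta^{i,T}_{\min}$ and summing the per-arm $576C^2K\ln T/\Delta^{i,T}_{\min}$ contributions yields the distribution-dependent bound, while optimizing a common threshold $M=\sqrt{576C^2mK\ln T/T}$ balances the $\ln T/M$ and $TM$ terms to give the distribution-independent $12C\sqrt{mKT\ln T}$ bound, exactly mirroring Eq.~(17)--(22) of \citet{wang2017improving}.
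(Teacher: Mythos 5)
Your proposal is correct and follows essentially the same route as the paper's proof: both establish the key optimism inequality $r_{S^{(t)}}(\bar{\bm\mu}_t)\ge \alpha\cdot\text{opt}^{(t)}(\bm\mu)=r_{S^{(t)}}(\bm\mu)+\Delta^{(t)}_{S^{(t)}}$ under $\mathcal{N}_t^{\text{s}}$ and $\neg\mathcal{H}_t$ via monotonicity and the oracle guarantee, convert the gap through the TPM condition into the triggering-probability-weighted radius sum (Eq.~(10) of \citet{wang2017improving}), and then invoke the $\kappa_{j,T}$ reverse-amortization machinery of Lemmas 5--6 and Eq.~(17)--(22) of that paper with $M_i=\Delta^{i,T}_{\min}$ (resp.\ the balanced threshold $M$), noting that the context-dependent action spaces are absorbed because the gaps are already defined as inf/sup over rounds and over $\bm{\mathcal{S}}^{(t)}$. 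Your write-up is in fact somewhat more explicit than the paper's (e.g., spelling out the $\mathcal{N}_t^{\text{t}}$ event and the $|\bar\mu_i-\mu_i|\le 2\rho_{i,t}$ step), but the decomposition and all key lemmas are identical.
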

\begin{proof}
We first show that Lemma 5 in~\citep{wang2017improving} still holds for Contextual CUCB algorithm in the C$^2$MAB-T problem. Let $\mathcal{N}_{t}^{\text{s}}$ be the event that at the beginning of round $t$, for every arm $i \in [m]$, $|\hat{\mu}_{i,t} - \mu_{i}| \leq \rho_{i,t}$. Let $\mathcal{H}_t$ be the event that at round $t$ oracle $\mathcal{O}$ fails to output an $\alpha$-approximate solution. In Lemma 5 from~\citep{wang2017improving}, it assumes that $\mathcal{N}_{t}^{\text{s}}$ and $\neg \mathcal{H}_t$ hold, then we have
\begin{equation}
    r_{S^{(t)}}(\bar{\bm{\mu}}_{t}) \geq \alpha \cdot \text{opt}^{(t)}(\bar{\bm{\mu}}_{t}) \geq  \alpha \cdot \text{opt}^{(t)}(\bm{\mu}) = r_{S^{(t)}}(\bm{\mu}) + \Delta^{(t)}_{S^{(t)}}.
\end{equation}
By the TPM condition, we have
\begin{equation}
\Delta^{(t)}_{S^{(t)}} \leq  r_{S^{(t)}}(\bar{\bm{\mu}}_{t})  -  r_{S^{(t)}}(\bm{\mu})  \leq C \sum_{i\in [m]} p_i^{S^{(t)}}(\bm{\mu}) |\bar{\mu}_{i,t} - \mu_{i}|,
\end{equation}
which is in the same form of Eq.(10) in~\citep{wang2017improving}. Hence, we can follow the remaining proof of its Lemma 5. With Lemma 5, we can follow the proof of Lemma 6 in~\citep{wang2017improving} to bound the regret when $\Delta^{(t)}_{S^{(t)}}\geq M_{S^{(t)}}$, where $M_{S^{(t)}} = \max_{i\in\tilde{S}^{(t)}} M_i$ and $M_i$ is a positive real number for each arm $i$. Finally, we take $M_i = \Delta^{i,T}_{\min}$. If $\Delta^{(t)}_{S^{(t)}}< M_{S^{(t)}}$, then $\Delta^{(t)}_{S^{(t)}} = 0$, since we have either $\tilde{S}^{(t)} =  \emptyset$ or $\Delta^{(t)}_{S^{(t)}} < M_{S^{(t)}} \leq M_i$ for some $i \in \tilde{S}^{(t)}$. Thus, no regret is accumulated when $\Delta^{(t)}_{S^{(t)}}< M_{S^{(t)}}$. Following Eq.(17)-(22) in~\citep{wang2017improving}, we can derive the distribution-dependent and distribution-independent regret bounds shown in the theorem. 
\end{proof}

\subsection{Regret Bounds without Monotonicity}
\begin{algorithm}[t]
 \caption{C$^2$-TS with offline oracle $\mathcal{O}$}\label{alg:C2-TS}
 \begin{algorithmic}[1]
 \STATE \textbf{Input}: $m$, Prior $\mathcal{Q}$, Oracle $\mathcal{O}$.
  \STATE \textbf{Initialize} Posterior $\mathcal{Q}_1 = \mathcal{Q}$
 \FOR{$t = 1,2,3,\dots$}
    \STATE Draw a sample $\bm{\mu}^{(t)}$ from $\mathcal{Q}_t$.
    \STATE Obtain context $\bm{\mathcal{S}}^{(t)}$
    \STATE $S^{(t)} \leftarrow {\mathcal{O}}(\bm{\mathcal{S}}^{(t)}, \bm{\mu}^{(t)})$.
    \STATE Play action $S^{(t)}$, which triggers a set $\tau \subseteq [m]$ of base arms with feedback $X_i^{(t)}$'s, $i\in \tau$.
    \STATE Update posterior $\mathcal{Q}_{t+1}$ using $X_i^{(t)}$ for all $i\in \tau$.
 \ENDFOR
 \end{algorithmic} 
\end{algorithm}

\begin{algorithm}[t]
 \caption{C$^2$-OFU with offline oracle $\widetilde{\mathcal{O}}$}\label{alg:C2-OFU}
 \begin{algorithmic}[1]
 \STATE \textbf{Input}: $m$, Oracle $\widetilde{\mathcal{O}}$.
 \STATE For each arm $i\in [m]$, $T_i\leftarrow 0$. \{maintain the total number of times arm $i$ is played so far.\}
 \STATE For each arm $i \in [m]$, $\hat{\mu}_i \leftarrow 1$. \{maintain the empirical mean of $X_i$.\}
 \FOR{$t = 1,2,3,\dots$}
    \STATE For each arm $i\in[m], \rho_i \leftarrow \sqrt{\frac{3\ln t}{2 T_i}}$. \{the confidence radius, $\rho_i = +\infty$ if $T_i = 0$.\}
    \STATE For each arm $i\in[m], c_i \leftarrow \left[(\hat{\mu}_i - \rho_i)^{0+}, (\hat{\mu}_i + \rho_i)^{1-}\right]$. \{the estimated range of $\mu_i$.\}
    \STATE Obtain context $\bm{\mathcal{S}}^{(t)}$.
    \STATE $S^{(t)} \leftarrow \widetilde{\mathcal{O}}(\bm{\mathcal{S}}^{(t)}, c_1, c_2, \dots, c_m)$.
    \STATE Play action $S^{(t)}$, which triggers a set $\tau \subseteq [m]$ of base arms with feedback $X_i^{(t)}$'s, $i\in \tau$.
    \STATE For every $i\in \tau$ update $T_i$ and $\hat{\mu}_i$: $T_i = T_i + 1, \hat{\mu}_i = \hat{\mu}_i + (X_i^{(t)}-\hat{\mu}_i) / T_i$.
 \ENDFOR
 \end{algorithmic} 
\end{algorithm}

\begin{algorithm}
 \caption{C$^2$-ETC with offline oracle $\mathcal{O}$}
 \begin{algorithmic}[1]\label{alg:C2-ETC}
 \STATE \textbf{Input}: $m$, $k$, $N$, $T$, Oracle $\mathcal{O}$.
 \STATE For each arm $i$, $T_i\leftarrow 0$. \{maintain the total number of times arm $i$ is played so far.\}
 \STATE For each arm $i$, $\hat{\mu}_i \leftarrow 0$. \{maintain the empirical mean of $X_i$.\}
 \STATE \textbf{Exploration phase}:
  \FOR{$t = 1,2,3,\dots, \lceil mN/k \rceil$}
    \STATE Obtain context $\bm{\mathcal{S}}^{(t)}$.
    \STATE Play action $S^{(t)} \in \bm{\mathcal{S}}^{(t)}$, which contains $k$ base arms that have not been chosen for $N$ times.
    \STATE Observe the feedback $X_i^{(t)}$ for each base arm in $S^{(t)}$, $i \in \tau_{\text{direct}}$.
    \STATE For each arm $i\in \tau_{\text{direct}}$ update $T_i$ and $\hat{\mu}_i$: $T_i = T_i + 1, \hat{\mu}_i = \hat{\mu}_i + (X_i^{(t)}-\hat{\mu}_i) / T_i$.
 \ENDFOR 
    \STATE \textbf{Exploitation phase}:
 \FOR{$t =  \lceil mN/k \rceil+1,\dots, T$}
    \STATE Obtain context $\bm{\mathcal{S}}^{(t)}$.
    \STATE $S^{(t)} \leftarrow \mathcal{O}(\bm{\mathcal{S}}^{(t)}, \hat{\mu}_1, \hat{\mu}_2, \dots, \hat{\mu}_m)$.
    \STATE Play action $S^{(t)}$.
 \ENDFOR
 \end{algorithmic} 
\end{algorithm}

As discussed in Section~\ref{sec:properties}, OCIM is an example of C$^2$MAB-T that satisfies the TPM condition but not monotonicity. For the general C$^2$MAB-T problem without monotonicity, we proposed two algorithms, C$^2$-TS, C$^2$-OFU, that can still achieve logarithmic Bayesian and frequentist regrets respectively. 
We also present C$^2$-ETC that has a tradeoff between feedback requirement and regret bound.

C$^2$-TS is described in Algorithm~\ref{alg:C2-TS}. Different from OCIM-TS, we input a general prior $\mathcal{Q}$ (which depends on $\mathcal{D}$ and might not be Beta distributions anymore) and update the posterior distribution $\mathcal{Q}_t$ accordingly. With the same definitions in \ref{general_regret_monotone} and $\delta^{(T)}_{\max} = \max_{\bm{\mu}} \Delta^{(T)}_{\max}$, it has the following Bayesian regret bound.
\begin{restatable}{theorem}{thmOverC2TS}\label{thm:C2-TS}
For the C$^2$-TS (Algorithm~\ref{alg:C2-TS}) on an C$^2$MAB-T problem satisfying 1-norm TPM bounded smoothness (Condition \ref{cond:generalTPM}) with bounded smoothness constant $C$, we have the Bayesian regret bound
\begin{align}\textstyle
    \text{BayesReg}_{\alpha,\beta}(T) \leq 12C\sqrt{mKT\ln T}+ 2Cm +\left(\left\lceil \log_2 \frac{T}{18\ln T}\right\rceil_0 + 4\right)\cdot m \cdot \frac{\pi^2}{6}\cdot\delta^{(T)}_{\max},
\end{align}.
\end{restatable}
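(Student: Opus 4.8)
The plan is to mirror the proof of Theorem~\ref{thm:TS} almost line for line, replacing the OCIM-specific smoothness constant $\tilde{C}$ by the general bounded smoothness constant $C$ supplied by Condition~\ref{cond:generalTPM}, and invoking the \emph{assumed} TPM condition in place of the one established in Theorem~\ref{thm:TPM}. First I would set up the Bayesian machinery exactly as there: let $\mathcal{F}_{t-1}$ be the history of actions and feedback before round $t$, let $\mathcal{Q}_t$ be the posterior of $\bm{\mu}$ given $\mathcal{F}_{t-1}$, and recall that $\bm{\mu}^{(t)}\sim\mathcal{Q}_t$ in Algorithm~\ref{alg:C2-TS}. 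Writing $S_t=\mathcal{O}(\bm{\mathcal{S}}^{(t)},\bm{\mu}^{(t)})$, I would express the Bayesian regret in the general benchmark-oracle form of Eq.~\eqref{eq:generalBR}.

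The conceptual heart of the argument is the posterior-sampling identity
\begin{equation*}
\mathbb{P}(\bm{\mu}=\cdot\mid\mathcal{F}_{t-1})=\mathbb{P}(\bm{\mu}^{(t)}=\cdot\mid\mathcal{F}_{t-1}),
\end{equation*}
which holds because Thompson sampling draws $\bm{\mu}^{(t)}$ from precisely the posterior of $\bm{\mu}$. Crucially this step is distribution-agnostic, so it goes through for the general prior $\mathcal{Q}$ of the C$^2$MAB-T setting and not merely for Beta priors. Conditioning on $\mathcal{F}_{t-1}$ and applying this identity lets me swap the reward of the (unobtainable) optimal benchmark action under $\bm{\mu}$ for the reward of the played action $S_t$ under the sampled parameter $\bm{\mu}^{(t)}$, reducing the Bayesian regret to $\mathbb{E}[\sum_t\Delta_{S_t}]$ with $\Delta_{S_t}=r_{S_t}(\bm{\mu}^{(t)})-r_{S_t}(\bm{\mu})$, exactly as in Eqs.~\eqref{eq:BR-mu}--\eqref{eq:BR_last}.

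Next I would decompose $\mathbb{E}[\sum_t\Delta_{S_t}]$ into the same four terms as Eq.~\eqref{eq:BR3terms}, using the confidence set $\mathcal{C}_t=\{\bm{\mu}':|\mu'_i-\hat{\mu}_{i,t}|\le\rho_{i,t}\ \forall i\}$ with $\rho_{i,t}=\sqrt{3\ln t/2T_{i,t-1}}$, a threshold $M$, and the triggering event $\mathcal{N}_t^{\mathrm{t}}$ of \cite{wang2017improving}. Term (a), where $\Delta_{S_t}\ge M$ and both $\bm{\mu},\bm{\mu}^{(t)}\in\mathcal{C}_t$, is controlled by the assumed TPM condition: on $\mathcal{C}_t$ one has $|\mu^{(t)}_i-\mu_i|\le 2\rho_{i,t}$, and Condition~\ref{cond:generalTPM} together with the $\kappa_{j,T}$ bookkeeping of Lemma~5 in \cite{wang2017improving} yields $\sum_{i\in[m]}576C^2K\ln T/M+4Cm$. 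Term (b) (concentration failure) is bounded by $\tfrac{2\pi^2 m}{3}\delta^{(T)}_{\max}$ after noting, via the same posterior-sampling identity, that $\mathbb{E}[\mathbb{I}\{\bm{\mu}\notin\mathcal{C}_t\}\mid\mathcal{F}_{t-1}]=\mathbb{E}[\mathbb{I}\{\bm{\mu}^{(t)}\notin\mathcal{C}_t\}\mid\mathcal{F}_{t-1}]$ and applying a union-bounded Hoeffding tail; term (c) (small gap) is at most $TM$; and term (d) (triggering failure) is at most $\tfrac{\pi^2}{6}\sum_i j^i_{\max}\delta^{(T)}_{\max}$. Choosing $M=\sqrt{576C^2mK\ln T/T}$ and summing reproduces the stated bound.

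The step I expect to be the main obstacle is term (a). Because monotonicity is \emph{not} available in the general C$^2$MAB-T setting, I cannot rely on an optimistic UCB overestimate to guarantee $r_{S_t}(\bm{\mu}^{(t)})\ge\alpha\cdot\mathrm{opt}^{(t)}(\bm{\mu})$; instead I must argue through the two-sided confidence event $\bm{\mu},\bm{\mu}^{(t)}\in\mathcal{C}_t$ and bound the per-round gap purely by the $1$-norm inequality $\Delta_{S_t}\le C\sum_i p_i^{S_t}(\bm{\mu})\,|\mu^{(t)}_i-\mu_i|\le C\sum_i p_i^{S_t}(\bm{\mu})\cdot 2\rho_{i,t}$. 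Verifying that the $\kappa_{j,T}$ summation of Lemma~5 still applies when the gap is driven by this symmetric deviation—rather than by a one-sided optimistic estimate—is the one place where the absence of monotonicity must be handled carefully; everything else is routine bookkeeping identical to the proof of Theorem~\ref{thm:TS}.
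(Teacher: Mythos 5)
Your proposal is correct and takes essentially the same route as the paper: the paper proves Theorem~\ref{thm:C2-TS} by stating that it follows the same steps as the proof of Theorem~\ref{thm:TS} in Appendix~\ref{appendix:TS}, which is precisely your plan of reusing the posterior-sampling identity, the four-term decomposition with confidence sets $\mathcal{C}_t$ and threshold $M$, and the $\kappa_{j,T}$ bookkeeping of Lemma~5 in \cite{wang2017improving}, with $\tilde{C}$ replaced by the assumed constant $C$ of Condition~\ref{cond:generalTPM}. The concern you flag about term (a) is handled in the paper exactly as you suggest: the two-sided bound $|\mu^{(t)}_i-\mu_i|\le 2\rho_{i,t}$ on the joint confidence event, fed into the TPM inequality, is all that Lemma~5's machinery needs, so monotonicity is never invoked.
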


C$^2$-OFU is described in Algorithm~\ref{alg:C2-OFU}. Similar to OCIM-OFU, it requires an offline oracle $\tilde{\mathcal{O}}$ that takes the context $\bm{\mathcal{S}}^{(t)}$ and $c_i$'s (ranges of $\mu_i$'s) as inputs and outputs an approximate solution $S^{(t)}$. With such an oracle, C$^2$-OFU has the following frequentist regret bounds. 
\begin{restatable}{theorem}{thmOverC2OFU}\label{thm:C2-OFU}
For the C$^2$-OFU (Algorithm~\ref{alg:C2-OFU}) on an C$^2$MAB-T problem satisfying 1-norm TPM bounded smoothness (Condition \ref{cond:generalTPM}) with bounded smoothness constant $C$, (1) if $\Delta^{(T)}_{\min} > 0$, we have a distribution-dependent bound
\begin{align}\textstyle
    \text{Reg}_{\alpha,\beta}(T; \bm{\mu}) \leq \sum_{i \in [m]} \frac{576C^2K\ln T}{\Delta^{i,T}_{\min}}  + 4Cm +\sum_{i \in [m]} \left(\left\lceil \log_2 \frac{2CK}{\Delta^{i,T}_{\min}}\right\rceil_0 + 2\right)\cdot \frac{\pi^2}{6}\cdot\Delta^{(T)}_{\max},
\end{align}
and (2) we have a distribution-independent bound
\begin{align}\textstyle
\nonumber
    \text{Reg}_{\alpha,\beta}(T; \bm{\mu}) \leq 12C\sqrt{mKT\ln T}+ 2Cm + \left(\left\lceil \log_2 \frac{T}{18\ln T}\right\rceil_0 + 2\right)\cdot m \cdot \frac{\pi^2}{6}\cdot\Delta^{(T)}_{\max}.
\end{align}\end{restatable}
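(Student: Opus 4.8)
The plan is to recognize that C$^2$-OFU is the general-framework counterpart of OCIM-OFU, so that its analysis reduces, almost verbatim, to the proof of Theorem~\ref{thm:OIFU}. The only structural differences are that the context is now a general feasible action set $\bm{\mathcal{S}}^{(t)}$ rather than the competitor seed set $S_B^{(t)}$, and that the 1-norm TPM bounded smoothness is an assumed hypothesis (Condition~\ref{cond:generalTPM}) with a generic constant $C$ instead of the value $\tilde{C}$ proved in Theorem~\ref{thm:TPM}. Neither difference affects the argument: the context enters only through the oracle and the regret definition, and the TPM inequality is used purely as a black box.

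First I would fix the two standard events. Let $\mathcal{N}_t^{\text{s}}$ be the concentration event that $|\hat{\mu}_{i,t}-\mu_i|\le\rho_{i,t}$ for every arm $i$ at the start of round $t$, and let $\mathcal{H}_t$ be the event that oracle $\widetilde{\mathcal{O}}$ fails to return an $\alpha$-approximate solution to the offline problem analogous to Eq.~\eqref{eq:newopt}. A Hoeffding bound together with a union bound over arms and over the number of observations controls $\Pr(\neg\mathcal{N}_t^{\text{s}})$ and produces the $\pi^2/6$ terms exactly as in \cite{wang2017improving}; the failure event $\mathcal{H}_t$ is absorbed through the success probability $\beta$ in the $(\alpha,\beta)$-approximation regret.

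The heart of the matter is re-deriving optimism without monotonicity, which is where the joint search over $\bm{\mu}\in\prod_i c_i$ pays off. Under $\mathcal{N}_t^{\text{s}}$ the true mean vector $\bm{\mu}$ lies in the box $\prod_i c_{i,t}$, so the pair (optimal action for $\bm{\mu}$, $\bm{\mu}$) is feasible for the offline problem and hence $r_{S^*}(\bm{\mu}^*)\ge\mathrm{opt}^{(t)}(\bm{\mu})$. Combined with $\neg\mathcal{H}_t$ this yields $r_{S^{(t)}}(\bm{\mu}^{(t)})\ge\alpha\cdot\mathrm{opt}^{(t)}(\bm{\mu})=r_{S^{(t)}}(\bm{\mu})+\Delta^{(t)}_{S^{(t)}}$, so $\Delta^{(t)}_{S^{(t)}}\le r_{S^{(t)}}(\bm{\mu}^{(t)})-r_{S^{(t)}}(\bm{\mu})$. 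Applying Condition~\ref{cond:generalTPM} and then using that both $\mu_i^{(t)}$ and $\mu_i$ lie in the interval $c_{i,t}$ of width at most $2\rho_{i,t}$, I obtain $\Delta^{(t)}_{S^{(t)}}\le C\sum_{i\in[m]}p_i^{S^{(t)}}(\bm{\mu})\,|\mu_i^{(t)}-\mu_i|\le 2C\sum_{i\in[m]}p_i^{S^{(t)}}(\bm{\mu})\,\rho_{i,t}$, which is precisely the per-round gap bound (Eq.~(10) in \cite{wang2017improving}) that drives the CMAB-T analysis.

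Once this inequality is in hand, the remainder is a direct invocation of existing machinery: Lemma~5 of \cite{wang2017improving} converts the per-round bound into a sum of triggering-probability-weighted confidence-radius terms, Lemma~6 performs the gap-filtering argument (taking $M_i=\Delta^{i,T}_{\min}$ and noting that no regret accrues when $\Delta^{(t)}_{S^{(t)}}<M_{S^{(t)}}$), and the summation in Eq.~(17)--(22) of \cite{wang2017improving} produces both the distribution-dependent and distribution-independent bounds with the stated constants. I expect the optimism step of the third paragraph to be the only genuinely new point; everything else is a faithful transcription of the monotone-case analysis, with the general context $\bm{\mathcal{S}}^{(t)}$ tracked through the definitions but otherwise inert.
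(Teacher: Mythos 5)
Your proposal is correct and follows essentially the same route as the paper: the paper proves Theorem~\ref{thm:C2-OFU} by pointing back to the proof of Theorem~\ref{thm:OIFU}, which uses exactly your two events $\mathcal{N}_t^{\text{s}}$ and $\mathcal{H}_t$, the same feasibility-based optimism step (true $\bm{\mu}$ lies in the confidence box, so $r_{S^*}(\bm{\mu}^*)\ge \text{opt}^{(t)}(\bm{\mu})$, giving $\Delta^{(t)}_{S^{(t)}}\le r_{S^{(t)}}(\bm{\mu}^{(t)})-r_{S^{(t)}}(\bm{\mu})\le C\sum_i p_i^{S^{(t)}}(\bm{\mu})\,|\mu_i^{(t)}-\mu_i|\le 2C\sum_i p_i^{S^{(t)}}(\bm{\mu})\,\rho_{i,t}$), and the same invocation of Lemmas~5 and~6 and Eq.~(17)--(22) of \cite{wang2017improving} with $M_i=\Delta^{i,T}_{\min}$. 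Your third paragraph even makes explicit the feasibility argument that the paper states only tersely, so nothing is missing.
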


Besides C$^2$-TS and C$^2$-OFU, we also provide a general explore-then-commit algorithm C$^2$-ETC, as described in Algorithm~\ref{alg:C2-ETC}. 
In the general setting, $\tau_{\text{direct}}$ is defined as the set of base arms that is deterministically triggered by the action in question.
C$^2$-ETC is simple and only requires feedback from directly triggered arms, but it has a worse regret bound and requires the following condition besides 
	Condition~\ref{cond:generalTPM}. 
\begin{restatable}{condition}{condOverETC} \label{cond:ETC}
For some $k\ge 1$, given any context $\bm{\mathcal{S}}^{(t)}$ and any set $S'\subseteq [m]$ with $|S'| = k$, there exists $S \in \bm{\mathcal{S}}^{(t)}$ such that $p_i^{S}(\bm{\mu}) = 1$ for every $i\in |S'|$.
\end{restatable}
With such a condition, C$^2$-ETC has the following frequentist regret bounds.
\begin{restatable}{theorem}{thmOverC2ETC}\label{thm:C2-ETC}
For the C$^2$-ETC (Algorithm~\ref{alg:C2-ETC}) on an C$^2$MAB-T problem satisfying Condition~\ref{cond:ETC} and 1-norm TPM bounded smoothness (Condition \ref{cond:generalTPM}) with bounded smoothness constant $C$, (1) if $\Delta^{(T)}_{\min} > 0$, when $N = \max\left\{1, \frac{2 C^2 m^2}{(\Delta^{(T)}_{\min})^2}\ln(\frac{kT(\Delta^{(T)}_{\min})^2}{C^3 m})\right\}$, we have a distribution-dependent bound
\begin{align}\textstyle
    \text{Reg}_{\alpha,\beta}(T; \bm{\mu}) \leq \frac{m}{k} \Delta^{(T)}_{\max} + \frac{2C^2 m^3 \Delta^{(T)}_{\max}}{k (\Delta^{(T)}_{\min})^2}\left(\max\left\{ \ln\left(\frac{kT(\Delta^{(T)}_{\min})^2}{C^2 m^2}\right), 0\right\} + 1\right)
\end{align}
and (2) when $N = (Ck)^{\frac{2}{3}} m^{-\frac{2}{3}} T^{\frac{2}{3}} (\ln T)^{\frac{1}{3}}$, we have a distribution-independent bound
\begin{equation}\textstyle
     Reg_{\alpha,\beta}(T;\bm{\mu}) \leq O(C^{\frac{2}{3}} m^{\frac{4}{3}} k^{-\frac{1}{3}} T^{\frac{2}{3}} (\ln T )^{\frac{1}{3}}).
\end{equation}
\end{restatable}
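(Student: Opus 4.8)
The plan is to transcribe the proof of Theorem~\ref{thm:ETC} (the OCIM-specific ETC bound) almost verbatim, with the single structural change that exploration now covers the $m$ base arms directly rather than the $n$ graph nodes. First I would record what Condition~\ref{cond:ETC} provides: for any $k$-subset $S'$ of arms there is a feasible action $S\in\bm{\mathcal{S}}^{(t)}$ with $p_i^{S}(\bm{\mu})=1$ for all $i\in S'$, so each exploration round deterministically triggers, and hence reveals, $k$ previously under-sampled arms. Consequently, obtaining $N$ independent samples of every arm takes exactly $\lceil mN/k\rceil$ rounds, and each $\hat{\mu}_i$ is the average of $N$ i.i.d.\ samples in $[0,1]$, which is precisely the input required by Hoeffding's inequality (Lemma~\ref{lem:Hoeffding}).

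Next I would decompose the regret as the exploration contribution, bounded crudely by $\lceil mN/k\rceil\cdot\Delta^{(T)}_{\max}$, plus the exploitation contribution over the remaining rounds. For the exploitation term I would invoke the $(\alpha,\beta)$-oracle guarantee to pass to the non-failure event $\neg\mathcal{F}=\{r_{S^{(t)}}(\hat{\bm{\mu}})\ge\alpha\cdot\text{opt}^{(t)}(\hat{\bm{\mu}})\}$, exactly as in the OCIM-ETC proof; the oracle's success probability $\beta$ then cancels the $\beta$ appearing in the benchmark $\alpha\beta\cdot\text{opt}^{(t)}(\bm{\mu})$, leaving only a gap measured against the true $\bm{\mu}$.

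The analytic core is the per-round gap bound in the exploitation phase. I would relax Condition~\ref{cond:generalTPM} to
\[
|r_{S}(\bm{\mu})-r_{S}(\bm{\mu}')|\le C\sum_{i\in[m]}p_i^{S}(\bm{\mu})|\mu_i-\mu_i'|\le Cm\max_{i\in[m]}|\mu_i-\mu_i'|,
\]
and then chain two applications of this bound together with the optimality of $S^{(t)}$ under $\hat{\bm{\mu}}$ and of the true optimal action under $\bm{\mu}$, to obtain $\Delta^{(t)}_{S^{(t)}}\le(1+\alpha)\,Cm\max_{i}|\mu_i-\hat{\mu}_i|$. From here the two bounds diverge. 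For the distribution-dependent bound I set $\delta_0:=\Delta^{(T)}_{\min}/(2Cm)$; whenever $\max_i|\mu_i-\hat{\mu}_i|<\delta_0$ the exploitation gap collapses to $0$, since any strictly positive gap is at least $\Delta^{(T)}_{\min}$. A Hoeffding-plus-union bound on the complementary event gives an exploitation regret of at most $T\cdot 2m\exp(-2N\delta_0^2)\cdot\Delta^{(T)}_{\max}$, and substituting the prescribed $N$ yields the stated bound. For the distribution-independent bound I condition on $\mathcal{N}=\{|\hat{\mu}_i-\mu_i|\le\sqrt{2\ln T/N}\;\forall i\}$, which fails with probability $O(T^{-3})$ by Hoeffding and a union bound, bound the per-round gap by $2Cm\sqrt{2\ln T/N}$, sum over the at most $T$ exploitation rounds, balance $\lceil mN/k\rceil$ against $T\cdot Cm\sqrt{\ln T/N}$ by taking $N=(Ck)^{2/3}m^{-2/3}T^{2/3}(\ln T)^{1/3}$, and absorb the negligible $\neg\mathcal{N}$ term.

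I expect the only genuinely delicate point to be the bookkeeping that Condition~\ref{cond:ETC} really makes exploration feasible round by round and delivers exactly $N$ clean i.i.d.\ samples per arm through the directly triggered set $\tau_{\text{direct}}$; everything downstream is a faithful transcription of the OCIM-ETC argument with the node count $n$ replaced by the arm count $m$ in the exploration length, which is exactly what changes the rate from $(mn)^{2/3}$ to $m^{4/3}$ in the distribution-independent bound and from $m^2 n$ to $m^3$ in the distribution-dependent one.
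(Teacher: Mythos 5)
Your proposal is correct and takes essentially the same approach as the paper: the paper's own ``proof'' of Theorem~\ref{thm:C2-ETC} is a one-line remark that it follows the same steps as the OCIM-ETC proof in Appendix~\ref{appendix:ETC}, and your transcription---with Condition~\ref{cond:ETC} guaranteeing $k$ deterministically triggered, fully observed arms per exploration round (hence $\lceil mN/k\rceil$ exploration rounds) and the node count $n$ replaced by the arm count $m$ throughout---is precisely that adaptation. The per-round gap bound $\Delta^{(t)}_{S^{(t)}}\leq (1+\alpha)Cm\max_i|\mu_i-\hat{\mu}_i|$, the threshold $\delta_0=\Delta^{(T)}_{\min}/(2Cm)$, and the Hoeffding-plus-union-bound handling of both the distribution-dependent and distribution-independent regimes coincide with the paper's argument step by step.
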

The proofs of Theorem~\ref{thm:C2-TS}, \ref{thm:C2-OFU} and~\ref{thm:C2-ETC}  generally follow the same steps in Appendix~\ref{appendix:TS}, \ref{appendix:TS:OFU} and \ref{appendix:ETC}.

	

\end{document}